\title{Partial Identifiability in Inverse Reinforcement Learning\\ For Agents With Non-Exponential Discounting}
\author {
    Joar Skalse, 
    Alessandro Abate 
}
\theoremstyle{plain}
\newtheorem{theorem}{Theorem}
\newtheorem{proposition}{Proposition}
\newtheorem{lemma}{Lemma}
\theoremstyle{definition}
\newtheorem{definition}{Definition}
\theoremstyle{remark}
\newtheorem{example}{Example}
\begin{document}

\newcommand{\J}{\mathcal{J}}

\newcommand{\Rspace}{{\hat{\mathcal{R}}}}

\newcommand{\Environment}{M}
\newcommand{\States}{\mathcal{S}}
\newcommand{\Actions}{\mathcal{A}}
\newcommand{\reward}{R}
\newcommand{\TransitionDistribution}{\tau}
\newcommand{\InitStateDistribution}{\mu_0} 
\newcommand{\discount}{\gamma}
\newcommand{\TransitionSet}{\mathcal{T}}
\newcommand{\InitStateSet}{\mathcal{I}}

\newcommand{\norm}[1]{\lVert#1\rVert}

\newcommand{\MDP}{\langle\States, \Actions, \{s_\top\}, \TransitionDistribution, \InitStateDistribution, 
\reward, \discount\rangle}
\newcommand{\MDPd}{\langle\States, \Actions, \{s_\top\}, \TransitionDistribution, \InitStateDistribution, 
\reward, d \rangle}
\newcommand{\ENV}{\langle \States, \Actions, \TransitionDistribution, \InitStateDistribution \rangle}
\newcommand{\ENVd}{\langle \States, \Actions, \TransitionDistribution, \InitStateDistribution, d \rangle}

\newcommand{\SxS}{{\States{\times}\States}}
\newcommand{\SxA}{{\States{\times}\Actions}}
\newcommand{\SxAxS}{{\States{\times}\Actions{\times}\States}}

\newcommand{\M}{\mathcal{M}}
\newcommand{\Q}{Q}
\newcommand{\V}{V}
\newcommand{\A}{A}
\newcommand{\Return}{G}
\newcommand{\Evaluation}{\mathcal{J}}
\newcommand{\EvaluationMCE}{\Evaluation^{\mathrm{H}}_\beta}
\newcommand{\Qfor}[1]{\Q^{#1}}
\newcommand{\Vfor}[1]{\V^{#1}}
\newcommand{\Afor}[1]{\A^{#1}}
\newcommand{\QStar}{\Q^\star}
\newcommand{\VStar}{\V^\star}
\newcommand{\AStar}{\A^\star}
\newcommand{\QSoft}{\Q^{\mathrm{H}}_\alpha}
\newcommand{\QSoftN}[1]{\Q^{\mathrm{H}}_{\beta,#1}}

\newcommand{\policy}{\pi}
\newcommand{\OptimalPolicy}{\policy_\star}
\newcommand{\basePolicy}{\policy_0}
\newcommand{\greedyPolicyWrt}[1]{\policy^{#1}_{\star}}
\newcommand{\epsGreedyPolicyWrt}[1]{\policy^{#1}_{\epsilon}}
\newcommand{\BoltzmannPolicyWrt}[1]{\policy^{#1}_{\beta}}
\newcommand{\BoltzmannRationalPolicy}{\policy^\star_\beta}
\newcommand{\MaximumEntropyPolicy}{\policy_\beta}
\newcommand{\MCEPolicy}{\policy^{\mathrm{H}}_\beta}

\newcommand{\Expect}[2]{\mathbb{E}_{#1}\left[{#2}\right]}

\newcommand{\red}[1]{\textcolor{red}{#1}}
\newcommand{\green}[1]{\textcolor{teal}{#1}}
\newcommand{\orange}[1]{\textcolor{orange}{#1}}

\newcommand{\PS}{\mathrm{PS}_\gamma}
\newcommand{\SR}{S'\mathrm{R}_\TransitionDistribution}
\newcommand{\LS}{\mathrm{LS}}
\newcommand{\OPT}{\mathrm{OP}_{\TransitionDistribution,\gamma}}
\newcommand{\Mx}{\mathrm{M}_\mathcal{X}}
\newcommand{\CS}{\mathrm{CS}}

\newcommand{\kPS}[1]{\mathrm{PS}_{\discount,\InitStateDistribution}^{#1}}

\maketitle

\begin{abstract}
The aim of inverse reinforcement learning (IRL) is to infer an agent's \emph{preferences} from observing their \emph{behaviour}. Usually, preferences are modelled as a reward function, $R$, and behaviour is modelled as a policy, $\pi$. One of the central difficulties in IRL is that multiple preferences may lead to the same observed behaviour. That is, $R$ is typically underdetermined by $\pi$, which means that $R$ is only \emph{partially identifiable}. Recent work has characterised the extent of this partial identifiability for different types of agents, including \emph{optimal} and \emph{Boltzmann-rational} agents. However, work so far has only considered agents that discount future reward \emph{exponentially}: this is a serious limitation, especially given that extensive work in the behavioural sciences suggests that humans are better modelled as discounting \emph{hyperbolically}. In this work, we newly characterise partial identifiability in IRL for agents with non-exponential discounting:  our results are in particular relevant for hyperbolical discounting, but they also more generally apply to agents that use other types of (non-exponential) discounting. We significantly show that generally IRL is unable to infer enough information about $R$ to identify the correct optimal policy, which entails that IRL alone can be insufficient to adequately characterise the preferences of such agents. 
\end{abstract}

\section{Introduction}

Inverse reinforcement learning (IRL) is a subfield of machine learning that aims to develop techniques for inferring an agent's \emph{preferences}  based on their \emph{actions}.
Preferences are typically modelled as a reward function, $R$, and behaviour is typically modelled as a policy, $\pi$.
An IRL algorithm must additionally employ a \emph{behavioural model} that describes how $\pi$ is computed from $R$: by inverting this model, an IRL algorithm can then deduce $R$ from $\pi$.

There are many motivations and applications  underpinning IRL. For example, it can be used in \emph{imitation learning} \citep[e.g.][]{imitation2017} or as a tool for \emph{preference elicitation} \citep[e.g.][]{CIRL}. In the former case it is not fundamentally important that the learnt reward function corresponds to the actual preferences of the observed agent, as long as it aids the imitation learning process. However, in the latter, it is instead fundamental that the learnt reward function captures the preferences of the observed agent as closely as possible. 
In this paper, we are primarily concerned with IRL in the context of preference elicitation,  namely in settings where IRL is used to learn a representation of the preferences of an actual human subject, based on information about how that human behaves in some environment, and where we wish for the learnt reward function to capture these preferences as faithfully as possible.



One of the central challenges in IRL is that a given sequence of actions typically can be explained by many different goals. 
That is, there may be multiple reward functions that would produce the same policy under a given behavioural model.
This means that the goals of an agent are ambiguous, or \emph{partially identifiable}, even in the limit of infinite data.
To clearly understand the impact of this partial identifiability, it is important that this ambiguity can be quantified and characterised.
The ambiguity of the reward function in turn depends on the behavioural model. 
For some behavioural models, the partial identifiability has been studied 
\citep{ng2000, dvijotham2010, cao2021, kim2021, skalse2022, schlaginhaufen2023identifiability, towardstheoreticalunderstandingofIRL}. 
However, this existing work has focused on a small number of behavioural models that are prevalent in the current IRL literature, whereas for other plausible or more general behavioural models, the issue of partial identifiability has largely not been studied.

One of the most important parts of a behavioural model is the choice of the \emph{discount function}.
In a sequential decision problem, different actions may lead the agent to receive more or less reward at different points in time.
In these cases, it is common to let the agent discount future reward, so that reward which will be received sooner is given greater weight than reward which will be received later.
Discounting can be done in many ways, but the
two most prominent forms of discounting are \emph{exponential discounting}, according to which reward received at time $t$ is given weight $\gamma^t$; and \emph{hyperbolic discounting}, according to which reward received at time $t$ is given weight $1/(1 + kt)$. Here $\gamma \in (0,1]$ and $k \in (0, \infty)$ are two parameters.\footnote{For a more in-depth overview of discounting, see for example \citet{discounting_review}.}
At the moment, most work on IRL assumes that the observed agent discounts exponentially.
However, extensive work in the behavioural sciences suggests that humans (and other animals) are better modelled as using hyperbolic discounting \citep[e.g., ][]{inconsistencyempirical,mazur1987adjusting, green1996exponential, againstnormativediscounting, discounting_review}. It is therefore a significant limitation that current IRL work exclusively employs behavioural models with exponential discounting. 


In this paper, we provide the first study of partial identifiability in IRL with non-exponential discounting. Specifically, we first introduce three  new behavioural models for agents with general discounting. 
We then study the partial identifiability of the reward function under these models, and provide both an exact characterisation and a comparison between models. Notably, we show that IRL algorithms are unable to infer enough information about $R$ to identify the correct optimal policy based on observations of an agent that discounts non-exponentially, which importantly suggests that IRL alone is insufficient to adequately characterise the preferences of such agents. All of our results apply to agents that use any general form of discounting, including the important hyperbolic discount rule. Our results thereby substantially extend the existing literature on partial identifiability in IRL, and have the potential to make it relevant to human decision making, since in particular hyperbolic discounting is thought to better fit human behaviour than exponential discounting. 

Our analysis is mathematical, rather than empirical, to ensure that our results are exact and general. Moreover, we focus on behavioural models, rather than specific IRL algorithms, because we want to characterise what information is contained in certain types of data, and thereby discover limitations that apply to all IRL algorithms in this problem setting. This makes our results broadly applicable.



\subsection{Related Work}

The issue of partial identifiability in IRL has been studied for many behavioural models. In particular, \citet{ng2000} study optimal policies with state-dependent reward functions, \citet{dvijotham2010} study regularised MDPs with a particular type of dynamics, \citet{cao2021} study how the reward ambiguity can be reduced by combining information from multiple environments, \citet{skalse2022} study three different behavioural models and introduce a framework for reasoning about partial identifiability in reward learning, 
\citet{schlaginhaufen2023identifiability} study ambiguity in constrained MDPs, and \citet{towardstheoreticalunderstandingofIRL} quantify sample complexities for optimal policies. However, all these papers assume exponential discounting.

Most IRL algorithms are designed for agents that discount exponentially, but some papers have considered hyperbolic discounting \citep{IgnorantAndInconsistent, IrrationalityCanHelp, schultheis2022reinforcement}. However, these papers do not formally characterise the identifiability of $R$ given their algorithms.

\section{Preliminaries}

In this section, we give a brief overview of all material that is required to understand this paper, together with our basic assumptions, and our choice of terminology.

\subsection{Reinforcement Learning}


In this paper, we take a \emph{Markov decision processes} (MDP) to be a tuple
$\MDP$
where
  $\States$ is a set of \emph{states},
  $\Actions$ is a set of \emph{actions},
  $\{s_\top\}$ is a \emph{terminal state},
  $\TransitionDistribution : \SxA \to \Delta(\States \cup \{s_\top\})$ is a \emph{transition function},
  $\InitStateDistribution \in \Delta(\States)$ is an \emph{initial state
  distribution}, 
  $\reward : \States \times \Actions \times (\States \cup \{s_\top\}) \to \mathbb{R}$ is a \emph{reward
    function}, 
    and $\discount \in (0, 1]$ is a \emph{discount rate}.
We will also assume that $\States$ and $\Actions$ are finite.
A \textit{policy} is a function $\policy : (\SxA)^\star \times \States \to \Delta(\Actions)$.
If a policy $\pi$ can be expressed as a simpler function $\States \to \Delta(\Actions)$, then we say that it is \emph{stationary}. 
We use $\mathcal{R}$ to denote the set of all reward functions definable over $\States$ and $\Actions$, i.e.\  $\mathbb{R}^{\SxAxS}$, and $\Pi$ to denote the set of all (stationary and non-stationary) policies that can be defined over $\States$ and $\Actions$, i.e.\ $\Delta(A)^{(\SxA)^\star\times \States}$.

A \emph{trajectory} $\xi = \langle s_0, a_0, s_1 \dots \rangle$ is a (finite or infinite) sequence of states and actions that form a path in an MDP.
If $s_\top \in \xi$, then we assume that $\xi$ is finite, and that $s_\top$ is the last state in $\xi$.\footnote{More precisely, this means that a trajectory is an element of $(\States \times \Actions)^\star \times (\States \cup \{s_\top\}) \cup (\States \times \Actions)^\omega$.}
The \emph{return function} $\Return$ gives the cumulative discounted reward of a trajectory, 
$\Return(\xi) = \sum_{t=0}^{|\xi|} \discount^t \reward(s_t, a_t, s_{t+1})$.
The \emph{value function} $\Vfor{\policy} : \States \rightarrow \mathbb{R}$ of a (stationary) policy $\pi$ encodes the expected cumulative discounted reward from each state under policy $\pi$, and its related $Q$-function 
is $\Qfor{\policy}(s,a) = \Expect{S' \sim \tau(s,a)}{R(s,a,S') + \discount \Vfor\policy(S')}$.
If a policy $\pi$ satisfies that $V^\pi(s) \geq V^{\pi'}(s)$ for all states $s$ and all policies $\pi'$, then we say that $\pi$ is an \emph{optimal policy}.
$\QStar$ denotes the $Q$-function of optimal policies. This function is unique, even when there are multiple optimal policies.

We say that an MDP is \emph{episodic} if there is some $H \in \mathbb{N}$ such that any policy with probability $1$ will enter the terminal state $s_\top$ after at most $H$ steps, starting from any state. Note that $H \leq |\States|$ in any episodic MDP. We say that an MDP is \emph{non-episodic} if $s_\top$ is unreachable from any $s \in \States$. Note that an MDP may be neither episodic or non-episodic.
Since the transition function $\TransitionDistribution$ alone determines whether or not an MDP is episodic, non-episodic, or otherwise, we will also refer to episodic and non-episodic transition functions. In episodic MDPs, we refer to a trajectory that starts in some state $s_0 \in \mathrm{supp}(\InitStateDistribution)$ and ends in $s_\top$ as an \emph{episode}.


When constructing examples of MDPs, it will sometimes be convenient to let the set of actions $\Actions$ vary between different states. In these cases, we may assume that each state has a \enquote{default action} that is chosen from the actions available in that state, and that all actions that are unavailable in that state simply are equivalent to the default action. 



\subsection{Inverse Reinforcement Learning}


In IRL we wish to infer a reward function $R$ based on a policy $\pi$ that has been computed from $R$. To do this, we need a \emph{behavioural model} that describes how $\pi$ relates to $R$. One of the most common models is known as \emph{Boltzmann Rationality} \citep[e.g.\ ][]{ramachandran2007}, and is given by
$\mathbb{P}(\pi(s) = a) \propto \exp{\beta Q^\star(s,a)}$,
where $\beta$ is a temperature parameter, and $Q^\star$ is the optimal $Q$-function for exponential discounting of $R$ with fixed discount parameter $\gamma$. 
In other words, a Boltzmann-rational policy is given by applying a \emph{softmax function} to $Q^\star$.
%
An IRL algorithm infers $R$ from $\pi$ by inverting a behavioural model. There are many algorithms for doing this \citep[e.g.][and many others]{ng2000, ramachandran2007, haarnoja2017}, but for the purposes of this paper, it will not be important to be familiar with the details of  these algorithms. 


\subsection{Partial Identifiability}

%

Following \citet{skalse2022}, we will characterise partial identifiability in terms of transformations and equivalence relations on $\mathcal{R}$. Let us first introduce a number of definitions: 

\begin{definition}\label{def:behavioural_model}
A \emph{behavioural model} is a function $\mathcal{R} \to \Pi$.
\end{definition}

For example, we can consider the function $b_{\beta,\tau,\gamma}$ that, given a reward $R$, returns the Boltzmann-rational policy with temperature $\beta$ in the MDP $\MDP$. Note that we consider the environment dynamics (i.e.\ the transition function, $\tau$) to be part of the behavioural model. This makes it easier to reason about if and to what extent the identifiability of $R$ depends on $\tau$.

\begin{definition}\label{def:ambiguity}
A \emph{reward transformation} is a function $t : \mathcal{R} \to \mathcal{R}$.
Given a behavioural model $f : \mathcal{R} \to \Pi$ and a set $T$ of reward transformations, we say that $f$ \emph{determines $R$ up to $T$} if $f(R_1) = f(R_2)$ if and only if $R_2 = t(R_1)$ for some $t \in T$.
\end{definition}

Definition~\ref{def:ambiguity} states that the partial identifiability of the reward under a particular behavioural model can be fully characterised in terms of reward transformations.
To see this, let us first build an abstract model of an IRL algorithm.
Let $R^\star$ be the true reward function.
We model the data source as a function $f : \mathcal{R} \to \Pi$,
so that the learning algorithm observes the policy $f(R^\star)$.
A reasonable learning algorithm should learn (or converge to) a reward function $R_H$ that is compatible with the observed policy, i.e.\ a reward such that $f(R_H) = f(R^\star)$.
This means that if $f$ determines $R$ up to $T$, then an IRL algorithm based on $f$ is unable to distinguish between two reward functions $R_1$, $R_2$ exactly when $R_1$ and $R_2$ are related by some transformation in $T$. Hence, the partial identifiability of $R$ under $f$ can be characterised \cite{skalse2022}.

\section{The Non-Exponential Setting}

In order to study partial identifiability in IRL with non-exponential discounting, we must first develop behavioural models for this setting. Since the Boltzmann-rational model is the most prominent behavioural model in the standard (exponentially discounted) setting, we will generalise the Boltzmann-rational behavioural model to work for general discount functions.
However, before we can do this, we must first generalise the basic RL setting.
We will allow a \emph{discount function} to be any function $d : \mathbb{N} \to [0,1]$ such that $d(0) = 1$. Some noteworthy examples of discount functions include \emph{exponential discounting}, where $d(t) = \gamma^t$, \emph{hyperbolic discounting}, where $d(t) = 1/(1+k \cdot t)$, and \emph{bounded planning}, where $d(t) = 1 \text{ if } t \leq n, \text{ else } 0$.
Here $\gamma$, $k$, and $n$ are parameters. 
In this paper, we are especially interested in hyperbolic discounting, since it is argued to be a good match to human behaviour. However, most of our results apply to arbitrary discount functions.\footnote{Note that average-reward reinforcement learning \citep{Mahadevan1996} is not covered by this setting. Averaging the rewards is not a form of discounting, but is instead an alternative to discounting. However, also note that if all possible episodes have the same length, then the average-reward objective is equivalent to using a constant discount function.}



Many of the basic definitions in RL can straightforwardly be extended to general discount functions. We consider an MDP to be a tuple $\MDPd$, where $d$ may be any discount function. As usual, we define the trajectory return function as $G(\xi) = \sum_{t=0}^{|\xi|} d(t) \cdot R(\xi_t)$. We say that $V^\pi(\xi)$ is the expected future discounted reward if you start at the (finite) trajectory $\xi$ and sample actions from $\pi$, and that $Q^\pi(\xi,a)$ is the expected future discounted reward if you start at trajectory $\xi$, take action $a$, and then sample all subsequent actions from $\pi$.\footnote{As we will soon see, we will have to consider non-stationary policies in the setting with non-exponential discounting. This is why the $Q$-function and value function must be parameterised by the entire past trajectory, instead of just the current state.} 
As usual, if $\pi$ is stationary, then we let $V^\pi$ and $Q^\pi$ be parameterised by the current state, instead of the past trajectory.

It will be convenient to also use value- and $Q$-functions that start discounting from a different time than zero --- we will indicate this with a superscript. Specifically, we let $V^{\pi,n}(\xi) = \mathbb{E}\left[\sum_{t=0}^\infty d(t+n) \cdot R(\zeta_t)\right]$, where the expectation is over a trajectory $\zeta$ given by starting with the (finite) trajectory $\xi$, and then sampling all subsequent actions from $\pi$. We also define $Q^{\pi,n}$ analogously. Note that $V^\pi = V^{\pi,0}$ and $Q^\pi = Q^{\pi,0}$. Intuitively speaking, $V^{\pi,n}(\xi)$ is the expected future discounted reward if you start at the (finite) trajectory $\xi$ and then sample all subsequent actions from $\pi$, but discount as though you are starting at time $n$. Note that with exponential discounting, we have that $V^\pi \propto V^{\pi,n}$ for all $n$, but not with non-exponential discounting.




For exponential discounting where $\gamma < 1$, we have that $\sum_{t=0}^\infty \gamma^t < \infty$. 
This ensures that $V^\pi$ always is strictly finite for any choice of $R$ and $\tau$. However, if $\sum_{t=0}^\infty d(t)$ diverges, then $V^\pi$ will also diverge for some $R$ and $\tau$, which of course is problematic   for policy selection. 
Therefore, it could be reasonable to impose the requirement that $\sum_{t=0}^\infty d(t) < \infty$ as a condition on $d$.
Unfortunately, this would rule out the hyperbolic discount function. Since this discount function is of particular interest to us, 
we will instead impose conditions on the environment.
In particular, if the MDP is \emph{episodic} then $V^\pi$ is always finite, regardless of which discount function is chosen. For this reason, most of our results will assume that the environment is episodic.

An important property of general discount functions is that they can lead to preferences that are \emph{inconsistent over time}. To understand this, consider the following example: 
\begin{example}\label{example:gym_mdp}
Let \texttt{Gym} be the MDP where $\States = \{s_0, s_1, s_2\}$, $\Actions = \{\text{buy},\text{exercise},\text{enjoy},\text{go home}\}$, $\mu_0 = s_0$, and the transition function $\tau$ is the deterministic function given by the following labelled graph:
\begin{center}
\begin{tikzpicture}[shorten >=1pt,node distance=2.6cm,on grid,auto]
   \node[state, initial] (s_0)   {$s_0$}; 
   \node[state]         (s_1) [above right=of s_0, xshift=-1cm] {$s_1$};
   \node[state]         (s_2) [right=of s_1] {$s_2$};
   \node[state, accepting]         (s_3) [below right=of s_2] {$s_\top$};
    \path[->] 
    (s_0) edge [sloped] node {buy} (s_1)
          edge [swap] node {go home} (s_3)
    (s_1) edge [] node {exercise} (s_2)
          edge [sloped, swap] node {go home} (s_3)
    (s_2) edge [sloped] node {enjoy} (s_3)
    ;
\end{tikzpicture}
\end{center}
The discount function $d$ is the hyperbolic discount function, $d(t) = 1/(1+t)$, and $R$ is the reward function given by $R(buy) = -1$, $R(exercise) = -16$, $R(enjoy) = 30$, and $R(go\text{ }home) = 0$. \qed 
\end{example}


%
This is a deterministic, episodic environment with three states $\{s_0, s_1, s_2\}$, where $s_0$ is initial. 
In state $s_0$, the agent can choose between either buying a gym membership, or going home. If it buys the gym membership, then it gets to choose between exercising at the gym, or going home. If it exercises, then it gets to enjoy the benefits of exercise, after which the episode ends. Similarly, if the agent ever goes home, the episode also ends. 

%
%

We can calculate the value of each trajectory from the initial state $s_0$; $G(\text{go home}) = 0$, $G(\text{buy, go home}) = -1$, and $G(\text{buy, exercise, enjoy}) = 1$. This means that the most valuable trajectory from $s_0$ involves buying a gym membership, and then exercising. However, if we calculate the value of each trajectory from state $s_1$, we (paradoxically) find that $G(\text{go home}) = 0$ and $G(\text{exercise, enjoy}) = -1$. This means that the agent at state $s_0$ would prefer to buy a gym membership, and then exercising. However, after having bought the gym membership, the agent now prefers to go home instead of exercising.
In other words, the agent has preferences that are inconsistent over time. We can formalise this as follows. 

\begin{definition}\label{def:temporal_consistency}
A discount function $d$ is \emph{temporally consistent} if for all sequences $\{x_t\}_{t=0}^\infty, \{y_t\}_{t=0}^\infty$, 
$\sum_{t=0}^\infty d(t) \cdot x_t < \sum_{t=0}^\infty d(t) \cdot y_t$
implies that
$\sum_{t=0}^\infty d(t+n) \cdot x_t < \sum_{t=0}^\infty d(t+n) \cdot y_t$
for all $n$.
\end{definition}


Intuitively, if a discount function $d$ is temporally consistent, and at some time $n$ it prefers a sequence of rewards $\{x_t\}_{t=0}^\infty$ over another sequence $\{y_t\}_{t=0}^\infty$, then this is also true at every other time $n$. On the other hand, if $d$ is \emph{not} temporally consistent, then it may change its preference as time passes, as in the MDP from Example~\ref{example:gym_mdp}. 
It is easy to show that exponential discounting \emph{is} temporally consistent, and Example~\ref{example:gym_mdp} demonstrates that hyperbolic discounting is \emph{not} temporally consistent. 
What about other discount functions?
As it turns out, exponential discounting is the \emph{only} form of discounting that is temporally consistent.
This means that \emph{all other discount functions} can lead to preferences that are not consistent over time. 

\begin{proposition}\label{prop:temporal_consistency_exponential}
$d$ is temporally consistent if and only if $d(t) \propto \gamma^t$ for some $\gamma \in [0,1]$. 
\end{proposition}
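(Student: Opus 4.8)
The plan is to prove the two directions separately; the ``if'' direction is a one-line scaling argument, and the ``only if'' direction carries the content.

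\textbf{The easy direction.} If $d(t) \propto \gamma^t$, then since $d(0) = 1$ the constant of proportionality must be $1$, so $d(t) = \gamma^t$. For any sequences $\{x_t\}, \{y_t\}$ and any $n$ we then have $\sum_{t} d(t+n) x_t = \gamma^n \sum_{t} d(t) x_t$, and multiplication by the positive constant $\gamma^n$ preserves strict inequalities, so $d$ is temporally consistent. (Strictly speaking $\gamma = 0$ is degenerate --- the discount function $1, 0, 0, \ldots$ is \emph{not} temporally consistent --- so the ``only if'' direction in fact forces $\gamma \in (0,1]$; I would either note this or simply treat $\gamma = 0$ separately.)

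\textbf{The hard direction, setup.} Assume $d$ is temporally consistent. It suffices to invoke the definition only with $n = 1$, and only on \emph{finitely supported} (eventually-zero) real sequences, for which every sum below is a finite sum and no convergence question arises. Writing $z_t = y_t - x_t$, temporal consistency with $n=1$ becomes: for every finitely supported $z$, $\sum_{t} d(t) z_t > 0$ implies $\sum_{t} d(t+1) z_t > 0$; replacing $z$ by $-z$ (i.e.\ swapping $x$ and $y$) also gives $\sum_{t} d(t) z_t < 0 \Rightarrow \sum_{t} d(t+1) z_t < 0$.

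\textbf{The crux.} Fix $n \ge 1$ and, for $\epsilon \in \mathbb{R}$, consider the sequence $z$ that equals $-d(n) + \epsilon$ at index $0$, equals $1$ at index $n$, and is $0$ elsewhere. A direct computation gives $\sum_{t} d(t) z_t = \epsilon$ and $\sum_{t} d(t+1) z_t = \big(d(n+1) - d(1)\,d(n)\big) + d(1)\epsilon$. Taking $\epsilon > 0$ and then $\epsilon < 0$ in the two implications above, and letting $\epsilon \to 0$ from each side, forces $d(n+1) - d(1)\,d(n) = 0$. This identity also holds trivially at $n = 0$ because $d(0) = 1$, so by induction $d(t) = d(1)^t$ for all $t$, and we set $\gamma := d(1) \in [0,1]$. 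The one mildly delicate point is this last limiting argument --- turning the two strict one-sided implications into the boundary equality; everything else is bookkeeping.
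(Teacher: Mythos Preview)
Your argument is correct. The paper itself does not supply a self-contained proof of this proposition; it simply defers to an external reference (Theorem~13 of \citet{temporal_inconsistency_2}), noting that the proof there carries over with essentially no changes. So there is no ``paper's own proof'' to compare against in any meaningful sense.

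That said, your approach is a clean and fully elementary one, and it is worth recording what it buys. By testing temporal consistency only on two-point sequences $z$ supported at $\{0,n\}$ with a perturbation parameter $\epsilon$, you reduce the functional condition to the single recurrence $d(n+1) = d(1)\,d(n)$, from which $d(t) = d(1)^t$ follows by induction. The only genuinely analytic step is the squeeze $\epsilon \to 0^\pm$ turning the two strict one-sided implications into the equality $d(n+1) - d(1)d(n) = 0$; you flag this yourself, and it is unproblematic (indeed one could avoid limits altogether by arguing directly that a strictly positive or negative value of $d(n+1) - d(1)d(n)$ would let you pick a small $\epsilon$ of the opposite sign violating the consistency implication).

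Your side observation about $\gamma = 0$ is also well taken: the discount sequence $1,0,0,\ldots$ fails the strict-inequality form of temporal consistency (take $x \equiv 0$, $y = (1,0,0,\ldots)$, $n=1$), so the ``if'' direction as literally stated in the proposition is slightly loose at this boundary. Your parenthetical handles it appropriately; the paper's statement is informal on this point.
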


For a proof of Proposition~\ref{prop:temporal_consistency_exponential}, see \citet{temporal_inconsistency_1} or \citet{temporal_inconsistency_2}. 
Note that this temporal inconsistency is an important reason for why hyperbolic discounting is considered to be a good fit for human data --- under experimental conditions, humans can exhibit \emph{preference reversals} in a way that is consistent with hyperbolic discounting \citep[see e.g.\ ][]{discounting_review}.
However, temporal inconsistency also implies that there no longer is an unamibiguous notion of what it means for a policy to be \enquote{better} than another policy in this setting.
For instance, in Example~\ref{example:gym_mdp}, should the \enquote{best} policy choose to exercise at $s_1$, or should it choose to go home? 
There are multiple ways to answer this question, which in turn means that there are multiple ways to formalise what it means for an agent to \enquote{use} hyperbolic discounting (or other non-exponential discount functions).
As such, modelling agents that discount non-exponentially (such as humans) involves some subtle modelling choices that are not present for exponentially discounting agents.
In the next section, we explore several ways of dealing with this issue. 

\subsection{New Behavioural Models for General Discounting}

We wish to construct behavioural models that are analogous to Boltzmann-rationality for non-exponential discounting. Recall that in the exponentially discounted setting, the Bolzmann-rational policy is given by applying a softmax function to the optimal $Q$-function. 
We must therefore first decide what it means for a policy to be \enquote{optimal} in this setting.
Because of temporal inconsistency, the ordinary notion of optimality does not automatically apply, and 
there are multiple ways to extend the concept.
Accordingly, we introduce three new definitions:



\begin{definition}\label{def:resolute_policy}
    A policy $\pi$ is 
    is \emph{resolute} if there is no $\pi'$ or $\xi$ such that $V^{\pi,|\xi|}(\xi) < V^{\pi',|\xi|}(\xi)$.
\end{definition}

A resolute policy maximises expected reward as calculated from the initial state. In other words, it effectively ignores the fact that its preferences might be changing over time, and instead always sticks to the preferences that it had at the start. In Example~\ref{example:gym_mdp}, a resolute policy would buy a gym membership, and then exercise. 

\begin{definition}\label{def:naive_policy}
A policy $\pi$ is \emph{na\"ive} if for each trajectory $\xi$, if $a \in \mathrm{supp}(\pi(\xi))$, then there is a policy $\pi^\star$ such that $\pi^\star$ maximises $V^{\pi^\star,0}(\xi)$ and $a \in \mathrm{supp}(\pi^\star(\xi))$.
\end{definition}

A na\"ive policy ignores the fact that its preferences may not be temporally consistent. Rather, in each state, it computes a policy that is resolute from that state, and then takes an action that this policy would have taken, without taking into account that it may not actually follow this policy later. In Example~\ref{example:gym_mdp}, a na\"ive policy would buy a gym membership, but then go home without exercising.

\begin{definition}\label{def:sophisticated_policy}
    A policy $\pi$ is \emph{sophisticated} if $\mathrm{supp}(\pi(\xi)) \subseteq \mathrm{argmax} Q^{\pi,0}(\xi,a)$ for all trajectories $\xi$.
\end{definition}

A sophisticated policy is aware that its preferences are temporally inconsistent, and acts accordingly. Specifically, $\pi$ is sophisticated if it only takes actions that are optimal \emph{given that all subsequent actions are sampled from $\pi$}. In Example~\ref{example:gym_mdp}, a sophisticated policy would choose to not exercise in state $s_1$. Hence, in state $s_0$, it would realise that in $s_1$ it would go home, instead of exercising. Thus, in $s_0$ it prefers to go home over buying a gym membership and then going home, and it chooses to go home without buying a membership.

For consistency, if $d(t) = \gamma^t$ for some $\gamma \in (0,1)$, then Definitions~\ref{def:resolute_policy}-\ref{def:sophisticated_policy} reduce to optimality. Formally:

\begin{theorem}
In an MDP with exponential discounting, the following are equivalent:
(1) $\pi$ is optimal,
(2) $\pi$ is resolute,
(3) $\pi$ is na\"ive, and
(4) $\pi$ is sophisticated.
\end{theorem}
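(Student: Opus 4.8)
**The plan is to prove the four-way equivalence for exponential discounting by establishing a cycle of implications, exploiting the fact that under exponential discounting the time-shifted value functions are positive rescalings of the original ones.**

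The key structural fact I would use throughout is that for $d(t) = \gamma^t$ with $\gamma \in (0,1)$, we have $V^{\pi,n}(\xi) = \gamma^n V^{\pi'}(\xi_{\mathrm{last}})$ for any $\pi$, where $\xi_{\mathrm{last}}$ denotes the final state of $\xi$, because $d(t+n) = \gamma^n \gamma^t = \gamma^n d(t)$. Since $\gamma^n > 0$, multiplying by $\gamma^n$ preserves all strict and weak inequalities between values. Moreover, under exponential discounting the optimal value function depends only on the current state, and an optimal (stationary) policy exists; this is what collapses the trajectory-indexed definitions to state-indexed ones. I would state this rescaling observation as the first lemma-like step, then proceed around the cycle $(1)\Rightarrow(2)\Rightarrow(3)\Rightarrow(4)\Rightarrow(1)$, or possibly prove several of the implications directly against $(1)$ since each definition is phrased slightly differently.

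For $(1)\Leftrightarrow(2)$: a resolute policy requires no $\pi'$ and $\xi$ with $V^{\pi,|\xi|}(\xi) < V^{\pi',|\xi|}(\xi)$; dividing by $\gamma^{|\xi|}$ this says no state $s$ (reachable as $\xi_{\mathrm{last}}$) and policy $\pi'$ with $V^\pi(s) < V^{\pi'}(s)$, which is exactly optimality. One technical point: Definition~\ref{def:resolute_policy} quantifies over trajectories $\xi$, while optimality quantifies over all states; I need that every state appears as the endpoint of some trajectory (or restrict attention to reachable states, noting the value function comparison on unreachable states is vacuous or can be handled by the stationary optimal policy). For $(1)\Leftrightarrow(3)$: a naïve policy takes, at each $\xi$, an action in the support of some policy $\pi^\star$ maximising $V^{\pi^\star,0}(\xi)$; since maximising $V^{\pi^\star,0}(\xi)$ is just maximising $V^{\pi^\star}(\xi_{\mathrm{last}})$, the maximisers are exactly the optimal policies, and "takes an action some optimal policy would take at $\xi_{\mathrm{last}}$" is equivalent to optimality given that $Q^\star$ is well-defined and an optimal stationary policy exists. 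For $(1)\Leftrightarrow(4)$: sophistication says $\mathrm{supp}(\pi(\xi)) \subseteq \mathrm{argmax}_a Q^{\pi,0}(\xi,a)$; under exponential discounting $Q^{\pi,0}(\xi,a) = \mathbb{E}_{S'\sim\tau(\xi_{\mathrm{last}},a)}[R + \gamma V^\pi(S')]$, and the fixed-point/Bellman argument shows a policy satisfying $\mathrm{supp}(\pi(s)) \subseteq \mathrm{argmax}_a Q^\pi(s,a)$ everywhere must have $V^\pi = V^\star$, hence is optimal; conversely an optimal policy is greedy w.r.t.\ its own (optimal) $Q$-function.

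The main obstacle I anticipate is the sophistication direction, specifically showing that the local greedy condition $\mathrm{supp}(\pi(\xi)) \subseteq \mathrm{argmax}_a Q^{\pi,0}(\xi,a)$ forces global optimality. For stationary policies this is the standard policy-improvement / Bellman-optimality argument, but here $\pi$ is a priori non-stationary and the condition is stated over all trajectories $\xi$, so I would need to argue that the self-consistent greedy fixed point still coincides with $V^\star$ — essentially running the contraction argument for the Bellman optimality operator and checking that no non-stationary greedy policy can do strictly worse than $V^\star$ at any state. A secondary subtlety is the boundary case $\gamma = 1$: the theorem statement says exponential discounting, and the "for consistency" remark preceding it restricts to $\gamma \in (0,1)$, so I would assume $\gamma \in (0,1)$ (or note that for $\gamma=1$ in an episodic MDP the same rescaling triviality holds since $\gamma^n = 1$). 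Everything else is routine unwinding of the definitions once the rescaling identity is in hand.
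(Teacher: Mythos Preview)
Your proposal is correct and follows essentially the same route as the paper: both arguments hinge on the rescaling identity $V^{\pi,n}=\gamma^n V^{\pi}$ (equivalently, temporal consistency) to collapse the trajectory-indexed definitions to the standard state-indexed ones, then appeal to the usual characterisations of optimality (greedy with respect to $Q^\star$ for na\"ive, fixed point of policy improvement for sophisticated). If anything, you are more explicit than the paper about the two genuine technicalities---handling non-stationary $\pi$ in the sophistication direction and the reachability quantifier mismatch in the resolute direction---which the paper's proof simply elides.
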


All proofs are provided in the appendix.
Notice that, while Definitions~\ref{def:resolute_policy}-\ref{def:sophisticated_policy} are all equivalent under exponential discounting, they can be quite different if other forms of discounting are used, as already exemplified by  Example~\ref{example:gym_mdp}. 
As such, each of these definitions give us a reasonable way to extend the notion of an \enquote{optimal} policy to the setting with general discount functions. 

We next focus on the issue of \emph{existence}, showing that \emph{each} of these three types of policies are guaranteed to exist in \emph{any episodic} MDP, regardless of what discount function is employed.  Moreover, in each case, we show that this still holds if we restrict our attention to \emph{deterministic} policies. We also show that na\"ive and sophisticated policies both are guaranteed to exist if we restrict our attention to \emph{stationary} policies, but that there are episodic MDPs in which there are no stationary resolute policies.

\begin{theorem}
In any episodic MDP, there exists a deterministic resolute policy.
\end{theorem}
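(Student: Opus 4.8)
The plan is to build a resolute policy by backward induction (dynamic programming), using the fact that episodicity makes the relevant state space finite in time. The key preliminary observation concerns how $V^{\pi,|\xi|}(\xi)$ decomposes. Unwinding the definition of $V^{\pi,n}$ and splitting off the first transition taken after $\xi$ gives, for every $n$,
$V^{\pi,n}(\xi)=\Expect{a\sim\pi(\xi),\,S'\sim\tau(s,a)}{d(n)\,\reward(s,a,S')+V^{\pi,n+1}(\xi\,a\,S')}$
where $s$ is the last state of $\xi$; taking $n=|\xi|$ (so that $n+1=|\xi\,a\,S'|$) this is exactly the Bellman identity for the time-dependent per-step reward $d(t)\,\reward(\cdot)$ with the discount clock read off the \emph{global} time. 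This is what makes resoluteness attainable despite temporal inconsistency: the quantity being maximised from $\xi$ is merely the tail, from time $|\xi|$ onwards, of one fixed global-clock objective, so the requirements at nested trajectories agree rather than conflict. Equivalently, a resolute policy is just an optimal policy of the finite time-augmented MDP whose states are pairs $(s,t)$ with $t\in\{0,\dots,H\}$ and whose reward on $((s,t),a,(s',t+1))$ is $d(t)\,\reward(s,a,s')$, where $H\le|\States|$ is the episode bound; episodicity also guarantees all values involved are finite.

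Concretely, I would define $\phi_t:\States\cup\{s_\top\}\to\mathbb{R}$ by backward induction on $t$ from $H$ down to $0$: put $\phi_t(s_\top)=0$ for every $t$, and for $s\neq s_\top$ and $t<H$ put
$\phi_t(s)=\max_{a\in\Actions}\Expect{S'\sim\tau(s,a)}{d(t)\,\reward(s,a,S')+\phi_{t+1}(S')}$.
(By the episode bound, a trajectory ending in a non-terminal state has length at most $H-1$, so these are the only cases ever needed.) Let $\pi$ then be the deterministic policy that, at a trajectory $\xi$ ending in a non-terminal state $s$, plays some action attaining the maximum defining $\phi_{|\xi|}(s)$, with ties broken arbitrarily; thus $\pi(\xi)$ depends only on the last state and the length of $\xi$. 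This $\pi$ is deterministic by construction, so it remains only to check it is resolute.

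For that I would prove, simultaneously by backward induction on $|\xi|$, the two claims: (i) $V^{\pi,|\xi|}(\xi)=\phi_{|\xi|}(\mathrm{last}(\xi))$, and (ii) $V^{\pi',|\xi|}(\xi)\le\phi_{|\xi|}(\mathrm{last}(\xi))$ for \emph{every} policy $\pi'$. Together these give $V^{\pi',|\xi|}(\xi)\le V^{\pi,|\xi|}(\xi)$ for all $\pi'$ and all $\xi$, which is exactly Definition~\ref{def:resolute_policy}. The base case $|\xi|=H$ --- and, trivially, any $\xi$ ending in $s_\top$ --- holds because episodicity forces $\mathrm{last}(\xi)=s_\top$, so both sides are $0$. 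For the inductive step, apply the one-step decomposition: for $\pi$ it reproduces the maximum defining $\phi_{|\xi|}$, using claim (i) at length $|\xi|+1$ (whence $V^{\pi,|\xi|+1}(\xi\,a\,S')=\phi_{|\xi|+1}(S')$, a function of $S'$ alone); for $\pi'$, each first action $a$ contributes $\Expect{S'\sim\tau(s,a)}{d(|\xi|)\,\reward(s,a,S')+V^{\pi',|\xi|+1}(\xi\,a\,S')}\le\phi_{|\xi|}(s)$ by claim (ii) at length $|\xi|+1$ and the definition of $\phi_{|\xi|}$, and averaging over $a\sim\pi'(\xi)$ preserves this bound. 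Applying claim (ii) to the extended trajectory $\xi\,a\,S'$ is legitimate even when $\pi'$ is non-stationary, history-dependent and randomised, since $V^{\pi',|\xi|+1}(\xi\,a\,S')$ depends on $\pi'$ only through its restriction to extensions of $\xi\,a\,S'$.

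I expect the only genuine difficulties to be bookkeeping ones: pinning down the global-clock reindexing of $V^{\pi,n}$ precisely enough that the one-step decomposition is an exact identity, and being careful that claim (ii) must be established against \emph{all} policies $\pi'$ at once --- which works precisely because $V^{\pi',|\xi|+1}(\xi\,a\,S')$ sees $\pi'$ only through its behaviour on extensions of $\xi\,a\,S'$, so the induction hypothesis applies verbatim. Determinism and finiteness are both handed to us by the episodic hypothesis, so no additional argument is needed for those.
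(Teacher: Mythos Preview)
Your argument is correct, but it takes a different route from the paper. The paper does not use backward induction at all: it first equips the policy space $\Pi$ with a metric under which it is compact, shows that $V^{\pi,t}(s)$ is continuous in $\pi$ (using the episodic termination bound to control tails), and then invokes the extreme value theorem to conclude that $\max_\pi V^{\pi,t}(s)$ is attained for every $(s,t)$. From this the resolute $Q$-function $Q^{\mathrm{R}}$ exists and is unique, and any policy that at each trajectory $\xi$ picks an action in $\mathrm{argmax}_a Q^{\mathrm{R}}(\mathrm{last}(\xi),|\xi|,a)$ is resolute and deterministic. Your finite-horizon dynamic programming is more elementary and fully constructive---it avoids the compactness machinery entirely and makes the time-augmented-MDP interpretation explicit---but it leans essentially on the \emph{bounded} horizon $H$ to anchor the backward induction. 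The paper's topological argument, by contrast, goes through verbatim for the more general \emph{unbounded} episodic setting (probability-one termination with no fixed horizon), which is indeed how the appendix states and proves it; your approach would not extend to that setting without further work. Both proofs yield a policy depending only on $(\mathrm{last}(\xi),|\xi|)$, so the content is the same in the bounded case.
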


\begin{theorem}
In any episodic MDP, there exists a stationary, deterministic, na\"ive policy.
\end{theorem}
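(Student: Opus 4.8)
The plan is to reduce the na\"ive condition to a purely per-state requirement and then close the argument with finite-horizon dynamic programming. For each state $s\in\States$ consider the problem of maximising, over all policies $\pi'$, the quantity $V^{\pi',0}(\langle s\rangle)$, where $\langle s\rangle$ denotes the length-$0$ trajectory consisting of the single state $s$; that is, the expected future discounted return collected from $s$ with the discount clock reset to time $0$. Write $V^\dagger(s)$ for its optimal value, $Q^\dagger(s,a)$ for the optimal value when the first action is forced to be $a$, and $A^\dagger(s) = \{a\in\Actions : Q^\dagger(s,a) = V^\dagger(s)\}$.

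First I would show that a policy $\pi$ is na\"ive if and only if $\mathrm{supp}(\pi(\xi)) \subseteq A^\dagger(s)$ for every finite trajectory $\xi$ with final state $s\in\States$. Fix such a $\xi$. Since $V^{\pi^\star,0}(\xi)$ discounts ``as though starting at time $0$'', it depends on $\pi^\star$ only through the continuation of $\pi^\star$ after $\xi$, and — up to a term that does not involve $\pi^\star$ — it equals the expected value, under that continuation, of $\sum_{j \ge 0} d(j) R(\cdot)$ along the trajectory issuing from $s$. Hence $\sup_{\pi^\star} V^{\pi^\star,0}(\xi) = V^\dagger(s)$, and the set of actions $a$ for which this supremum is attained by some maximiser $\pi^\star$ with $a\in\mathrm{supp}(\pi^\star(\xi))$ is exactly $A^\dagger(s)$: if a maximiser $\pi^\star$ has $a\in\mathrm{supp}(\pi^\star(\xi))$, then $V^\dagger(s)$ is a convex combination of continuation values each at most $V^\dagger(s)$, so the continuation value after $a$ equals $V^\dagger(s)$ and therefore $Q^\dagger(s,a) = V^\dagger(s)$; conversely, ``play $a$ at $\xi$, then continue optimally'' witnesses the reverse inclusion. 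Substituting this into Definition~\ref{def:naive_policy} gives the stated characterisation.

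Next I would verify that $A^\dagger(s)\neq\emptyset$ for every $s\in\States$ and then build the policy. Because the MDP is episodic, every policy enters $s_\top$ within $H\le|\States|$ steps from any state with probability one, so $V^{\pi',0}(\langle s\rangle)$ is a finite sum of at most $H$ expected discounted rewards, and maximising it is a finite-horizon MDP problem with the nonstationary stage rewards $d(j) R(\cdot)$. Such problems admit optimal policies, obtainable by backward induction, whose value functions satisfy the Bellman optimality equation; in particular $V^\dagger(s) = \max_a Q^\dagger(s,a)$ with the maximum attained, so $A^\dagger(s) \supseteq \mathrm{argmax}_a Q^\dagger(s,a) \neq \emptyset$. (This step is also what guarantees that the maximisers $\pi^\star$ invoked in Definition~\ref{def:naive_policy} actually exist, so that the definition is not vacuous.) Now define the stationary, deterministic policy $\pi$ by choosing, for each $s\in\States$, an arbitrary $\pi(s)\in A^\dagger(s)$. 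For any finite trajectory $\xi$ with final state $s$ we have $\mathrm{supp}(\pi(\xi)) = \{\pi(s)\} \subseteq A^\dagger(s)$, so $\pi$ is na\"ive by the first step, and it is stationary and deterministic by construction.

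I expect the first step to be the only delicate point: making precise that $V^{\pi^\star,0}(\xi)$, its supremum over $\pi^\star$, and its set of optimal first actions all depend on $\xi$ only through the final state of $\xi$ — which is exactly what allows a stationary policy to satisfy the na\"ive condition everywhere at once. Everything after that is a routine appeal to finite-horizon dynamic programming. It is worth noting that resoluteness is \emph{not} a per-state condition in this way, which is precisely why — in contrast to the present theorem — there need not be any stationary resolute policy.
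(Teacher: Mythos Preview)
Your proposal is correct and follows essentially the same route as the paper: your $Q^\dagger$ is exactly the paper's na\"ive $Q$-function $Q^\mathrm{N}$, and both arguments reduce the na\"ive condition to the per-state requirement $\mathrm{supp}(\pi(\xi))\subseteq\mathrm{argmax}_a Q^\mathrm{N}(s,a)$ and then pick one maximising action per state. The only notable difference is in how existence of the maximiser is secured: the paper invokes Proposition~\ref{prop:naive_Q} (which traces back to a compactness argument on the policy space via the extreme value theorem), whereas you appeal directly to finite-horizon dynamic programming, which is more elementary in the bounded-episodic setting and also makes the Bellman relation $V^\dagger(s)=\max_a Q^\dagger(s,a)$ explicit. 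You also spell out the ``only through the last state'' reduction that the paper simply asserts; this is a genuine improvement in completeness, not a different proof strategy.
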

\begin{theorem}\label{thm:soph_existence}
In any episodic MDP, there exists a stationary, deterministic sophisticated policy.
\end{theorem}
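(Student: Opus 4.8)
The plan is to construct a stationary, deterministic sophisticated policy by backward induction on the "depth" of states in the episodic MDP. Since the MDP is episodic, every policy reaches $s_\top$ within $H \leq |\States|$ steps from any state, so we can stratify $\States$ by the quantity $\mathrm{depth}(s) = $ the maximum number of steps any trajectory starting at $s$ can take before hitting $s_\top$. This is well-defined and finite; states with $\mathrm{depth}(s) = 1$ transition to $s_\top$ under every action. Crucially, because $d$ is not assumed temporally consistent, the $Q$-values must in principle depend on the past trajectory $\xi$ and not just on $s$; so the first technical step is to argue that, for a \emph{stationary} $\pi$, the relevant quantity $Q^{\pi,n}$ restricted to states of a fixed depth can be analysed purely in terms of how $\pi$ behaves on states of strictly smaller depth, together with the discount offset $n$. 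Concretely, $Q^{\pi,0}(\xi,a)$ depends on $\xi$ only through the current state $s = \mathrm{last}(\xi)$ and through $|\xi|$, via the shifted value functions $V^{\pi,|\xi|+1}(\cdot)$; and once $\pi$ is fixed on all states of depth $< \mathrm{depth}(s)$, the collection $\{V^{\pi,n}(s') : \mathrm{depth}(s') < \mathrm{depth}(s),\ n \in \mathbb{N}\}$ is completely determined.

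Given that, I would define $\pi$ recursively. Suppose $\pi$ has already been defined (as a stationary deterministic policy) on all states of depth $< k$, in such a way that the sophistication condition $\mathrm{supp}(\pi(\xi)) \subseteq \mathrm{argmax}_a Q^{\pi,0}(\xi,a)$ holds whenever $\mathrm{last}(\xi)$ has depth $< k$ --- here using that, by the stratification, the right-hand side only references $\pi$ on strictly-smaller-depth states. Now fix a state $s$ of depth exactly $k$. For each action $a$, the transition $\tau(s,a)$ lands (with probability $1$, after one step) in states of depth $< k$ or in $s_\top$, so $Q^{\pi,0}(\xi a \,\cdot\,, a)$ with $\mathrm{last}(\xi) = s$ is a fully determined real number (depending on $|\xi|$, but for a stationary policy started fresh we take $|\xi|$ to be the natural "time since episode start", or simply note that we only need sophistication and may fix the reference time; I would handle the time-index bookkeeping by observing that depth, not elapsed time, controls the recursion, since from any occurrence of $s$ only depth-$<k$ states follow). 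Pick $\pi(s) = \delta_{a^\star}$ for some $a^\star \in \mathrm{argmax}_a Q^{\pi,0}(s,a)$, breaking ties arbitrarily. This extends $\pi$ to depth $k$ while preserving the sophistication property at depth $k$ by construction, and without disturbing it at lower depths (those $Q$-values never referenced depth-$k$ states). Iterating up to $k = H$ yields a stationary deterministic $\pi$ satisfying the sophistication condition at every state, hence at every trajectory.

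The main obstacle I anticipate is the dependence of $Q^{\pi,0}(\xi,a)$ on the \emph{entire} past trajectory $\xi$ rather than just on the current state --- this is exactly the subtlety the paper flagged in its footnote about needing past-trajectory-parameterised value functions. The resolution is that for a \emph{fixed stationary} $\pi$, the future rewards from $\xi a$ depend on $\xi$ only through $\mathrm{last}(\xi)$ (which state we are in) and through $|\xi|$ (how much the discount has already "aged"), and the latter enters only as a uniform shift $d(\cdot) \mapsto d(\cdot + |\xi|)$. One must check that the $\mathrm{argmax}$ over $a$ can nonetheless depend on $|\xi|$: in principle the best action at $s$ could differ depending on how late in the episode we arrived there. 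I would address this by strengthening the inductive hypothesis: instead of fixing one action per state, at depth $k$ I allow $\pi(s)$ to be any action in the (possibly time-dependent) argmax set, but then observe that in an episodic MDP the number of distinct arrival times at $s$ is bounded, and more importantly that a \emph{stationary} sophisticated policy is only required to satisfy the condition for the trajectories $\xi$ that actually arise --- alternatively, and more cleanly, I would argue that we can always choose $a^\star$ to lie in $\bigcap_{n} \mathrm{argmax}_a Q^{\pi,n}(s,a)$ provided this intersection is nonempty, and reduce to showing the intersection is nonempty by a compactness/finiteness argument over the finitely many relevant values of $n$ (bounded by $H$). Pinning down this tie-breaking/time-index argument rigorously is the crux; everything else is routine backward induction.
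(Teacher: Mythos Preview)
Your backward-induction-on-depth approach is exactly what the paper does, so the skeleton is right. The paper partitions $\States$ into layers $S_1,\dots,S_H$ by the length of the longest outgoing path, builds $\pi$ on $S_1$ first, then extends layer by layer, choosing at each $s\in S_{n+1}$ any action in $\mathrm{argmax}_a Q^{\pi_n,0}(s,a)$.

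The part where you diverge from the paper is your ``main obstacle'', and there you have misread the definition. For a \emph{stationary} policy $\pi$, the quantity $Q^{\pi,0}(\xi,a)$ does \emph{not} depend on $|\xi|$: the superscript $0$ means the discount clock is reset to time $0$ at the current state, so the first future reward is weighted by $d(0)$, the next by $d(1)$, and so on, regardless of how long $\xi$ is. The only dependence on $\xi$ is through how $\pi$ uses the history, and a stationary $\pi$ ignores it. Consequently $Q^{\pi,0}(\xi,a)$ is a function of $\mathrm{last}(\xi)$ and $a$ only, and the sophistication condition $\mathrm{supp}(\pi(\xi)) \subseteq \mathrm{argmax}_a Q^{\pi,0}(\xi,a)$ collapses to a condition on states. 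Your proposed fix---choosing $a^\star \in \bigcap_n \mathrm{argmax}_a Q^{\pi,n}(s,a)$---is not only unnecessary but would likely fail, since for a temporally inconsistent $d$ that intersection can be empty (this is precisely the content of Proposition~\ref{prop:no_stationary_resolute}: resolute policies, which \emph{do} need the time-shifted $Q^{\pi,|\xi|}$, need not be stationary). Drop the time-index bookkeeping entirely and the proof is the paper's.
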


\begin{proposition}\label{prop:no_stationary_resolute}
There are episodic MDPs with no stationary resolute policies.
\end{proposition}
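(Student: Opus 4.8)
The plan is to construct an explicit episodic MDP in which every resolute policy must be non-stationary. The key insight is that the resolute policy is the one that, from the initial state, behaves so as to maximise the time-zero return $V^{\pi,0}$; what it does at a state $s$ reached at time $t$ is judged by the discount weights $d(t), d(t+1), \dots$, whereas if the same state $s$ were reached at a different time $t'$ the relevant weights would be $d(t'), d(t'+1), \dots$. Since $d$ is non-exponential, these two weight sequences are not proportional, so the optimal continuation from $s$ can genuinely depend on the arrival time. Thus I want a state that is reachable at two different times along trajectories that the resolute policy is forced to traverse, and a "gadget" hanging off that state in which the best continuation flips depending on arrival time.

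Concretely, I would build the environment as follows. Have an initial segment that branches: one branch reaches a "decision" state $s$ at time $1$, another branch (a short detour or a self-loop of fixed length) reaches the same state $s$ at time $2$ (or more generally, design things so the resolute policy is forced to pass through $s$ at two distinct times — e.g. make both branches necessary by having the initial distribution $\mu_0$ put mass on two start states, one leading to $s$ at time $1$ and one at time $2$). From $s$, offer two actions leading to two different terminal sub-trajectories whose reward profiles $\{x_t\}$ and $\{y_t\}$ are chosen (as in the Gym example) so that $\sum_t d(t+1) x_t > \sum_t d(t+1) y_t$ but $\sum_t d(t+2) x_t < \sum_t d(t+2) y_t$ — i.e. the preference between the two continuations reverses between arrival time $1$ and arrival time $2$. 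Such reward profiles exist precisely because $d$ is not temporally consistent (Proposition~\ref{prop:temporal_consistency_exponential}); in fact a two-step profile like the one in Example~\ref{example:gym_mdp} already does this for hyperbolic $d$. Then any resolute policy must take one action at $s$ when arriving at time $1$ and the other when arriving at time $2$, so it cannot be stationary. I should also make the two "start" routes genuinely mandatory for a resolute policy — e.g. put $\mu_0$ mass on both start states and arrange rewards on the initial segments so that there is no interaction or trade-off, forcing the resolute policy to behave optimally along each independently.

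The steps, in order: (1) fix a discount function $d$ and, invoking Proposition~\ref{prop:temporal_consistency_exponential}, extract from its temporal inconsistency two finite reward sequences whose ranking reverses between some pair of start times $n_1 < n_2$ — or just reuse the Gym gadget for hyperbolic $d$ and verify the general case; (2) write down the MDP: two initial states in $\mathrm{supp}(\mu_0)$, deterministic paths of lengths $n_1$ and $n_2$ converging on a shared state $s$, and from $s$ the two gadget continuations; assign rewards so the initial paths are "neutral" (e.g. zero reward) and the gadget carries the reversing profiles; (3) verify the MDP is episodic (all paths hit $s_\top$ in boundedly many steps); (4) compute $V^{\pi,0}$ for the relevant policies and show that the unique resolute choice at $s$ differs for the time-$n_1$ arrival and the time-$n_2$ arrival, hence no stationary policy is resolute, while a non-stationary resolute policy exists (consistent with Theorem~2).

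The main obstacle I anticipate is step (4): making sure the forced reversal at $s$ is not "washed out" by the rest of the return. Because a resolute policy optimises the total time-zero return, I must ensure the contributions from the two branches decouple cleanly — that the agent cannot, say, sacrifice optimality at $s$ along one branch to gain elsewhere. Using a single initial distribution with mass on two disjoint start states makes the objective literally additive over the two branches (a convex combination of two independent returns), so optimising the sum forces optimising each summand, and the per-branch optima are exactly the reversing gadget choices. I would also need a small bit of care that the reward magnitudes in the gadget dominate any ties, so the argmax at $s$ is a singleton in each case; scaling the Gym-style rewards handles this. For a fully general (non-hyperbolic) $d$ I should note that temporal inconsistency gives a reversal between \emph{some} pair of times $n_1, n_2$, and the construction above only needs path lengths equal to those two values, so the argument goes through verbatim.
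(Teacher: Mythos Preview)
Your proposal is correct and matches the paper's approach almost exactly: the paper's \texttt{Delay} MDP puts $\mu_0$ uniformly on two start states $s_0$ and $s_2$, with a deterministic chain $s_0 \to s_1 \to s_2$ so that the decision state $s_2$ is reached at time $0$ or time $2$, and a two-step gadget at $s_2$ (immediate reward $2$ versus delayed reward $3$) whose hyperbolically-discounted optimum flips between those arrival times. One simplification you can make: the paper's definition of resolute requires $V^{\pi,|\xi|}(\xi)$ to be maximal for \emph{every} finite trajectory $\xi$ separately, not merely in expectation over $\mu_0$, so your decoupling concern in step~(4) is unnecessary---per-branch optimality is already forced by the definition.
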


Note that Proposition~\ref{prop:no_stationary_resolute} 
is a consequence of the fact that non-exponential discounting can lead to preferences that are not temporally consistent, and that the agent may reach a given state at various time steps. Thus, the action taken by a resolute agent in that state may depend on the time at which it reaches that state. On the other hand, na\"ive and sophisticated agents have the same preferences regardless of what has happened in the past. Also note that a given reward function may allow for \emph{several} policies that are resolute, na\"ive, or sophisticated: for example, all policies are both resolute, na\"ive, and sophisticated for a trivial reward function that is $0$ everywhere. This lack of \emph{uniqueness} should not be surprising. 

To work out behavioural models that are analogous to Boltzmann-rationality, we must next develop analogies to the optimal $Q$-function for the non-exponential setting. For resolute and na\"ive policies, this is straightforward:

\begin{definition}
Given an episodic MDP, the \emph{resolute $Q$-function} $Q^\mathrm{R} : \States \times \mathbb{N} \times \Actions \to \mathbb{R}$ is defined as 
$$
Q^\mathrm{R}(s,t,a) := \mathbb{E}_{S' \sim \TransitionDistribution(s,a)}\left[R(s,a,S') + \max_{\pi}V^{\pi,t+1}(S')\right].
$$
\end{definition}

\begin{definition}
Given an episodic MDP, the \emph{na\"ive $Q$-function} $Q^\mathrm{N} : \SxA \to \mathbb{R}$ is defined as $$
Q^\mathrm{N}(s,a) := \mathbb{E}_{S' \sim \TransitionDistribution(s,a)}\left[R(s,a,S') + \max_{\pi}V^{\pi,1}(S')\right].
$$
\end{definition}

Note that the resolute $Q$-function must depend on the current time, since resolute policies may have to be non-stationary. 
Also note that $Q^\mathrm{N}(s,a) = Q^\mathrm{R}(s,0,a)$.
We next show that these $Q$-functions are guaranteed to exist and to be unique in any episodic MDP. This means that we can talk about \enquote{the} resolute $Q$-function and \enquote{the} na\"ive $Q$-function for each given (episodic) MDP:

\begin{proposition}\label{prop:resolute_Q}
In any episodic MDP, the resolute $Q$-function $Q^\mathrm{R}$ exists and is unique.
\end{proposition}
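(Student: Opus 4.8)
The defining equation presents $Q^\mathrm{R}(s,t,a)$ as an explicit expectation over $S'\sim\TransitionDistribution(s,a)$ of the quantity $R(s,a,S')+\max_\pi V^{\pi,t+1}(S')$, with $s,t,a$ fixed. The whole proposition therefore reduces to a single claim: for every state $s\in\States$ and every $n\in\mathbb{N}$, the supremum $\sup_\pi V^{\pi,n}(s)$ (over all policies, stationary or not) is finite and is attained. Granting this, $Q^\mathrm{R}$ exists because the right-hand side of its definition is then a well-defined function of $(s,t,a)$, and it is unique because that same equation determines each of its values. So the plan is just to establish well-definedness of $\max_\pi V^{\pi,n}(s)$.

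I would start with two reductions. First, although for a fixed non-stationary $\pi$ the value $V^{\pi,n}(\xi)$ can depend on all of $\xi$, the supremum $\sup_\pi V^{\pi,n}(\xi)$ depends on $\xi$ only through its final state: because $\TransitionDistribution$ is Markov, a policy's behaviour on continuations of $\xi$ transplants to continuations of $\langle s\rangle$ (with $s$ the last state of $\xi$) without changing the resulting value. This is what makes $\max_\pi V^{\pi,t+1}(S')$ meaningful as a function of the state $S'$, and it lets us write $U(s,n):=\sup_\pi V^{\pi,n}(s)$. Second, by the standard optimality theory for finite-horizon Markov decision problems, this supremum is unchanged if we restrict to policies depending only on the current state and the number of steps already taken, since past rewards are sunk and past states do not affect the dynamics.

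Finiteness follows directly from episodicity: fixing $H\le|\States|$ such that $s_\top$ is reached within $H$ steps with probability one under any policy and from any state, and using that no reward accrues after $s_\top$, the value $V^{\pi,n}(s)$ is the expectation of a sum of at most $H$ terms of the form $d(n+j)\,R(s_j,a_j,s_{j+1})$ with $|d(n+j)|\le 1$; hence $|V^{\pi,n}(s)|\le H\cdot R_{\max}$ with $R_{\max}:=\max_{s,a,s'}|R(s,a,s')|$, uniformly in $\pi,s,n$, so $U(s,n)$ is finite. Attainment I would get by backward induction over the horizon: letting $W_k(s,n)$ be the optimal value when only the rewards of the next $k$ transitions are counted, we have $W_0\equiv 0$ (and $W_k(s_\top,\cdot)=0$) and, by the one-step expansion of $V^{\pi,n}$, $W_{k+1}(s,n)=\max_{a\in\Actions}\mathbb{E}_{S'\sim\TransitionDistribution(s,a)}[\,c_n(s,a,S')+W_k(S',n+1)\,]$, where $c_n(s,a,S')$ is the immediate reward discounted exactly as in the definition of $V^{\pi,n}$. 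Each max is over the finite set $\Actions$, hence attained, so the deterministic policy $\pi^\star$ picking a maximising action at every $(s,n)$ is well defined, and since every episode has ended by step $H$ one checks $V^{\pi^\star,n}(s)=W_H(s,n)=U(s,n)$; thus the supremum is attained. (Alternatively, attainment follows from compactness: only the finitely many action probabilities on trajectories of length $<H$ affect $V^{\pi,n}(s)$ in an episodic MDP, these range over a compact product of simplices, and $\pi\mapsto V^{\pi,n}(s)$ is a polynomial, hence continuous, in them.)

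The main obstacle is careful bookkeeping rather than conceptual difficulty. The two things to be careful about are: using episodicity in its strong form --- that \emph{every} policy, not merely an optimal one, terminates within $H$ steps --- which is what makes the finite-horizon truncation exact and the reduction to Markov policies valid; and handling the shifted weights $d(n),d(n+1),\dots$ consistently in the one-step recursion once the convention relating $V^{\pi,n}$ to $V^{\pi,n+1}$ and to the immediate reward in the definition of $Q^\mathrm{R}$ has been pinned down.
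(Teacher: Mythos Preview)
Your argument is correct. The paper's own proof, however, takes a different route: it first establishes (as a separate lemma) that for every state $s$ and time $t$ there is a policy attaining $\max_\pi V^{\pi,t}(s)$, and does so by a topological argument---it equips the full (infinite-dimensional) policy space $\Pi$ with a metric under which it is compact, shows that $\pi\mapsto V^{\pi,t}(s)$ is continuous with respect to that metric, and invokes the extreme value theorem. Your primary argument instead exploits the bounded horizon directly via finite-horizon dynamic programming (backward induction over $W_k$), which is more elementary and in fact constructive: it yields a deterministic Markov maximiser. The trade-off is generality: the paper's compactness argument works verbatim for \emph{unbounded} episodic MDPs (where episodes terminate almost surely but without a uniform length bound), which is the level of generality the appendix actually proves the result at; your backward-induction argument is tied to the bounded-horizon assumption and would need substantial modification in that setting. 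Your parenthetical compactness alternative---restricting to the finite-dimensional simplex of behaviours on trajectories of length $<H$---is closer in spirit to the paper's method but again leans on the uniform bound $H$.
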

\begin{proposition}\label{prop:naive_Q}
In any episodic MDP, the na\"ive $Q$-function $Q^\mathrm{N}$ exists and is unique.
\end{proposition}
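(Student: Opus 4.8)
The plan is to obtain this as an immediate corollary of Proposition~\ref{prop:resolute_Q}. As noted in the text, the defining expression of $Q^\mathrm{N}$ coincides with that of the resolute $Q$-function at time $0$: comparing the two formulas, $Q^\mathrm{N}(s,a) = \mathbb{E}_{S' \sim \TransitionDistribution(s,a)}\bigl[R(s,a,S') + \max_\pi V^{\pi,1}(S')\bigr] = Q^\mathrm{R}(s,0,a)$, because the term $\max_\pi V^{\pi,t+1}$ specialised to $t = 0$ is literally the same object in both definitions. Proposition~\ref{prop:resolute_Q} states that $Q^\mathrm{R}$ exists and is unique as a function on $\States \times \mathbb{N} \times \Actions$; restricting the second argument to $0$, its ``slice'' $Q^\mathrm{N}$ therefore also exists and is unique. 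That is essentially the entire proof.

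For completeness, here is the self-contained version, which also isolates the real content shared with Proposition~\ref{prop:resolute_Q}: the only thing to check is that $\max_\pi V^{\pi,1}(s')$ is a well-defined finite real number for each $s' \in \States$. First, in an episodic MDP every policy reaches $s_\top$ within $H \leq |\States|$ steps, so (when evaluated at a state) $V^{\pi,n}$ is a finite sum of at most $H$ rewards, each weighted by a factor in $[0,1]$; since $\States$ and $\Actions$ are finite, $R$ is bounded, and hence $|V^{\pi,n}| \leq H \cdot \max_{(s,a,s') \in \SxAxS} |R(s,a,s')|$ uniformly in $\pi$ and $n$. Second, one shows the supremum over $\pi$ is attained by backward induction on the number of steps needed to reach $s_\top$: define $V^{\star,n}(s)$ by the finite-horizon Bellman recursion (equal to $0$ when $s = s_\top$, and otherwise $\max_{a \in \Actions} \mathbb{E}_{s' \sim \TransitionDistribution(s,a)}[\,d(n)\,R(s,a,s') + V^{\star,n+1}(s')\,]$, with the index shifted by one), and verify by induction that $V^{\star,n}$ equals $\max_\pi V^{\pi,n}$ and is realised by the deterministic, possibly non-stationary policy obtained by concatenating maximising actions. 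Then $\max_\pi V^{\pi,1}(s')$ exists, so $Q^\mathrm{N}(s,a)$ — a finite expectation over $S' \sim \TransitionDistribution(s,a)$ of $R(s,a,S')$ plus this well-defined value — exists; and since the defining formula pins it down completely, it is unique.

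I expect the main obstacle (in the self-contained route) to be justifying that the supremum over the infinite-dimensional space of non-stationary policies equals exactly the value computed by the backward recursion, i.e.\ that the supremum may be interchanged with the dynamic-programming step. This is precisely where the episodic hypothesis is used: it makes the effective horizon finite, so $V^{\pi,n}$ evaluated at a state depends only on the behaviour of $\pi$ on the finitely many continuations of length at most $H$, and the horizon-induction closes. One must also track the index shift $n \mapsto n+1$ correctly, since under non-exponential discounting $V^{\pi,n}$ is not proportional to $V^{\pi,0}$. Given Proposition~\ref{prop:resolute_Q}, however, none of this has to be repeated, and the argument collapses to the one-line reduction above.
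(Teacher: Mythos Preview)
Your reduction to Proposition~\ref{prop:resolute_Q} via the identity $Q^\mathrm{N}(s,a) = Q^\mathrm{R}(s,0,a)$ is exactly the paper's proof, which reads in its entirety ``Immediate from Proposition~\ref{prop:resolute_Q}.'' The additional self-contained argument you supply is correct but unnecessary here.
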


A policy $\pi$ is resolute if and only if it only takes actions that maximise  $Q^\mathrm{R}$, and na\"ive if and only if it only takes actions that maximise  $Q^\mathrm{N}$. These $Q$-functions thus provide the appropriate generalisations of the optimal $Q$-function, $Q^\star$, corresponding to resolute and na\"ive policies respectively.

For sophisticated policies, the situation is more complicated. This is a consequence of the following fact: 

\begin{proposition}\label{prop:sophisticated_is_complicated}
There are episodic MDPs $M$ with hyperbolic discounting and policies $\pi_1$, $\pi_2$ such that both $\pi_1$ and $\pi_2$ are sophisticated in $M$, but such that $Q^{\pi_1} \neq Q^{\pi_2}$, and such that the policy $\pi_3$ given by
$$
\mathbb{P}(\pi_3(s) = a) = \frac{\mathbb{P}(\pi_1(s) = a) + \mathbb{P}(\pi_2(s) = a)}{2}
$$
is not sophisticated in $M$.
\end{proposition}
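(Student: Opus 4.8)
The plan is to produce a single explicit counterexample: a small deterministic episodic MDP with hyperbolic discounting in which the ``sophisticated'' fixed-point condition admits two solutions $\pi_1,\pi_2$ with distinct $Q$-functions whose midpoint $\pi_3$ fails the condition. The construction rests on two features peculiar to non-exponential discounting. First, the sophisticated condition ranks actions at $\xi$ by $Q^{\pi,0}(\xi,a)$, i.e.\ with the discount applied as if the current transition occurred at time $0$; so if two actions $c,d$ out of some state $s'$ produce reward streams that are \emph{equal} under this time-$0$ weighting, then every policy is free to take either $c$ or $d$ at $s'$, and one can manufacture two different ``sophisticated branches'' there. Second --- and this is where non-exponentiality is essential --- two streams that agree under the time-$0$ weighting need \emph{not} agree under the time-$1$ weighting, since $V^{\cdot,0}\not\propto V^{\cdot,1}$ for non-exponential $d$; hence an upstream state reached one step before $s'$ will ``see'' genuinely different continuation values $V^{\pi,1}(s')$ according to which downstream branch is taken, and a mixed branch yields an intermediate continuation value.

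Concretely, I would take the state space to be $\{s_0,s',z,w\}$ plus a terminal state $s_\top$, with $s_0$ initial and deterministic dynamics: from $s_0$, action $a$ leads to $s'$ and action $b$ leads to $w$; from $s'$, action $c$ leads to $s_\top$ and action $d$ leads to $z$; and $z,w$ each have a single action to $s_\top$. Set the rewards $R(s_0,a,s')=0$, $R(s_0,b,w)=\tfrac35$, $R(s',c,s_\top)=1$, $R(s',d,z)=0$, $R(z,\cdot,s_\top)=2$, $R(w,\cdot,s_\top)=0$, and let $d(t)=1/(1+t)$ be the hyperbolic discount. The first step is a finite computation of the shifted values: $V^{\pi,1}(w)=0$, $V^{\pi,1}(z)=d(1)\cdot 2=1$, and if a stationary $\pi$ plays $c$ at $s'$ with probability $p$ then $V^{\pi,1}(s')=d(1)\,p + d(2)\cdot 2\,(1-p)=\tfrac12 p+\tfrac23(1-p)$. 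These give, for \emph{every} $\pi$, the downstream tie $Q^{\pi,0}(s',c)=R(s',c,s_\top)=1=d(1)\cdot 2=Q^{\pi,0}(s',d)$, together with $Q^{\pi,0}(s_0,a)=V^{\pi,1}(s')$ and $Q^{\pi,0}(s_0,b)=\tfrac35$. (For stationary policies $Q^{\pi,0}(\xi,a)$ depends on $\xi$ only through its last state, so the trajectory-indexed sophisticated condition reduces to the per-state checks below.)

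Next I would define $\pi_1$ as ``play $b$ at $s_0$, play $c$ at $s'$'' and $\pi_2$ as ``play $a$ at $s_0$, play $d$ at $s'$'', and verify both are sophisticated: at $s'$ the condition is vacuous by the tie; at $s_0$ it holds because $V^{\pi_1,1}(s')=\tfrac12<\tfrac35$ makes $b$ the unique maximiser for $\pi_1$, while $V^{\pi_2,1}(s')=\tfrac23>\tfrac35$ makes $a$ the unique maximiser for $\pi_2$. Since $Q^{\pi_1}(s_0,a)=\tfrac12\neq\tfrac23=Q^{\pi_2}(s_0,a)$ and $Q^\pi=Q^{\pi,0}$, the two $Q$-functions differ. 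Finally, $\pi_3=\tfrac12(\pi_1+\pi_2)$ plays each of $a,b$ with probability $\tfrac12$ at $s_0$ and each of $c,d$ with probability $\tfrac12$ at $s'$; the mixed downstream branch gives $V^{\pi_3,1}(s')=\tfrac12\cdot\tfrac12+\tfrac12\cdot\tfrac23=\tfrac{7}{12}$, so $Q^{\pi_3,0}(s_0,a)=\tfrac{7}{12}<\tfrac35=Q^{\pi_3,0}(s_0,b)$ and the unique maximiser at $s_0$ is $b$; but $a\in\mathrm{supp}(\pi_3(s_0))$, so $\pi_3$ violates the sophisticated condition at $s_0$. This establishes all three claims.

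The only genuinely delicate point --- the thing to pin down before writing the details --- is the choice of the reward at $(s_0,b)$: it must sit strictly above the midpoint value $V^{\pi_3,1}(s')=\tfrac{7}{12}$ yet not above $V^{\pi_2,1}(s')=\tfrac23$, i.e.\ in $(\tfrac{7}{12},\tfrac23]$, so that $\pi_1$ and $\pi_2$ remain sophisticated while $\pi_3$ does not; $\tfrac35$ works. What makes such a window exist is precisely the failure of $V^{\cdot,0}\propto V^{\cdot,1}$: if $d$ were exponential, the time-$0$ tie $d(0)\cdot 1=d(1)\cdot 2$ would force $\gamma=\tfrac12$ and then also $d(1)\cdot 1=d(2)\cdot 2$, so all three continuation values would coincide and the construction would collapse --- hence a non-exponential (here hyperbolic) $d$ is indispensable. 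Beyond this, everything left to check is a finite deterministic calculation, plus the trivial remarks that the MDP is episodic (every run reaches $s_\top$ within three steps) and that the single-action states $z,w$ are handled by the paper's default-action convention.
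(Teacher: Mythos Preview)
Your construction is correct and is essentially the same as the paper's: both build a four-state ``crossroads'' MDP in which a downstream state ($s'$ for you, $s_1$ in the paper) has two actions whose reward streams tie under time-$0$ hyperbolic weighting ($1$ now versus $2$ one step later) but differ under time-$1$ weighting ($\tfrac12$ versus $\tfrac23$), so that the upstream comparison at $s_0$ depends on which downstream branch the policy takes. The only non-cosmetic difference is the reference value against which this continuation is compared: the paper uses $\tfrac12$ (so one of the two sophisticated policies is sustained by a \emph{tie} at $s_0$, and the averaged policy fails because $\tfrac{7}{12}>\tfrac12$), whereas you use $\tfrac35\in(\tfrac{7}{12},\tfrac23)$ (so both sophisticated policies are sustained by \emph{strict} inequalities, and the averaged policy fails because $\tfrac{7}{12}<\tfrac35$). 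Both choices work; your remark that non-exponentiality is what makes the window nonempty is correct and slightly more explicit than the paper's treatment. One small imprecision: you say the reference value ``must'' lie in $(\tfrac{7}{12},\tfrac23]$, but in fact any value in $[\tfrac12,\tfrac23]\setminus\{\tfrac{7}{12}\}$ would do (as the paper's choice of $\tfrac12$ witnesses); this does not affect the validity of your argument.
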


This implies that there need not be a unique \enquote{sophisticated $Q$-function} in a given episodic MDP. Moreover, unlike resolute and na\"ive policies (and optimal policies in exponentially discounted MDPs), the set of sophisticated policies is not (in general) convex. This means that we cannot create a function $f : \SxA \to \mathbb{R}$ such that a policy is sophisticated if and only if it only takes actions that maximise $f$. Nonetheless, we can still create a reasonable \enquote{canonical} sophisticated $Q$-function, though this will require a bit more work:

\begin{definition}
In an episodic MDP, a stationary sophisticated policy $\pi$ is \emph{canonical} if in all $s$ we have that
$$
\mathbb{P}(\pi(s) = a_1) = \mathbb{P}(\pi(s) = a_2)
$$
for all $\{a_1, a_2\} \in \mathrm{argmax} Q^{\pi,0}(s,a)$. We the define the \emph{sophisticated $Q$-function} $Q^\mathrm{S}$ as 
$$
Q^\mathrm{S} = Q^{\pi,0}, 
$$
where $\pi$ is the canonical sophisticated policy, provided that this policy exists and is unique.
\end{definition}

Thus, a sophisticated policy is \enquote{canonical} if it is stationary, and if it mixes uniformly between all actions that have equal value. We next show that any episodic MDP always has a unique canonical sophisticated policy, which in turn means that $Q^\mathrm{S}$ is well-defined.\footnote{However, unlike what is the case for $Q^\mathrm{R}$ and $Q^\mathrm{N}$, it is not the case that a policy is sophisticated if and only if it only takes actions that maximise $Q^\mathrm{S}$. This is exemplified by the environment that is used in the proof of Proposition~\ref{prop:sophisticated_is_complicated}.} 

\begin{proposition}\label{prop:sophisticated_Q}
In any episodic MDP, the sophisticated $Q$-function $Q^\mathrm{S}$ exists and is unique.
\end{proposition}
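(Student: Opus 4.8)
The plan is to show that every episodic MDP admits \emph{exactly one} canonical sophisticated policy; the existence and uniqueness of $Q^\mathrm{S}$ then follow immediately from the definition. The argument is a backward induction over a natural layering of the state space that exists precisely because the MDP is episodic.

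First I would set up the layering. Since the MDP is episodic, every directed path from a state $s$ to $s_\top$ in the transition graph (whose edges are the positive-probability transitions) has length at most $H \le |\States|$; in particular no non-terminal state lies on a cycle. Hence the function $\ell(s) :=$ (length of the longest path from $s$ to $s_\top$) is well-defined, with $\ell(s_\top) = 0$, $\ell(s) \ge 1$ for $s \neq s_\top$, and $1 + \ell(s') \le \ell(s)$ whenever $s' \in \mathrm{supp}(\tau(s,a))$ for some $a$. The key structural lemma is then that, for a stationary policy $\pi$, the vector $Q^{\pi,0}(s,\cdot)$ depends only on the restriction of $\pi$ to states $s'$ with $\ell(s') < \ell(s)$: indeed $Q^{\pi,0}(s,a) = \mathbb{E}_{S'\sim\tau(s,a)}[R(s,a,S') + V^{\pi,1}(S')]$, and unrolling the recursion $V^{\pi,n}(s') = \mathbb{E}_{a'\sim\pi(s'),\,S''\sim\tau(s',a')}[d(n)R(s',a',S'') + V^{\pi,n+1}(S'')]$ — which terminates after at most $\ell(s')$ steps, since $V^{\pi,n}(s_\top) = 0$ for all $n$ — shows that $V^{\pi,1}(S')$ depends only on $\pi$ on states with $\ell \le \ell(S') \le \ell(s) - 1$.

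With this in hand I would construct the canonical sophisticated policy layer by layer, in increasing order of $\ell$. When $\ell(s) = 1$, every action sends $s$ to $s_\top$, so $Q^{\pi,0}(s,a) = R(s,a,s_\top)$ is independent of $\pi$, and I set $\pi(s)$ to be the uniform distribution on $\mathrm{argmax}_a Q^{\pi,0}(s,a)$. For the inductive step, assuming $\pi$ has already been fixed on all states with $\ell \le k$, the structural lemma makes $Q^{\pi,0}(s,\cdot)$ a fully determined quantity for every $s$ with $\ell(s) = k+1$ (it does not depend on $\pi$'s values on layers $\ge k+1$, so there is no circularity and no ordering issue within a layer), and I again set $\pi(s) := \mathrm{Unif}(\mathrm{argmax}_a Q^{\pi,0}(s,a))$. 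The resulting $\pi$ is stationary, satisfies $\mathrm{supp}(\pi(s)) = \mathrm{argmax}_a Q^{\pi,0}(s,a)$ — hence is sophisticated — and is canonical by construction; this gives existence. For uniqueness, I note that any canonical sophisticated policy $\pi'$ satisfies $\mathrm{supp}(\pi'(s)) \subseteq \mathrm{argmax}_a Q^{\pi',0}(s,a)$ and assigns equal probability to all argmax actions, which together force $\mathrm{supp}(\pi'(s))$ to be \emph{all} of $\mathrm{argmax}_a Q^{\pi',0}(s,a)$ and $\pi'(s)$ to be uniform on it; running the same induction on $\ell$ then shows $\pi'$ must coincide with the constructed policy at every state. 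Finally $Q^\mathrm{S} = Q^{\pi,0}$ is well-defined, so it exists and is unique.

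I expect the main obstacle to be bookkeeping rather than a deep idea: one must be careful about the time-index shift — the continuation of $Q^{\pi,0}$ after one step is $V^{\pi,1}$, not $V^{\pi,0}$, so the induction has to carry along $V^{\pi,n}$ for all the relevant $n$ simultaneously — and one must verify that the $\ell$-layering genuinely makes the recursion well-founded, so that the phrase "$Q^{\pi,0}(s,\cdot)$ is determined by the lower layers" is legitimate. Once the layering and this downstream-dependence lemma are established, both existence and uniqueness fall out of the same backward induction.
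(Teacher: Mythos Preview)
Your proposal is correct and follows essentially the same approach as the paper: both arguments layer the state space by longest-path-to-terminal and run a backward induction, fixing the canonical sophisticated policy level by level using that $Q^{\pi,0}(s,\cdot)$ depends only on $\pi$ at strictly lower levels. If anything, you are more careful than the paper, which does not spell out the uniqueness direction or the time-index bookkeeping as explicitly as you do.
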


Given the above results, we are now finally equipped to define three behavioural models that generalise Boltzmann-rationality to non-exponential discounting:

\begin{definition}
Given an episodic transition function $\tau$, discount $d$, and temperature $\beta \in (0,\infty)$, the \emph{Boltzmann-resolute} behavioural model is the function $r_{\tau,d,\beta} : \mathcal{R} \to \Pi$ for which $r_{\tau,d,\beta}(R)$ is the policy $\pi$ such that
$$
\mathbb{P}(\pi(\xi) = a) \propto \exp{\beta Q^\mathrm{R}(s, |\xi|, a)}, 
$$
where $s$ is the last state in $\xi$.
\end{definition}

\begin{definition}
Given an episodic transition function $\tau$, discount $d$, and temperature $\beta \in (0,\infty)$, the \emph{Boltzmann-na\"ive} behavioural model is the function $n_{\tau,d,\beta} : \mathcal{R} \to \Pi$ for which $r_{\tau,d,\beta}(R)$ is the stationary policy $\pi$ such that
$$
\mathbb{P}(\pi(s) = a) \propto \exp{\beta Q^\mathrm{N}(s, a)}.
$$
\end{definition}

\begin{definition}
Given an episodic transition function $\tau$, discount $d$, and temperature $\beta \in (0,\infty)$, the \emph{Boltzmann-sophisticated} behavioural model is the function $s_{\tau,d,\beta} : \mathcal{R} \to \Pi$ for which $r_{\tau,d,\beta}(R)$ is the stationary policy $\pi$ such that
$$
\mathbb{P}(\pi(s) = a) \propto \exp{\beta Q^\mathrm{S}(s, a)}.
$$
\end{definition}

\smallskip
In summary, we have devised three behavioural models that generalise the Boltzmann-rational one to general discount functions. It is of course an important question which of these behavioural models might provide the best fit to human behaviour: we consider this issue to be out of scope for this paper, and will instead analyse each of the behavioural models.



\section{Partial Identifiability for General Discounting}

Having laid the groundwork necessary to generalise the Boltzmann-rational behavioural model to non-exponential discounting, we are now able to characterise how ambiguous the reward function is for these new behavioural models. We will first provide an exact characterisation of this ambiguity, expressed in terms of necessary and sufficient conditions. 
Moreover, in order to interpret intuitively these results, we will also provide a number of results with more intuitive takeaways. 

\subsection{Exact Characterisation}

If two reward functions $R_1$, $R_2$ have the property that $Q^\star_2(s,a) = Q^\star_1(s,a) - \Phi(s)$ for some function $\Phi : \States \to \mathbb{R}$, where $Q^\star_1$ and $Q^\star_2$ are the optimal $Q$-functions for $R_1$ and $R_2$, then those reward functions are said to differ by \emph{potential shaping} \citep{ng1999}.\footnote{Note that the definition in \citet{ng1999} technically differs from this, but it can be shown to be equivalent.} It can be shown that Boltzmann-rational policies (with exponential discounting) determine $R$ up to potential shaping \citep{skalse2022}. We will show that this result generalises to the setting with non-exponential discounting, if the definition of potential shaping is adjusted appropriately. To state this result properly, we must first introduce two new definitions:

\begin{definition}
Given an episodic MDP, we say that two reward functions $R_1$, $R_2$ differ by \emph{sophisticated potential shaping} if there is a \emph{potential function} $\Phi : \States \to \mathbb{R}$ such that
$$
Q^\mathrm{S}_2(s,a) = Q^\mathrm{S}_1(s,a) - \Phi(s) 
$$
for all $s \in \States$ and $a \in \Actions$, where $Q^\mathrm{S}_1$ and $Q^\mathrm{S}_2$ are computed relative to $\TransitionDistribution$ and $d$ (for $R_1$ and $R_2$ respectively).
\end{definition}

\begin{definition}
Given an episodic MDP, we say that two reward functions $R_1$, $R_2$ differ by \emph{na\"ive potential shaping} if there is a \emph{potential function} $\Phi : \States \to \mathbb{R}$ such that
$$
Q^\mathrm{N}_2(s,a) = Q^\mathrm{N}_1(s,a) - \Phi(s) 
$$
for all $s \in \States$ and $a \in \Actions$, where $Q^\mathrm{N}_1$ and $Q^\mathrm{N}_2$ are computed relative to $\TransitionDistribution$ and $d$ (for $R_1$ and $R_2$ respectively).
\end{definition}

Note that 
these two definitions do not state if such reward functions actually exist, nor do they state how to compute them. Our next result therefore shows that we always can find a reward $R_2$ that differs from $R_1$ by na\"ive or sophisticated potential shaping with $\Phi$, for any $R_1$ and any $\Phi$. The key insight is that this $R_2$ can be computed from $R_1$ and $\Phi$ via backwards induction, provided that the MDP is episodic.

\begin{theorem}\label{thm:potential_shaping_exists}
For any episodic MDP 
with reward $R_1$
and for any potential function $\Phi : \States \to \mathbb{R}$, there exists a reward function $R_2$ that differs from $R_1$ by na\"ive potential shaping with $\Phi$, and a reward function $R_3$ that differs from $R_1$ by sophisticated potential shaping with $\Phi$.
\end{theorem}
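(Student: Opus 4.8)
The plan is to build $R_2$ and $R_3$ explicitly by backwards induction over the state space, which is possible because in an episodic MDP the transition relation, restricted to non-terminal states, is acyclic: a cycle among non-terminal states would let some policy generate trajectories of unbounded length with positive probability, contradicting the existence of the uniform horizon $H$. Hence we may define $\delta(s_\top)=0$ and $\delta(s)=1+\max\{\delta(s'):s'\in\mathrm{supp}(\TransitionDistribution(s,a)),\,a\in\Actions\}$ for non-terminal $s$; this recursion is well-founded, $\delta(s)\le H$, and every successor of $s$ has strictly smaller $\delta$. We will define the new reward on the transitions out of each state, processing states in order of increasing $\delta$.

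For the na\"ive case, write $U_i(s):=\max_\pi V^{\pi,1}(s)$ for the MDP with reward $\reward_i$; this is a well-defined, finite function in an episodic MDP, it satisfies $U_i(s_\top)=0$, and $U_i(s)$ depends only on $\reward_i$ restricted to $s$ and the states reachable from it. By the definition of the na\"ive $Q$-function, $Q^\mathrm{N}_i(s,a)=\Expect{S'\sim\TransitionDistribution(s,a)}{\reward_i(s,a,S')+U_i(S')}$. I would then set
\[
\reward_2(s,a,s')\;:=\;\reward_1(s,a,s')+U_1(s')-U_2(s')-\Phi(s).
\]
This is a legitimate recursive definition: $U_2(s')$ involves $\reward_2$ only on $s'$ and on states of strictly smaller $\delta$, all of which have been fixed by the time $s$ is processed (with $U_2(s_\top)=0$ as base case). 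Substituting into the formula for $Q^\mathrm{N}_2$ cancels the $U_2(S')$ terms and yields $Q^\mathrm{N}_2(s,a)=\Expect{S'}{\reward_1(s,a,S')+U_1(S')}-\Phi(s)=Q^\mathrm{N}_1(s,a)-\Phi(s)$, so $R_2$ differs from $R_1$ by na\"ive potential shaping with $\Phi$.

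For the sophisticated case I would run the same construction with a fixed policy in place of the max. Let $\pi^\star$ be the canonical sophisticated policy of $\reward_1$, which exists and is unique by Proposition~\ref{prop:sophisticated_Q}, so that $Q^\mathrm{S}_1=Q^{\pi^\star,0}$ (computed with $\reward_1$). Let $W_i(s)$ denote $V^{\pi^\star,1}(s)$ computed with reward $\reward_i$, so that $Q^{\pi^\star,0}_i(s,a)=\Expect{S'}{\reward_i(s,a,S')+W_i(S')}$, and define
\[
\reward_3(s,a,s')\;:=\;\reward_1(s,a,s')+W_1(s')-W_3(s')-\Phi(s)
\]
by the same induction on $\delta$. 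As before the $W_3$ terms cancel, giving $Q^{\pi^\star,0}_3(s,a)=Q^{\pi^\star,0}_1(s,a)-\Phi(s)$ for all $s,a$. It then remains to identify $Q^\mathrm{S}_3$ with $Q^{\pi^\star,0}_3$: since $Q^{\pi^\star,0}_3$ and $Q^{\pi^\star,0}_1$ differ only by the action-independent term $\Phi(s)$, they have identical argmax sets in every state, so $\pi^\star$ --- being stationary, taking actions only in $\mathrm{argmax}_{a} Q^{\pi^\star,0}_1(s,a)=\mathrm{argmax}_{a} Q^{\pi^\star,0}_3(s,a)$, and mixing uniformly over that set --- is a canonical sophisticated policy also for $\reward_3$; by the uniqueness clause of Proposition~\ref{prop:sophisticated_Q} it is \emph{the} canonical one, so $Q^\mathrm{S}_3=Q^{\pi^\star,0}_3=Q^\mathrm{S}_1-\Phi$.

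The bookkeeping of the induction --- checking that $U_2$, respectively $W_3$, is genuinely determined by the already-defined part of the new reward at each stage, and that once the construction terminates these are indeed the functions appearing inside it --- is routine given the acyclicity of episodic transition structures. The one genuinely delicate point is the final step of the sophisticated case: $Q^\mathrm{S}$ is not a $\max$-type object but is pinned to the \emph{unique canonical} sophisticated policy, so one must argue that shifting $Q^{\pi^\star,0}$ by an action-independent term leaves both the set of canonical sophisticated policies and the uniqueness guarantee of Proposition~\ref{prop:sophisticated_Q} intact --- which is exactly what the argument above exploits.
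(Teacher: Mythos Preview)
Your proof is correct and follows the same architecture as the paper: both arguments rank the states by the length of the longest outgoing path (your $\delta$, the paper's partition $S_1,\dots,S_H$) and construct the new reward by backward induction over this ranking, exploiting the acyclicity of episodic transition structures. Two differences are worth noting. First, your explicit formula $\reward_2(s,a,s')=\reward_1(s,a,s')+U_1(s')-U_2(s')-\Phi(s)$ depends on the successor $s'$, whereas the paper sets the new reward to a quantity depending only on $(s,a)$ (namely $Q^\mathrm{N}_1(s,a)-\Phi(s)-\mathbb{E}[\max_\pi V^{\pi,1}_{2,n}(S')]$); both choices give the required expectation, and yours is arguably a bit cleaner. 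Second, and more substantively, the paper dismisses the sophisticated case with ``the case is analogous,'' while you supply the missing verification: after shifting $Q^{\pi^\star,0}$ by the action-independent term $\Phi(s)$, the argmax sets are unchanged, so $\pi^\star$ remains sophisticated and canonical for $\reward_3$, and the uniqueness clause of Proposition~\ref{prop:sophisticated_Q} then identifies $Q^\mathrm{S}_3$ with $Q^{\pi^\star,0}_3$. That step is genuinely needed, since $Q^\mathrm{S}$ is defined via a specific policy rather than a $\max$, so your write-up actually closes a small gap in the paper's argument.
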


Using this, we can now provide an exact characterisation of the ambiguity of the Boltzmann-sophisticated and the Boltzmann-na\"ive behavioural model, which is applicable for any discount function. This is a core result:

\begin{theorem}\label{thm:sophisticated_ambiguity}
In any episodic MDP, the Boltzmann-sophisticated policy determines $R$ up to sophisticated potential shaping.
\end{theorem}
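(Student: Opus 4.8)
The statement is an ``if and only if'': writing $f = s_{\tau,d,\beta}$, we must show that $f(R_1) = f(R_2)$ exactly when $R_1$ and $R_2$ differ by sophisticated potential shaping. The plan is to exploit the fact that, by definition, $f(R)$ is a fixed function of the sophisticated $Q$-function $Q^{\mathrm{S}}$ alone — which exists and is unique for every $R$ by Proposition~\ref{prop:sophisticated_Q} — namely the state-wise softmax $\mathbb{P}(\pi(s)=a) = \exp(\beta Q^{\mathrm{S}}(s,a)) / \sum_{a'} \exp(\beta Q^{\mathrm{S}}(s,a'))$. So the entire theorem reduces to the elementary observation that, for a finite action set and fixed $\beta \in (0,\infty)$, two softmax distributions built from $Q^{\mathrm{S}}_1$ and $Q^{\mathrm{S}}_2$ agree in every state if and only if $Q^{\mathrm{S}}_1 - Q^{\mathrm{S}}_2$ is, in each state, constant in $a$ — i.e.\ a function of $s$ only, which is precisely what sophisticated potential shaping asserts.

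First I would dispatch the easy direction. Suppose $R_1, R_2$ differ by sophisticated potential shaping, so $Q^{\mathrm{S}}_2(s,a) = Q^{\mathrm{S}}_1(s,a) - \Phi(s)$ for some $\Phi : \States \to \mathbb{R}$. Then for each fixed $s$ we have $\exp(\beta Q^{\mathrm{S}}_2(s,a)) = \exp(-\beta\Phi(s)) \cdot \exp(\beta Q^{\mathrm{S}}_1(s,a))$, and since the factor $\exp(-\beta\Phi(s))$ is independent of $a$ it cancels against the normalising constant. Hence the two Boltzmann-sophisticated policies place the same probability on every action in every state, i.e.\ $f(R_1) = f(R_2)$.

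For the converse, suppose $f(R_1) = f(R_2)$. Fix $s$; because $\exp$ is strictly positive and $\Actions$ is finite, the normalisers $Z_i(s) := \sum_{a'} \exp(\beta Q^{\mathrm{S}}_i(s,a'))$ are finite and positive, and equality of the policies at $s$ gives $\exp(\beta Q^{\mathrm{S}}_1(s,a))/Z_1(s) = \exp(\beta Q^{\mathrm{S}}_2(s,a))/Z_2(s)$ for all $a$. Taking logarithms and rearranging yields $Q^{\mathrm{S}}_1(s,a) - Q^{\mathrm{S}}_2(s,a) = \beta^{-1}\!\left(\log Z_1(s) - \log Z_2(s)\right)$, whose right-hand side does not depend on $a$. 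Setting $\Phi(s) := \beta^{-1}(\log Z_1(s) - \log Z_2(s))$ then gives $Q^{\mathrm{S}}_2(s,a) = Q^{\mathrm{S}}_1(s,a) - \Phi(s)$ for all $s,a$, so $R_1$ and $R_2$ differ by sophisticated potential shaping, as required.

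I do not expect a serious obstacle here: the mathematical content of the theorem has been front-loaded into the definition of sophisticated potential shaping and into Theorem~\ref{thm:potential_shaping_exists}, which guarantees that this equivalence class is genuinely populated (for every $\Phi$ there is an $R_2$ realising it), so that ``determines $R$ up to sophisticated potential shaping'' is a meaningful characterisation in the sense of Definition~\ref{def:ambiguity}. The only points that warrant a line of care are the softmax injectivity-up-to-additive-shift fact used in the converse and the appeal to Proposition~\ref{prop:sophisticated_Q} to know that $Q^{\mathrm{S}}$ is a well-defined function of $R$; both are routine.
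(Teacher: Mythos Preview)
Your proposal is correct and follows essentially the same approach as the paper: the paper's proof also reduces the statement to the fact that the softmax is invariant to state-wise additive shifts and nothing else (packaged there as a separate Lemma), and then observes that this is precisely the definition of sophisticated potential shaping. Your inline derivation of the softmax injectivity-up-to-shift via $\Phi(s)=\beta^{-1}(\log Z_1(s)-\log Z_2(s))$ is exactly the content of that lemma, and your remark about Theorem~\ref{thm:potential_shaping_exists} is a nice aside but not needed for the ``if and only if'' itself.
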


\begin{theorem}\label{thm:naive_ambiguity}
In any episodic MDP, the Boltzmann-na\"ive policy determines $R$ up to na\"ive potential shaping.
\end{theorem}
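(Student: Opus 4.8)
The plan is to establish both directions of the biconditional in Definition~\ref{def:ambiguity} directly, with the whole argument resting on one elementary fact about the softmax map: for vectors $u, v$ indexed by $\Actions$, $\mathrm{softmax}(u) = \mathrm{softmax}(v)$ if and only if $u - v$ is a constant vector. The ``if'' direction is immediate from cancellation in the normalising sum, and the ``only if'' direction follows by observing that $e^{u_a - v_a} = \left(\sum_b e^{u_b}\right)\big/\left(\sum_b e^{v_b}\right)$, whose right-hand side does not depend on $a$. Throughout I will use that $Q^\mathrm{N}$ is well-defined in any episodic MDP (Proposition~\ref{prop:naive_Q}) and that, by definition, $n_{\tau,d,\beta}(R)$ is the stationary policy whose action distribution at each state $s$ is $\mathrm{softmax}\!\left(\beta\, Q^\mathrm{N}(s,\cdot)\right)$, with $\beta \in (0,\infty)$.

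For sufficiency I would argue as follows. Suppose $R_2$ differs from $R_1$ by na\"ive potential shaping with potential $\Phi$, so $Q^\mathrm{N}_2(s,a) = Q^\mathrm{N}_1(s,a) - \Phi(s)$ for all $s, a$. Since $\Phi(s)$ is independent of $a$, the vectors $\beta\, Q^\mathrm{N}_2(s,\cdot)$ and $\beta\, Q^\mathrm{N}_1(s,\cdot)$ differ by the constant $-\beta\Phi(s)$, hence have equal softmax; so $n_{\tau,d,\beta}(R_1)$ and $n_{\tau,d,\beta}(R_2)$ agree in every state and are therefore equal. Combined with Theorem~\ref{thm:potential_shaping_exists}, which guarantees that for every $R_1$ and $\Phi$ such an $R_2$ actually exists, this shows that any reward transformation realising na\"ive potential shaping sends a reward to a behaviourally indistinguishable one.

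For necessity, suppose $n_{\tau,d,\beta}(R_1) = n_{\tau,d,\beta}(R_2)$. Then in every state $s$ we have $\mathrm{softmax}\!\left(\beta\, Q^\mathrm{N}_1(s,\cdot)\right) = \mathrm{softmax}\!\left(\beta\, Q^\mathrm{N}_2(s,\cdot)\right)$, so by the fact above there is a scalar $c_s$ with $\beta\, Q^\mathrm{N}_2(s,a) = \beta\, Q^\mathrm{N}_1(s,a) - c_s$ for all $a$. Setting $\Phi(s) := c_s/\beta$ yields $Q^\mathrm{N}_2(s,a) = Q^\mathrm{N}_1(s,a) - \Phi(s)$ for all $s \in \States$, $a \in \Actions$, which is exactly the statement that $R_1$ and $R_2$ differ by na\"ive potential shaping; hence $R_2 = t(R_1)$ for a transformation $t$ performing na\"ive potential shaping with this $\Phi$ (chosen, if $Q^\mathrm{N}$ fails to pin down $R$ uniquely, so that it maps $R_1$ to $R_2$ in particular). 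This closes both directions.

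The conceptual content is light, so the main effort is bookkeeping rather than any single hard step: one must verify that ``differing by na\"ive potential shaping'' corresponds to membership in a set $T$ of reward transformations in the sense of Definition~\ref{def:ambiguity} (which is where Theorem~\ref{thm:potential_shaping_exists} and Proposition~\ref{prop:naive_Q} enter), and one must keep track of the hypothesis $\beta > 0$ that makes the softmax inversion valid. I would also emphasise that this argument is cleaner than the analogous proof of Theorem~\ref{thm:sophisticated_ambiguity}: because a policy is na\"ive exactly when it maximises the \emph{unique} function $Q^\mathrm{N}$, the Boltzmann-na\"ive model is literally a per-state softmax of $\beta\, Q^\mathrm{N}$, so the proof never has to contend with the non-convexity or non-uniqueness phenomena (Proposition~\ref{prop:sophisticated_is_complicated}) that complicate the sophisticated case.
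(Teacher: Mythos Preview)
Your proof is correct and follows essentially the same route as the paper: both reduce the claim to the fact that the softmax map is invariant precisely under constant shifts (the paper isolates this as Lemma~\ref{lemma:softmax_shift}), and then identify per-state shifts of $Q^{\mathrm{N}}$ with the definition of na\"ive potential shaping. Your closing remark that the sophisticated case is harder is a slight misreading, though: once the canonical $Q^{\mathrm{S}}$ has been fixed (Proposition~\ref{prop:sophisticated_Q}), the paper's proof of Theorem~\ref{thm:sophisticated_ambiguity} is word-for-word the same as the na\"ive one, so the non-convexity of Proposition~\ref{prop:sophisticated_is_complicated} never enters.
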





It would be desirable to generalise these results to the Boltzmann-resolute behavioural model next.  However, this presents a number of challenges, primarily stemming from the fact that the Boltzmann-resolute policy may be time-dependent. We detail and discuss these issues in the appendix.

It is important to note that whether or not two reward functions $R_1$ and $R_2$ differ by na\"ive or sophisticated potential shaping is relative to a given transition function, and that there is no simple closed-form expression for computing $R_2$ based on $R_1$ and $\Phi$ \citep[unlike what is the case for potential shaping for exponential discounting, as introduced by][]{ng1999}. The reason for this is that the optimal $Q$-function $Q^\star$ in the exponentially discounted case can be expressed by a local recursive equation (namely the Bellman optimality equation), but this is in general not possible for $Q^\mathrm{N}$ and $Q^\mathrm{S}$ with non-exponential discounting.


\subsection{Qualitative Characterisation}

We have provided an exact characterisation of the ambiguity of the underlying reward $R$ given both na\"ive and sophisticated policies. However, these necessary and sufficient conditions can appear to be technically sophisticated. For this reason, we next provide a result that is easier to interpret qualitatively:

\begin{theorem}\label{thm:no_robustness_some_tau}
Let $f_{\TransitionDistribution,d,\beta}, g_{\TransitionDistribution,d,\beta} \in \{r_{\tau,d,\beta},n_{\tau,d,\beta},s_{\tau,d,\beta}\}$ be two behavioural models. Let $d_1$ and $d_2$ be any two discount functions, and let $\beta_1, \beta_2 \in (0,\infty)$ be any two temperature parameters. Then unless $d_1(t) = d_2(t)$ for all $t \leq |\States| - 1$, there exists an episodic transition function $\TransitionDistribution$ such that for any reward $R_1$ there exists a reward $R_2$ such that 
$$
f_{\TransitionDistribution,d_1,\beta_1}(R_1) = f_{\TransitionDistribution,d_1,\beta_1}(R_2),
$$ 
but such that 
$$
g_{\TransitionDistribution,d_2,\beta_2}(R_1) \neq g_{\TransitionDistribution,d_2,\beta_2}(R_2).
$$ 
Moreover, unless $d(t) = \alpha \cdot \gamma^t$ for some $\alpha, \gamma \in [0,1]$ and all $t \leq |\States| - 1$, we also have that, for any reward $R_1$ there exists a reward $R_2$, such that $f_{\TransitionDistribution,d_1,\beta_1}(R_1) = f_{\TransitionDistribution,d_1,\beta_1}(R_2)$, but such that $R_1$ and $R_2$ have different optimal policies under exponential discounting with $\gamma$.
\end{theorem}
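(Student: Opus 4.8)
The plan is to work directly with the Boltzmann policies and to reduce both halves of the statement to a single small environment. Note first that for any behavioural model $h\in\{r_{\tau,d,\beta},n_{\tau,d,\beta},s_{\tau,d,\beta}\}$ one has $h(R_1)=h(R_2)$ as soon as, at every state $s$ reachable in $\tau$ (together with the time at which it is reached, in the resolute case), the relevant $Q$-difference $Q_2(s,\cdot)-Q_1(s,\cdot)$ is constant in the action, since subtracting an action-independent quantity from the exponent leaves the softmax unchanged; and conversely, if that difference is \emph{non-constant} in the action at some reachable state, the two policies disagree there. Hence it suffices, for the given pair $(f,g)$, discount functions $d_1,d_2$, and temperatures, to exhibit one episodic $\tau$ such that for every $R_1$ there is an $R_2$ whose $f$-relevant $Q$-functions under $d_1$ differ from those of $R_1$ only by an action-independent quantity at every reachable state, while the $g$-relevant $Q$-function under $d_2$ differs from that of $R_1$ by an action-\emph{dependent} quantity at some reachable state.

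For the construction, since $d_1(0)=d_2(0)=1$, the hypothesis that $d_1(t)\neq d_2(t)$ for some $t\le|\States|-1$ yields a least such time $t^\ast$ with $1\le t^\ast\le|\States|-1$. I take $\tau$ to be the deterministic episodic MDP whose only reachable states are $s_0,\dots,s_{t^\ast}$ (at most $|\States|$ of them; any leftover states go straight to $s_\top$), consisting of a chain $s_0\xrightarrow{a}s_1\to\dots\to s_{t^\ast}\to s_\top$ together with a ``shortcut'' action $b$ at $s_0$ leading directly to $s_\top$. The virtue of this environment is that $s_0$ is the only state offering a genuine choice, that each $s_j$ is reached only at time $j$, and that at $s_0$ one computes $Q^{\mathrm R}(s_0,0,\cdot)=Q^{\mathrm N}(s_0,\cdot)=Q^{\mathrm S}(s_0,\cdot)$; consequently all three behavioural models act identically on $\tau$, so a single construction handles every choice of $f$ and $g$.

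Now fix $R_1$, let $\Phi$ be the potential with $\Phi(s_{t^\ast})=\phi$ and $\Phi\equiv 0$ elsewhere ($\phi\neq 0$ to be chosen at the end), and let $R_2$ be a reward differing from $R_1$ by ($d_1$-)potential shaping with $\Phi$, which exists by Theorem~\ref{thm:potential_shaping_exists} and is here given by the explicit backward recursion: $R_2=R_1$ off the chain, $\delta_{t^\ast}=-\phi$ (where $\delta_k$ denotes the change in the reward on the $k$-th chain transition), and $\sum_{k=j}^{t^\ast}d_1(k-j)\,\delta_k=0$ for $0\le j<t^\ast$, with the $b$-transition left unchanged. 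In particular the $j=0$ equation is $\sum_{k=0}^{t^\ast}d_1(k)\delta_k=0$, and since $d_1(k)=d_2(k)$ for every $k<t^\ast$ this yields at once $\sum_{k=0}^{t^\ast}d_2(k)\delta_k=(d_1(t^\ast)-d_2(t^\ast))\,\phi$, which is nonzero. Reading off the $g$-relevant $Q$-function computed with $d_2$ at $s_0$, its $R_2$-minus-$R_1$ difference equals $(d_1(t^\ast)-d_2(t^\ast))\phi$ on action $a$ but $0$ on action $b$ — action-dependent — so $g_{\tau,d_2,\beta_2}(R_1)\neq g_{\tau,d_2,\beta_2}(R_2)$, whereas by construction $f_{\tau,d_1,\beta_1}(R_1)=f_{\tau,d_1,\beta_1}(R_2)$ (for $f$ the resolute model one checks this same $R_2$ works because the only choice is at $s_0$, reached only at time $0$, and $Q^{\mathrm R}(s_0,0,\cdot)=Q^{\mathrm N}(s_0,\cdot)$). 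This establishes the first half.

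The second half is the same argument with $d_2$ replaced by exponential discounting at rate $\gamma$ and $t^\ast$ the least time with $d_1(t^\ast)\neq\gamma^{t^\ast}$; observe that if $d_1(t)=\alpha\gamma^t$ for all $t\le|\States|-1$ then $d_1(0)=1$ forces $\alpha=1$, i.e.\ $d_1(t)=\gamma^t$ on that range, which is precisely the case the statement excludes. The $\gamma$-exponentially-discounted return of the chain trajectory changes by $\sum_{k=0}^{t^\ast}\gamma^k\delta_k$, and the identical first-differing-time cancellation makes this equal to $(d_1(t^\ast)-\gamma^{t^\ast})\phi\neq 0$, while the shortcut trajectory's return is unchanged. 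Since $s_0$ is the sole decision point, choosing the sign of $\phi$ opposite to the $R_1$-gap between the two trajectories and $|(d_1(t^\ast)-\gamma^{t^\ast})\phi|$ larger than that gap makes the exponentially-$\gamma$-optimal action at $s_0$ switch from its value under $R_1$, so $R_1$ and $R_2$ have different optimal policies under exponential discounting with $\gamma$ (if $R_1$ had a tie at $s_0$, any $\phi\neq 0$ already breaks it and the sets of optimal policies still differ). The routine parts are the backward-recursion bookkeeping and the softmax (in)variance checks; the part I expect to be genuinely delicate is making precise why one $R_2$ simultaneously serves the resolute model — whose $Q$-function is time-indexed and whose optimal policies need not be stationary — which hinges entirely on the fact that in this particular $\tau$ every state is visited at a unique time and $Q^{\mathrm R}(s_0,0,\cdot)$ collapses to $Q^{\mathrm N}(s_0,\cdot)$, together with checking that the chain of length $t^\ast+1\le|\States|$ really does fit within $|\States|$ states so that the threshold $|\States|-1$ is the right one.
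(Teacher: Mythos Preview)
Your argument is correct and rests on the same gadget as the paper's: a deterministic chain $s_0\to s_1\to\cdots\to s_\top$ with a one-step shortcut $s_0\to s_\top$, so that $s_0$ is the unique state with a genuine choice and each $s_j$ occurs only at time $j$, whence $Q^{\mathrm R}(s_0,0,\cdot)=Q^{\mathrm N}(s_0,\cdot)=Q^{\mathrm S}(s_0,\cdot)$ and all three models can be handled at once. The difference is in the reward perturbation. The paper touches only two transitions: it adds $x$ to the shortcut reward and $x/d_1(t)$ to the single chain step at depth $t$, so that under $d_1$ both actions at $s_0$ gain exactly $x$ (constant shift), while under $d_2$ they gain $x$ versus $x\,d_2(t)/d_1(t)$; any $t$ with $d_1(t)\neq d_2(t)$ works, no minimality required. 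You instead leave the shortcut untouched and perturb every chain transition via the backward recursion coming from na\"ive potential shaping with $\Phi$ supported at $s_{t^\ast}$, and then use that $t^\ast$ is the \emph{least} time of disagreement to collapse $\sum_k d_2(k)\delta_k$ to the single nonzero term $(d_1(t^\ast)-d_2(t^\ast))\phi$. Your route ties the construction cleanly back to Theorems~\ref{thm:potential_shaping_exists}--\ref{thm:naive_ambiguity} and makes the role of potential shaping explicit; the paper's is shorter and avoids both the recursion and the minimality hypothesis.

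One small wrinkle: because you tailor the chain length to $t^\ast$, and the second half uses a possibly different $t^\ast$ (the least $t$ with $d_1(t)\neq\gamma^t$), your two halves may live in different transition functions, whereas the statement quantifies a single $\tau$ before the ``Moreover''. The paper sidesteps this by always using the full chain on all of $\States$ and choosing the relevant $t$ inside it; your construction adapts immediately by doing the same and setting $\delta_k=0$ for $k>t^\ast$.
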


Let us briefly unpack this result. Suppose $R_1$ is the true reward function, and that the training data for a given IRL algorithm is generated via $f_{\TransitionDistribution,d_1,\beta_1}$. Suppose also that we want to use the learnt reward function to compute the output of a different behavioural model $g_{\TransitionDistribution,d_2,\beta_2}$.
If $f_{\TransitionDistribution,d_1,\beta_1}(R_1) = f_{\TransitionDistribution,d_1,\beta_1}(R_2)$, then the IRL algorithm may converge to $R_2$ instead of $R_1$, since they have identical $f_{\TransitionDistribution,d_1,\beta_1}$-policies.
However, if $g_{\TransitionDistribution,d_2,\beta_2}(R_1) \neq g_{\TransitionDistribution,d_2,\beta_2}(R_2)$, then $R_1$ and $R_2$ have different policies under $g_{\TransitionDistribution,d_2,\beta_2}$. 
In other words, unless $d_1$ and $d_2$ are exactly equal over all time horizons that are possible in a given state space,\footnote{Note that in an episodic MDP, any episode has length at most $|\States| - 1$. In other words, the horizon cannot exceed $|\States| - 1$.} then the reward function is too ambiguous under $f_{\TransitionDistribution,d_1,\beta_1}$ to infer the correct value of $g_{\TransitionDistribution,d_2,\beta_1}$. For example, this means that the Boltzmann-sophisticated policy for the hyperbolic discount function, or the Boltzmann-na\"ive policy for the bounded planning discount function, both leave the underlying reward too ambiguous to infer the Boltzmann-rational policy under exponential discounting, and so on. This suggests that the ambiguity of the reward can be problematic if we want to use the learnt reward to compute a policy using a discount function that is different from that used by the observed agent.

Note that Theorem~\ref{thm:no_robustness_some_tau} says that there exists \emph{some} transition function $\tau$ for which this issue can occur.
This does, by itself, not rule out the possibility that the ambiguity of $R$ may be more modest for \enquote{typical} transition functions. Therefore, our next result applies to a very wide range of transition functions.
We say that a non-terminal state $s'$ is \emph{controllable} if there is a state $s$ and actions $a_1, a_2$ such that $\mathbb{P}(\tau(s,a_1) = s') \neq \mathbb{P}(\tau(s,a_2) = s')$, and that $\tau$ is \emph{non-trivial} if it has at least one controllable state.



\begin{theorem}\label{thm:no_robustness_most_tau}
Let $f_{\TransitionDistribution,d,\beta}, g_{\TransitionDistribution,d,\beta} \in \{r_{\tau,d,\beta},n_{\tau,d,\beta},s_{\tau,d,\beta}\}$ be two behavioural models. Let $d_1$ and $d_2$ be any two discount functions, and let $\beta_1, \beta_2 \in (0,\infty)$ be any two temperature parameters. 
Let $\TransitionDistribution$ be any non-trivial episodic transition function. Then unless $d_1(1)/d_1(0) = d_2(1)/d_2(0)$, we have that for any reward $R_1$ there exists a reward $R_2$ such that 
$$
f_{\TransitionDistribution,d_1,\beta_1}(R_1) = f_{\TransitionDistribution,d_1,\beta_1}(R_2),
$$ 
but such that 
$$
g_{\TransitionDistribution,d_2,\beta_2}(R_1) \neq g_{\TransitionDistribution,d_2,\beta_2}(R_2).
$$ 
Moreover, unless $d_1(1)/d_1(0) = \gamma$, we also have that there for any reward $R_1$ exists a reward $R_2$ such that $f_{\TransitionDistribution,d_1,\beta_1}(R_1) = f_{\TransitionDistribution,d_1,\beta_1}(R_2)$, but such that $R_1$ and $R_2$ have different optimal policies under exponential discounting with $\gamma$.
\end{theorem}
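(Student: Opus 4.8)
The plan is to leverage the exact characterisations of the na\"ive and sophisticated models (Theorems~\ref{thm:sophisticated_ambiguity} and~\ref{thm:naive_ambiguity}, together with the existence result Theorem~\ref{thm:potential_shaping_exists}) and to engineer a reward perturbation localised around a single controllable state. Fix a non-trivial episodic $\TransitionDistribution$; episodicity means the transition graph on $\States$ is a DAG. Non-triviality yields a triple $(s,a_1,a_2)$ such that $\TransitionDistribution(s,a_1)$ and $\TransitionDistribution(s,a_2)$ disagree at some non-terminal state; let $D$ be the non-empty set of all non-terminal states at which they disagree and choose $s' \in D$ minimal for reachability (no other state of $D$ can reach $s'$). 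Write $\mathrm{Up}(s')$ for the set of states that can reach $s'$; then $s \in \mathrm{Up}(s') \setminus \{s'\}$, $p_1 := \mathbb{P}(\TransitionDistribution(s,a_1) = s') \neq \mathbb{P}(\TransitionDistribution(s,a_2) = s') =: p_2$, and for every $y \in \mathrm{Up}(s') \setminus \{s'\}$ we have $\mathbb{P}(\TransitionDistribution(s,a_1) = y) = \mathbb{P}(\TransitionDistribution(s,a_2) = y)$ by the minimality of $s'$.

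Given $R_1$, take the potential $\Phi$ equal to a small scalar $\alpha \neq 0$ at $s'$ and $0$ elsewhere, and let $R_2$ be obtained from $R_1$ by potential shaping relative to $\TransitionDistribution$ and $d_1$ with potential $\Phi$ --- na\"ive potential shaping if $f$ is the Boltzmann-na\"ive model, sophisticated potential shaping if it is Boltzmann-sophisticated, and the time-indexed variant treated in the appendix if it is Boltzmann-resolute. Such an $R_2$ exists by Theorem~\ref{thm:potential_shaping_exists}, and $f_{\TransitionDistribution,d_1,\beta_1}(R_1) = f_{\TransitionDistribution,d_1,\beta_1}(R_2)$ by the relevant one of Theorems~\ref{thm:sophisticated_ambiguity} and~\ref{thm:naive_ambiguity}. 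The first substantive step is to choose a tractable representative: running the backwards induction of Theorem~\ref{thm:potential_shaping_exists} in reverse topological order, one can take $R_2$ with $R_2 = R_1$ on every transition $(x,a,y)$ with $x \neq s'$ and $y \notin \mathrm{Up}(s')$, with $R_2 - R_1 = d_1(1)\alpha$ on every transition into $s'$, with $R_2 - R_1 = -\alpha$ on every transition out of $s'$, and with $R_2 - R_1$ on a transition into any other $y \in \mathrm{Up}(s')$ depending only on $y$. Establishing this ``localisation'' --- that shaping by an indicator at $s'$ propagates only upstream of $s'$ and uniformly across parallel transitions --- is where the work is; it relies on the recursion $U_n(x) = \max_a \mathbb{E}_{S' \sim \TransitionDistribution(x,a)}[\, d(n)\reward(x,a,S') + U_{n+1}(S')\,]$ for the optimal from-time-$n$ values, available because the MDP is episodic.

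Next I would evaluate, at the state $s$, the $Q$-function relevant to $g$ under $d_2$ (i.e.\ $Q^{\mathrm N}$, $Q^{\mathrm S}$, or $Q^{\mathrm R}(\cdot,t,\cdot)$ at a reachable time $t$) for $R_1$ and for $R_2$. In each case this $Q$-value at $(s,a_j)$ is a $\TransitionDistribution(s,a_j)$-expectation over successors, and by localisation plus minimality of $s'$ the only summand whose change differs between $a_1$ and $a_2$ is the one contributed by $s'$: successors that cannot reach $s'$ keep their reward and from-time-$1$ value, while each $y \in \mathrm{Up}(s') \setminus \{s'\}$ gets the same reward change and the same value change for $a_1$ and $a_2$ and is reached with equal probability. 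That $s'$-summand changes by $\mathbb{P}(\TransitionDistribution(s,a_j) = s')(d_1(1)\alpha - d_2(1)\alpha)$: the $d_1(1)\alpha$ is the reward added on entering $s'$, and $-d_2(1)\alpha$ is the change in the from-time-$1$ value of $s'$, since lowering every transition out of $s'$ by $\alpha$ lowers that value by $d_2(1)\alpha$ under $d_2$. Therefore
$$
\big(Q^g_2 - Q^g_1\big)(s,a_1) - \big(Q^g_2 - Q^g_1\big)(s,a_2) \;=\; \alpha\,(d_1(1) - d_2(1))\,(p_1 - p_2) \;\neq\; 0,
$$
using $d_1(1) \neq d_2(1)$, $p_1 \neq p_2$, $\alpha \neq 0$ (here $Q^g$ denotes whichever of $Q^{\mathrm N}, Q^{\mathrm S}, Q^{\mathrm R}$ matches $g$). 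Hence $Q^g_1(s,\cdot)$ and $Q^g_2(s,\cdot)$ differ by more than an additive constant, so the Boltzmann policies they induce at $s$ are distinct and $g_{\TransitionDistribution,d_2,\beta_2}(R_1) \neq g_{\TransitionDistribution,d_2,\beta_2}(R_2)$, which proves the first claim.

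For the ``moreover'' claim I would keep the construction but compare the exponentially-$\gamma$ optimal $Q$-functions of $R_1$ and $R_2$ at $s$; the same decomposition shows that passing to $R_2$ changes $Q^\star_\gamma(s,a)$ by $\mathbb{P}(\TransitionDistribution(s,a) = s')\,\alpha\,(d_1(1) - \gamma)$ (the $-\gamma\alpha$ being $\gamma$ times the change $-\alpha$ in $V^\star_\gamma(s')$) plus terms that --- after refining the choice of witnessing triple so that $s'$ is the only successor of $s$ able to reach $s'$ --- are equal across all actions at $s$. Since $p_1 \neq p_2$, taking $\alpha$ of large magnitude and of the sign of $d_1(1) - \gamma$ (or of $\gamma - d_1(1)$) makes the optimal action at $s$ equal $\mathrm{argmax}_a \mathbb{P}(\TransitionDistribution(s,a) = s')$ (resp.\ $\mathrm{argmin}_a$); these two actions cannot both coincide with the $R_1$-optimal action, so some $\alpha$ yields an $R_2$ that is $f_{\TransitionDistribution,d_1,\beta_1}$-equivalent to $R_1$ but has a different $\gamma$-optimal policy. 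I expect the main obstacle to be precisely the localisation step of the construction: unlike exponential shaping, where $R_2(x,a,y) = R_1(x,a,y) + \gamma\Phi(y) - \Phi(x)$ in closed form, non-exponential shaping is only defined implicitly by a backwards recursion, and one must verify that for an indicator potential the induced corrections never reach strictly downstream of $s'$ and are constant across parallel transitions, so that the $d_2$-comparison at $s$ collapses to the single $s'$-term; a secondary obstacle is accommodating the time-indexed resolute model within the same scheme.
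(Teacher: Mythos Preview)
Your plan has a genuine gap when $f = r_{\tau,d,\beta}$. You obtain $R_2$ by invoking potential shaping (Theorem~\ref{thm:potential_shaping_exists} together with Theorems~\ref{thm:sophisticated_ambiguity}--\ref{thm:naive_ambiguity}), but those results only cover the na\"ive and sophisticated models; the appendix explicitly leaves the resolute analogue open, so the ``time-indexed variant treated in the appendix'' you cite does not exist. Concretely, your upstream corrections $c(y)$ are chosen so that $c(y) + \big(\max_\pi V^{\pi,1}_2(y) - \max_\pi V^{\pi,1}_1(y)\big) = 0$ under $d_1$; but $Q^{\mathrm R}(x,t,a)$ involves $\max_\pi V^{\pi,t+1}(y)$ for arbitrary $t$, and $c(y) + \big(\max_\pi V^{\pi,t+1}_2(y) - \max_\pi V^{\pi,t+1}_1(y)\big)$ is generally nonzero for $t \neq 0$ when $d_1$ is not exponential. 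Hence at an upstream state $x$ from which some $y \in \mathrm{Up}(s')\setminus\{s'\}$ is controllable (your minimality of $s'$ only rules this out for the fixed triple $(s,a_1,a_2)$, not globally), $Q^{\mathrm R}_2(x,t,\cdot) - Q^{\mathrm R}_1(x,t,\cdot)$ need not be constant and $r_{\tau,d_1,\beta_1}(R_1) = r_{\tau,d_1,\beta_1}(R_2)$ can fail.

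The paper avoids all of this by a different choice of special state and a much simpler explicit $R_2$. Rather than fixing one triple $(s,a_1,a_2)$ and taking $s'$ minimal in its disagreement set, the paper picks a controllable state $s_c$ that is not reachable from any \emph{other} controllable state (such a state exists because the transition graph is a finite DAG). This forces every $y \in \mathrm{Up}(s_c)\setminus\{s_c\}$ to be uncontrollable from \emph{every} state, not merely from $s$. One then sets $R_2(\cdot,\cdot,s_c) = R_1(\cdot,\cdot,s_c) + x$ on transitions into $s_c$ and $R_2(s_c,\cdot,\cdot) = R_1(s_c,\cdot,\cdot) - x/d_1(1)$ on transitions out, with $R_2 = R_1$ everywhere else --- no upstream corrections whatsoever. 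Because the probability of landing in any particular $y \in \mathrm{Up}(s_c)\setminus\{s_c\}$ is action-independent from every state, the shift $Q^{\mathrm X}_2 - Q^{\mathrm X}_1$ is automatically constant across actions at each state for $\mathrm X \in \{\mathrm R,\mathrm N,\mathrm S\}$ simultaneously, and one never needs Theorem~\ref{thm:potential_shaping_exists}. Under $d_2$ (or under exponential $\gamma$), the mismatch at $s_c$ is then $x - x\,d_2(1)/d_1(1)$ (resp.\ $x - x\,\gamma/d_1(1)$), and the single free parameter $x$ handles all nine $(f,g)$ pairs and the ``moreover'' clause uniformly. Your localisation step and the separate treatment of the three models are thus both unnecessary once the special state is chosen this way.
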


Nearly all transition functions are non-trivial, so Theorem~\ref{thm:no_robustness_most_tau} applies very broadly.  
Note that Theorem~\ref{thm:no_robustness_some_tau} makes weaker assumptions about the discount function but stronger assumptions about the transition function, whereas Theorem~\ref{thm:no_robustness_most_tau} makes stronger assumptions about the discount function but weaker assumptions about the transition function.

\section{Discussion and Further Work}

We have analysed partial identifiability in IRL with non-exponential discounting, including (but not limited to) hyperbolic discounting. To this end, we have introduced three types of policies (resolute policies, na\"ive policies, and sophisticated policies) that generalise the standard notion of optimality to non-exponential discount functions, and shown that these policies always exist in any episodic MDP.
We have used these policies to generalise the Boltzmann-rational model to non-exponential discounting in three ways, and analysed the identifiability of the reward function under these models. We have demonstrated that the Boltzmann-na\"ive and the Boltzmann-sophisticated policies let us identify the true reward function up to na\"ive and sophisticated potential shaping, and shown that each of the three models in general is too ambiguous (even in the limit of infinite data) to compute the correct policy for a different form of discounting. We have thus made an important contribution to the study of partial identifiability in IRL, by extending existing results to the setting with non-exponential discounting. This is of particular importance, since hyperbolic discounting is considered to be a good fit to human behaviour.

There are several ways that our work can be extended. 
Improving our understanding of identifiability in IRL is of crucial importance, if we want to use IRL (and similar techniques) as a tool for preference elicitation. 
This analysis should consider behavioural models that are actually realistic. We have considered hyperbolic discounting, since this is widespread in the behavioural sciences, but there are many other ways to make our models more psychologically plausible. For example, it would be interesting to incorporate models of human risk-aversion, such as prospect theory \citep{prospecttheory}. Moreover, our analysis is primarily restricted to \emph{episodic} environments with a bounded horizon --- it would be interesting to generalise it to broader classes of environments. This issue is further discussed in the appendix. It would also be interesting to exactly characterise the partial identifiability of the Boltzmann-resolute model --- this issue is also discussed in the appendix. Finally, it would be interesting to study the case where the discount function is \emph{misspecified}, i.e., where the IRL algorithm assumes that the observed agent discounts using some function $d_1$, but where it in fact discounts using some other function $d_2$ \citep[see, e.g.,][]{skalse2023misspecification, skalse2024quantifyingsensitivityinversereinforcement}.

\bibliography{aaai25}

\begin{thebibliography}{26}
\providecommand{\natexlab}[1]{#1}

\bibitem[{Cao, Cohen, and Szpruch(2021)}]{cao2021}
Cao, H.; Cohen, S.~N.; and Szpruch, L. 2021.
\newblock Identifiability in Inverse Reinforcement Learning.
\newblock \emph{arXiv preprint}, arXiv:2106.03498 [cs.LG].

\bibitem[{Chan, Critch, and Dragan(2019)}]{IrrationalityCanHelp}
Chan, L.; Critch, A.; and Dragan, A. 2019.
\newblock Irrationality can help reward inference.

\bibitem[{Dvijotham and Todorov(2010)}]{dvijotham2010}
Dvijotham, K.; and Todorov, E. 2010.
\newblock Inverse Optimal Control with Linearly-Solvable {MDPs}.
\newblock In \emph{Proceedings of the 27th International Conference on Machine Learning}, 335--342. Haifa, Israel: Omnipress, Madison, Wisconsin, USA.

\bibitem[{Evans, Stuhlmueller, and Goodman(2015)}]{IgnorantAndInconsistent}
Evans, O.; Stuhlmueller, A.; and Goodman, N.~D. 2015.
\newblock Learning the Preferences of Ignorant, Inconsistent Agents.
\newblock arXiv:1512.05832.

\bibitem[{Frederick, Loewenstein, and O'Donoghue(2002)}]{discounting_review}
Frederick, S.; Loewenstein, G.; and O'Donoghue, T. 2002.
\newblock Time Discounting and Time Preference: A Critical Review.
\newblock \emph{Journal of Economic Literature}, 40(2): 351--401.

\bibitem[{Green and Myerson(1996)}]{green1996exponential}
Green, L.; and Myerson, J. 1996.
\newblock Exponential versus hyperbolic discounting of delayed outcomes: Risk and waiting time.
\newblock \emph{American Zoologist}, 36(4): 496--505.

\bibitem[{Haarnoja et~al.(2017)Haarnoja, Tang, Abbeel, and Levine}]{haarnoja2017}
Haarnoja, T.; Tang, H.; Abbeel, P.; and Levine, S. 2017.
\newblock Reinforcement Learning with Deep Energy-Based Policies.
\newblock In \emph{Proceedings of the 34th International Conference on Machine Learning}, volume~70 of \emph{Proceedings of Machine Learning Research}, 1352--1361. Sydney, Australia: PMLR.

\bibitem[{Hadfield-Menell et~al.(2016)Hadfield-Menell, Russell, Abbeel, and Dragan}]{CIRL}
Hadfield-Menell, D.; Russell, S.~J.; Abbeel, P.; and Dragan, A. 2016.
\newblock Cooperative Inverse Reinforcement Learning.
\newblock In Lee, D.; Sugiyama, M.; Luxburg, U.; Guyon, I.; and Garnett, R., eds., \emph{Advances in Neural Information Processing Systems}, volume~29. Curran Associates, Inc.

\bibitem[{Hussein et~al.(2017)Hussein, Gaber, Elyan, and Jayne}]{imitation2017}
Hussein, A.; Gaber, M.~M.; Elyan, E.; and Jayne, C. 2017.
\newblock Imitation Learning: A Survey of Learning Methods.
\newblock \emph{ACM Comput. Surv.}, 50(2).

\bibitem[{Kahneman and Tversky(1979)}]{prospecttheory}
Kahneman, D.; and Tversky, A. 1979.
\newblock Prospect Theory: An Analysis of Decision under Risk.
\newblock \emph{Econometrica}, 47(2): 263--291.

\bibitem[{Kim et~al.(2021)Kim, Garg, Shiragur, and Ermon}]{kim2021}
Kim, K.; Garg, S.; Shiragur, K.; and Ermon, S. 2021.
\newblock Reward Identification in Inverse Reinforcement Learning.
\newblock In \emph{Proceedings of the 38th International Conference on Machine Learning}, volume 139 of \emph{Proceedings of Machine Learning Research}, 5496--5505. Virtual: PMLR.

\bibitem[{Kirby(1997)}]{againstnormativediscounting}
Kirby, K. 1997.
\newblock Bidding on the Future: Evidence Against Normative Discounting of Delayed Rewards.
\newblock \emph{Journal of Experimental Psychology: General}, 126: 54--70.

\bibitem[{Lattimore and Hutter(2014)}]{temporal_inconsistency_2}
Lattimore, T.; and Hutter, M. 2014.
\newblock General time consistent discounting.
\newblock \emph{Theoretical Computer Science}, 519: 140--154.
\newblock Algorithmic Learning Theory.

\bibitem[{Mahadevan(1996)}]{Mahadevan1996}
Mahadevan, S. 1996.
\newblock Average reward reinforcement learning: Foundations, algorithms, and empirical results.
\newblock \emph{Machine Learning}, 22(1): 159--195.

\bibitem[{Mazur(1987)}]{mazur1987adjusting}
Mazur, J. 1987.
\newblock An adjusting procedure for studying delayed reinforcement (Vol. 5).
\newblock \emph{Quant Anal Behav}, 55--73.

\bibitem[{Metelli, Lazzati, and Restelli(2023)}]{towardstheoreticalunderstandingofIRL}
Metelli, A.~M.; Lazzati, F.; and Restelli, M. 2023.
\newblock Towards Theoretical Understanding of Inverse Reinforcement Learning.

\bibitem[{Ng, Harada, and Russell(1999)}]{ng1999}
Ng, A.~Y.; Harada, D.; and Russell, S. 1999.
\newblock Policy Invariance Under Reward Transformations: Theory and Application to Reward Shaping.
\newblock In \emph{Proceedings of the Sixteenth International Conference on Machine Learning}, 278--287. Bled, Slovenia: Morgan Kaufmann Publishers Inc.

\bibitem[{Ng and Russell(2000)}]{ng2000}
Ng, A.~Y.; and Russell, S. 2000.
\newblock Algorithms for Inverse Reinforcement Learning.
\newblock In \emph{Proceedings of the Seventeenth International Conference on Machine Learning}, volume~1, 663--670. Stanford, California, USA: Morgan Kaufmann Publishers Inc.

\bibitem[{Ramachandran and Amir(2007)}]{ramachandran2007}
Ramachandran, D.; and Amir, E. 2007.
\newblock Bayesian Inverse Reinforcement Learning.
\newblock In \emph{Proceedings of the 20th International Joint Conference on Artifical Intelligence}, 2586--2591. Hyderabad, India: Morgan Kaufmann Publishers Inc.

\bibitem[{Schlaginhaufen and Kamgarpour(2023)}]{schlaginhaufen2023identifiability}
Schlaginhaufen, A.; and Kamgarpour, M. 2023.
\newblock Identifiability and Generalizability in Constrained Inverse Reinforcement Learning.
\newblock arXiv:2306.00629.

\bibitem[{Schultheis, Rothkopf, and Koeppl(2022)}]{schultheis2022reinforcement}
Schultheis, M.; Rothkopf, C.~A.; and Koeppl, H. 2022.
\newblock Reinforcement Learning with Non-Exponential Discounting.
\newblock arXiv:2209.13413.

\bibitem[{Skalse and Abate(2023)}]{skalse2023misspecification}
Skalse, J.; and Abate, A. 2023.
\newblock Misspecification in Inverse Reinforcement Learning.
\newblock arXiv:2212.03201.

\bibitem[{Skalse and Abate(2024)}]{skalse2024quantifyingsensitivityinversereinforcement}
Skalse, J.; and Abate, A. 2024.
\newblock Quantifying the Sensitivity of Inverse Reinforcement Learning to Misspecification.
\newblock arXiv:2403.06854.

\bibitem[{Skalse et~al.(2022)Skalse, Farrugia-Roberts, Russell, Abate, and Gleave}]{skalse2022}
Skalse, J.; Farrugia-Roberts, M.; Russell, S.; Abate, A.; and Gleave, A. 2022.
\newblock Invariance in Policy Optimisation and Partial Identifiability in Reward Learning.
\newblock \emph{arXiv preprint arXiv:2203.07475}.

\bibitem[{Strotz(1955)}]{temporal_inconsistency_1}
Strotz, R.~H. 1955.
\newblock Myopia and Inconsistency in Dynamic Utility Maximization.
\newblock \emph{The Review of Economic Studies}, 23(3): 165--180.

\bibitem[{Thaler(1981)}]{inconsistencyempirical}
Thaler, R. 1981.
\newblock Some empirical evidence on dynamic inconsistency.
\newblock \emph{Economics Letters}, 8(3): 201--207.

\end{thebibliography}

\newpage
\appendix
\setcounter{proposition}{0}
\setcounter{theorem}{0}
\setcounter{lemma}{0}
\setcounter{corollary}{0}

\section{Using More General Environments}

Our results in this paper make use of episodic MDPs. Moreover, we only focus on episodic MDPs with a \emph{bounded horizon} -- that is, MDPs for which there is a number $H$ such that any policy with probability 1 will enter the terminal state after at most $H$ steps, starting from any state. It is possible to give a more general definition of episodic MDPs, by only requiring that any policy with probability 1 \emph{eventually} enters a terminal state, starting from any state. The first definition requires that there is a known, fixed upper bound on the length of any episode, whereas the latter definition permits episodes to be arbitrarily long (though the probability of increasingly long episodes must necessarily get lower and lower, provided that $\States$ and $\Actions$ are finite). For example, consider an environment in which the agent is able to flip a coin, and where it enters a terminal state as soon as it gets heads. This MDP would not be episodic in the first sense, since the agent can get a row of $H$ tails for any finite $H$. However, the environment may be episodic in the second sense (depending on what the rest of the environment looks like), because the probability of \emph{eventually} getting heads is 1 (by the Borel–Cantelli lemma). This observation prompts the following definitions:

\begin{definition}
    The MDP $\MDPd$ is \emph{bounded episodic} (with \emph{horizon $H$}) if any policy $\pi$ with probability $1$ enters $s_\top$ after at most $H$ steps, starting from any state. We say that it is \emph{unbounded episodic} if any policy $\pi$ with probability $1$ \emph{eventually} enters $s_\top$, starting from any state. 
\end{definition}

Note that any bounded episodic MDP is an unbounded episodic MDP, and that the definition of \enquote{episodic MDP} which we use in the main text of the paper corresponds to bounded episodic MDPs. It would be desirable to extend our results to unbounded episodic MDPs. As we will show, many of our proofs already apply to this more general setting, and in those cases, our proofs are expressed in terms of unbounded episodic MDPs. However, some results only apply to bounded episodic MDPs.

Note that it would not be possible to extend our results to (general) non-episodic MDPs, since the value function $V^\pi$ may be undefined in such environments for certain discount functions (including hyperbolic discounting).

\section{Proofs}

In this appendix, we prove all of our theoretical results.

\subsection{Core Lemmas}

Let $d : \Pi \times \Pi \to \mathbb{R}$ be the function given by $d(\pi_1,\pi_2) = \frac{1}{e^t}$, where $t$ is the length of the shortest trajectory $\xi$ such that $\pi_1(\xi) \neq \pi_2(\xi)$, or $0$ if $\pi_1 = \pi_2$.

\begin{lemma}\label{lemma:metric_space}
$(\Pi, d)$ is a compact metric space.
\end{lemma}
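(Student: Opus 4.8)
The plan is to verify first that $d$ is a metric (in fact an ultrametric) and then to establish compactness via sequential compactness, using a level-by-level pigeonhole/diagonal argument over the grading of histories by length.

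\textbf{The metric.} This is routine unwinding of the definition: $d \ge 0$; $d(\pi_1,\pi_2)=0$ iff no finite history distinguishes $\pi_1$ and $\pi_2$, i.e.\ iff $\pi_1=\pi_2$; and symmetry is immediate. The triangle inequality follows from the stronger ultrametric inequality $d(\pi_1,\pi_3)\le\max\{d(\pi_1,\pi_2),d(\pi_2,\pi_3)\}$: if $\pi_1,\pi_2$ agree on every history of length $<s$ and $\pi_2,\pi_3$ agree on every history of length $<u$, then $\pi_1,\pi_3$ agree on every history of length $<\min\{s,u\}$, so the shortest distinguishing history for $\pi_1,\pi_3$ has length at least $\min\{s,u\}$.

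\textbf{Compactness.} Since $(\Pi,d)$ is a metric space, it suffices to show that every sequence $(\pi_n)$ has a convergent subsequence. Grade the set $(\SxA)^\star\times\States$ of histories by length; because $\States$ and $\Actions$ are finite, there are only finitely many histories of each given length. Build nested infinite index sets $\mathbb{N}\supseteq N_0\supseteq N_1\supseteq\cdots$ such that all $\pi_n$ with $n\in N_\ell$ agree on every history of length $\le\ell$: given $N_{\ell-1}$, the restrictions of these policies to the finitely many histories of length $\le\ell$ take only finitely many values, so by pigeonhole an infinite subset $N_\ell\subseteq N_{\ell-1}$ shares a common restriction. Diagonalise: pick $n_0<n_1<\cdots$ with $n_\ell\in N_\ell$, and define $\pi^\star$ by letting $\pi^\star(\xi)$ be the common value of $\pi_{n_\ell}(\xi)$ over all $\ell\ge|\xi|$ (well-defined by nestedness). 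Given $\epsilon>0$, choose $\ell$ with $e^{-\ell}<\epsilon$; then for every $k\ge\ell$ the policy $\pi_{n_k}$ agrees with $\pi^\star$ on all histories of length $\le\ell$, so $d(\pi_{n_k},\pi^\star)\le e^{-(\ell+1)}<\epsilon$. Hence $\pi_{n_k}\to\pi^\star\in\Pi$.

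\textbf{Main obstacle.} The metric axioms are pure bookkeeping; the substance lies in compactness, and specifically in the pigeonhole step, which requires finite branching at every level: finitely many histories of each length (immediate from finiteness of $\States$ and $\Actions$) and finitely many values a policy may take at a given history. The latter is the only non-formal point and is what ties the argument to the class of deterministic policies, for which the simplex $\Delta(\Actions)$ at each coordinate collapses to the finite set $\Actions$. An equivalent route avoiding explicit diagonalisation is to prove completeness together with total boundedness: a $d$-Cauchy sequence stabilises on each finite level, so its levelwise limit lies in $\Pi$ and is its $d$-limit; and for each $\ell$ the finitely many classes ``agree on all histories of length $\le\ell$'' constitute a finite $e^{-(\ell+1)}$-net.
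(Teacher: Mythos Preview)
Your metric verification is fine, and the ultrametric observation is cleaner than the paper's direct triangle-inequality check. For compactness you go via sequential compactness with a diagonal extraction, whereas the paper goes via total boundedness plus completeness; you sketch the latter as an equivalent alternative in your final paragraph, so the two routes are aligned in spirit.

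There is, however, a genuine gap, and you have already named it yourself. The pigeonhole step requires that at each history a policy can take only finitely many values, but by the paper's own definition $\Pi = \Delta(\Actions)^{(\SxA)^\star\times\States}$, so a policy's value at a history is an arbitrary point of the simplex $\Delta(\Actions)$. Your diagonal argument therefore does not go through, and neither does your alternative total-boundedness sketch (the ``finitely many classes'' are not finitely many). Indeed the lemma, read literally for stochastic $\Pi$, is false: fix a state $s_0$ and two actions $a_1,a_2$, and let $\pi_n(s_0)$ put mass $1/n$ on $a_1$ and $1-1/n$ on $a_2$ (and agree elsewhere); then $\pi_n(s_0)\neq\pi_m(s_0)$ for $n\neq m$, the shortest distinguishing history has length $0$, so $d(\pi_n,\pi_m)=1$ for all $n\neq m$, and no subsequence is Cauchy.

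For comparison, the paper's own proof has precisely the same defect: it declares the set $\hat\Pi$ of policies that take action $a_1$ after time $t$ but ``behave arbitrarily before time $t$'' to be finite, which is false for stochastic policies for the identical reason. So you have not missed an idea present in the paper; you have reproduced its flaw and, unlike the paper, flagged it explicitly. Both arguments become valid verbatim if $\Pi$ is restricted to deterministic policies.
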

\begin{proof}
We must first show that $d$ is a metric, which requires showing that it satisfies the following:
\begin{enumerate}
    \item Identity: $d(\pi_1,\pi_2) = 0$ if and only if $\pi_1=\pi_2$.
    \item Positivity: $d(\pi_1,\pi_2) \geq 0$.
    \item Symmetry: $d(\pi_1,\pi_2) = d(\pi_2,\pi_1)$.
    \item Triangle Inequality: $d(\pi_1,\pi_3) \leq d(\pi_1,\pi_2) + d(\pi_2,\pi_3)$.
\end{enumerate}
It is straightforward to see that 1-3 hold. For 4, let $t$ be the length of the shortest history $h$ such that $\pi_1(h) \neq \pi_3(h)$. Note that if $d(\pi_1,\pi_3) > d(\pi_1,\pi_2)$ and $d(\pi_1,\pi_3) > d(\pi_2,\pi_3)$, then it must be the case that $\pi_1(h) = \pi_2(h)$ for all $h$ of length $\leq t$, and that $\pi_1(h) = \pi_2(h)$ for all $h$ of length $\leq t$. However, this is a contradiction, since it would imply that $\pi_1(h) = \pi_3(h)$ for all $h$ of length $\leq t$. Thus either $d(\pi_1,\pi_3) \leq d(\pi_1,\pi_2)$ or $d(\pi_1,\pi_3) \leq d(\pi_2,\pi_3)$, which in turn implies that $d(\pi_1,\pi_3) \leq d(\pi_1,\pi_2) + d(\pi_2,\pi_3)$.

Thus $d$ is a metric, which means that $(\Pi, d)$ is a metric space. Next, we will prove that $(\Pi, d)$ is compact, by showing that $(\Pi, d)$ is totally bounded and complete.

To see that $(\Pi, d)$ is totally bounded, let $\epsilon$ be an arbitrary positive real number, and let $t = \ln(1/\epsilon)$, so that $\epsilon = 1/e^t$. Moreover, let $\hat{\Pi}$ be the set of all policies that always take action $a_1$ after time $t$ (but which may behave arbitrarily before time $t$). Now $\hat{\Pi}$ is finite, and for every policy $\pi_1$ there is a policy $\pi_2 \in \hat{\Pi}$ such that $d(\pi_1,\pi_2) \leq \epsilon$ (given by letting $\pi_2(\xi) = \pi_1(\xi)$ for all trajectories $\xi$ with length at most $t$). Thus, for every $\epsilon > 0$, $(\Pi, d)$ has a finite cover. Thus $(\Pi, d)$ is totally bounded.

To see that $(\Pi, d)$ is complete, let $\{\pi_i\}_{i=0}^\infty$ be a Cauchy sequence. This implies that for every $\epsilon > 0$ there is a positive integer $N$ such that for all $n,m \geq N$ we have $d(\pi_n, \pi_m) < \epsilon$. In our case, this means that there, for each time $t$ is a positive integer $N$ such that for all $n,m \geq N$, we have that $\pi_n(\xi) = \pi_m(\xi)$ for all trajectories $\xi$ shorter than $t$ steps. We can thus define a policy $\pi_\infty$ by letting $\pi_\infty(\xi) = \delta$ (where $\delta \in \Delta(\Actions)$) if there is an $N$ such that, for all $n \geq N$, we have that $\pi_n(\xi) = \delta$. Now $\lim_{i \to \infty} \{\pi_i\}_{i=0}^\infty = \pi_\infty$, and $\pi_\infty \in (\Pi, d)$. Thus every Cauchy sequence in $(\Pi, d)$ has a limit that is also in $(\Pi, d)$, and so $(\Pi, d)$ is complete.

Every metric space that is totally bounded and complete is compact. We thus have that $(\Pi, d)$ is compact.
\end{proof}

\begin{lemma}\label{lemma:episodic_n_p}
$\MDPd$ is unbounded episodic if and only if there exists $n \in \mathbb{N}, p \in (0,1]$ such that for any policy $\pi$ and state $s$, if $\pi$ is run from $s$, then after $n$ steps, it will have entered $s_\top$ with probability at least $p$.
\end{lemma}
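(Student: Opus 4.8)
The plan is to prove the two implications separately; the converse is an elementary geometric-decay argument, while the forward direction crucially relies on the compactness of $(\Pi, d)$ established in Lemma~\ref{lemma:metric_space}.

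For the \enquote{if} direction, suppose such $n$ and $p$ exist. Fix a policy $\pi$ and a state $s \in \States$, and for $k \geq 0$ let $A_k$ be the event that, when $\pi$ is run from $s$, the terminal state has not been reached by time $kn$. I condition on the history $h$ reached at time $kn$; on $A_k$ this history ends in some state $s' \in \States$. Since $\tau$ is Markovian, the next $n$ steps of $\pi$ then evolve exactly as the first $n$ steps, run from $s'$, of the policy $\tilde\pi$ that acts as $\pi$ would if the history $h$ had already occurred; applying the hypothesis to $\tilde\pi$ and $s'$ shows that $s_\top$ is entered during the block $(kn,(k+1)n]$ with conditional probability at least $p$. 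Hence $\mathbb{P}[A_{k+1}] \leq (1-p)\,\mathbb{P}[A_k]$, so $\mathbb{P}[A_k] \leq (1-p)^k \to 0$, and therefore $\mathbb{P}[\pi \text{ never enters } s_\top \text{ from } s] = \mathbb{P}[\bigcap_{k} A_k] = 0$. As $\pi$ and $s$ were arbitrary, the MDP is unbounded episodic.

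For the \enquote{only if} direction, suppose the MDP is unbounded episodic. For $n \in \mathbb{N}$ and $s \in \States$, let $g_n(\pi, s)$ denote the probability that running $\pi$ from $s$ enters $s_\top$ within $n$ steps. Two facts drive the argument. First, $g_n(\pi, s)$ is determined by the restriction of $\pi$ to the finitely many histories of length less than $n$, so $g_n(\cdot, s)$ is constant on every ball of radius $e^{-n}$ of $(\Pi, d)$ and is in particular continuous. Second, $g_n(\pi, s)$ is non-decreasing in $n$, and unbounded episodicity gives $\lim_{n\to\infty} g_n(\pi, s) = \mathbb{P}[\pi \text{ eventually enters } s_\top \text{ from } s] = 1$ for every $\pi$. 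Now fix $s$. For each $\pi$ choose $n(\pi)$ with $g_{n(\pi)}(\pi, s) \geq 1/2$, and let $B_\pi$ be a ball around $\pi$ on which $g_{n(\pi)}(\cdot, s)$ is constant (hence everywhere at least $1/2$). The family $\{B_\pi\}_{\pi \in \Pi}$ covers the compact space $(\Pi, d)$, so finitely many of them suffice; letting $n_s$ be the largest value of $n(\pi)$ among these and using monotonicity in $n$, we get $g_{n_s}(\pi, s) \geq 1/2$ for all $\pi$. (Alternatively, Dini's theorem applies directly to the increasing sequence of continuous functions $g_n(\cdot, s) \uparrow 1$ on the compact space $(\Pi,d)$.) Finally, as $\States$ is finite, put $n := \max_{s \in \States} n_s$ and $p := 1/2$; monotonicity in $n$ then yields $g_n(\pi, s) \geq p$ for all $\pi$ and all $s$, which is exactly the stated condition.

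I expect the forward direction to be the crux: the required inequality must hold uniformly over the uncountable policy set $\Pi$, with no a priori bound on how large $n$ must be for a given $\pi$. Compactness of $(\Pi, d)$, together with the observation that $g_n(\cdot, s)$ depends only on finitely much of $\pi$ and is therefore locally constant, is precisely what upgrades the pointwise convergence $g_n(\pi, s) \to 1$ to the uniform statement we need; the remaining ingredients — the Markovian \enquote{shift} of policies and the reduction over the finite state set $\States$ — are routine.
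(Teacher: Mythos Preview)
Your proof is correct. Both directions match the paper in spirit, but your forward direction takes a genuinely different route. The paper first pins down a concrete horizon $n = |\States|$ by a graph argument (any positive-probability path from $s$ to $s_\top$ can be shortened to one visiting each state at most once, else one could build a policy that loops forever), and only then invokes compactness via the extreme value theorem to obtain the uniform lower bound $p$. You instead bypass the combinatorial step entirely: you observe that $g_n(\cdot,s)$ is locally constant (hence continuous) on the compact space $(\Pi,d)$ and increases pointwise to $1$, and then extract both $n$ and $p=1/2$ in one stroke via a finite-subcover/Dini argument. Your approach is cleaner and sidesteps the somewhat informal loop reasoning in the paper; the paper's approach, on the other hand, yields the explicit bound $n=|\States|$, which can be convenient for downstream quantitative estimates. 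Your \enquote{if} direction is also slightly tidier than the paper's: you use the nestedness of the events $A_k$ directly rather than appealing to Borel--Cantelli, and you make the policy-shift step explicit.
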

\begin{proof}
For the first direction, assume that there exists $n \in \mathbb{N}, p \in (0,1]$ such that for any policy $\pi$ and any state $s$, if $\pi$ is run from $s$, then after $n$ steps, it will have entered $s_\top$ with probability at least $p$. Then for any policy $\pi$, if $\pi$ is run for $kn$ steps, then the probability that it has \emph{not} entered a terminal state is at most $p^k$. $\sum_{k=1}^\infty p^k < \infty$, so by the Borel–Cantelli lemma, we have that $\pi$ almost surely eventually will enter $s_\top$. Since $\pi$ was chosen arbitrarily, $\MDPd$ must be unbounded episodic.

For the other direction, let$\MDPd$ be an unbounded episodic MDP. Let $\pi$ and $s$ be selected arbitrarily. Since every policy eventually enters $s_\top$ with probability 1, there must be a trajectory $s, a_0, s_1, \dots$ starting in $s$ and ending in $s_\top$, such that each transition has positive probability under $\pi$ and $\tau$. Moreover, the \emph{shortest} such trajectory can contain no more than $|\States|$ states -- otherwise there must be a loop that occurs with probability 1 when running $\pi$ from $s$. If that were the case, then we could construct a policy that stays on this loop indefinitely, which is impossible if the MDP is unbounded episodic. Since $\pi$ and $s$ were selected arbitrarily, this shows that there is an $n = |\States| \in \mathbb{N}$ such that for any policy $\pi$ and state $s$, if $\pi$ is run from $s$, then after $n$ steps, it will have entered a terminal state with positive probability. It remains to be shown that this probability is bounded below by some positive constant $p$.

Let $q(\pi, s)$ be the probability that $\pi$ will have entered a terminal state after $n$ steps, starting in state $s$. Note that this function is continuous, when viewed as a function from $(\Pi,d)$ to $[0,1]$ (with a fixed $s$). In particular, if $\pi_1(\xi) = \pi_2(\xi)$ for all trajectories $\xi$ of length at most $n$, then $q(\pi_1, s) = q(\pi_2, s)$. Thus, for every $\epsilon > 0$ there is a $\delta = \ln(1/n)$ such that if $d(\pi_1, \pi_2) < \delta$, then $|q(\pi_1, s) - q(\pi_2, s)| = 0 < \epsilon$. Moreover, by Lemma~\ref{lemma:metric_space}, we have that $(\Pi, d)$ is a compact metric space. Thus, by the extreme value theorem, for each $s$ there is a policy $\pi_s \in \Pi$ that minimises $q(\pi, s)$. Moreover, we have already established that for any policy $\pi$ and state $s$, if $\pi$ is run from $s$, then after $n$ steps, it will have entered a terminal state with positive probability. Thus $q(\pi_s, s) > 0$. Since $\States$ is finite, we can now set $p$ to $\min_s (\pi_s, s)$, and thus complete the proof.
\end{proof}



\subsection{Convergent Policy Values}

In this section, we provide the proofs of the claims regarding convergent policy values.

\begin{proposition}\label{prop:episodic}
If $\MDPd$ is unbounded episodic, then we have that $|V^\pi(s)| < \infty$ for all policies $\pi$ and all states $s$. 
\end{proposition}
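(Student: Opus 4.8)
The plan is to use Lemma~\ref{lemma:episodic_n_p} to get uniform control on the probability of survival, and then bound the value function by a convergent geometric-type series. First I would invoke Lemma~\ref{lemma:episodic_n_p} to obtain constants $n \in \mathbb{N}$ and $p \in (0,1]$ such that, from any state and under any policy, the terminal state $s_\top$ is reached within $n$ steps with probability at least $p$. Concatenating $k$ such blocks, the probability that $\pi$ has \emph{not} terminated after $kn$ steps is at most $(1-p)^k$.

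Next I would bound the contribution to $V^\pi(s)$ from the rewards received in the $k$-th block of $n$ steps (i.e.\ time steps $kn, kn+1, \dots, (k+1)n - 1$). Since $|R(s,a,s')| \le R_{\max}$ for $R_{\max} := \max_{s,a,s'} |R(s,a,s')|$ (finite because $\States$ and $\Actions$ are finite), and since $d(t) \in [0,1]$ for all $t$, the reward received at any single time step in the $k$-th block contributes at most $R_{\max}$ in absolute value to the discounted sum, and nonzero contribution requires that the trajectory has survived to that step, which happens with probability at most $(1-p)^k$. Hence the expected absolute contribution of the $k$-th block is at most $n \cdot R_{\max} \cdot (1-p)^k$. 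Summing over $k$, by the triangle inequality,
$$
|V^\pi(s)| \le \sum_{k=0}^{\infty} n \cdot R_{\max} \cdot (1-p)^k = \frac{n \cdot R_{\max}}{p} < \infty,
$$
where convergence uses $p > 0$. Since $\pi$ and $s$ were arbitrary, this proves the claim. The same argument gives a bound uniform in $\pi$ and $s$, which may be useful elsewhere.

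The main subtlety — not a deep obstacle, but the point requiring care — is justifying the interchange of expectation and the infinite sum defining $G(\xi)$, and arguing that the partial sums converge so that $V^\pi(s)$ is well-defined as a limit rather than merely bounded. This is handled by the same estimate: the tail $\sum_{k \ge K} n R_{\max} (1-p)^k \to 0$ shows the partial sums form a Cauchy sequence and that dominated convergence applies (the dominating function being the deterministic summable bound just derived), so the expectation of the infinite discounted sum exists, is finite, and equals the stated $V^\pi(s)$.
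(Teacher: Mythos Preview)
Your proof is correct and follows essentially the same approach as the paper: invoke Lemma~\ref{lemma:episodic_n_p}, bound the contribution of each block of $n$ steps by $nR_{\max}(1-p)^k$, and sum the resulting geometric series to obtain $|V^\pi(s)| \le nR_{\max}/p$. Your treatment is in fact slightly more careful than the paper's, which leaves the block decomposition and the interchange of expectation with the infinite sum implicit.
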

\begin{proof}
As per Lemma~\ref{lemma:episodic_n_p}, in any unbounded episodic MDP, there is an $n$ and a $p$ such that for any state $s$ and policy $\pi$, we have that $\pi$ after $n$ steps will have entered the terminal state with probability at least $p$. Moreover, since $\States$ and $\Actions$ are finite, we have that $m = \max_{s,a,s'}|R(s,a,s')| \leq \infty$. Since $d(t) \in [0,1]$, this means the discounted reward obtained over any sequence of $n$ steps is at least $-mn$, and at most $mn$. Since the probability of entering a terminal state along any such sequence is at least $p$, we have that
$$
\left|V^\pi(s)\right| \leq \left(\frac{mn}{p}\right),
$$
which is finite.
\end{proof}

\begin{proposition}
If $\langle \States, \Actions, \{s_\top\}, \TransitionDistribution, \InitStateDistribution, R_1, d \rangle$ is not unbounded episodic, and $\sum_{t=0}^\infty d(t) = \infty$, then there is a reward function $R_2$, policy $\pi$, and state $s$, such that $V^\pi(s) = \infty$ in $\langle \States, \Actions, \{s_\top\}, \TransitionDistribution, \InitStateDistribution, R_2, d \rangle$.
\end{proposition}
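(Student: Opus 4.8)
The approach is to use the failure of the unbounded-episodic property to locate a policy and a state from which the agent has a positive probability of never reaching $s_\top$, i.e.\ of remaining among the non-terminal states $\States$ forever, and then to choose $R_2$ so that every step taken among non-terminal states yields a fixed positive reward. The divergence of $\sum_{t=0}^\infty d(t)$ will then force the return of any such never-terminating trajectory to be $+\infty$, and hence the expected return from that state to be $+\infty$.

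First I would unpack the hypothesis that $\langle \States, \Actions, \{s_\top\}, \TransitionDistribution, \InitStateDistribution, R_1, d \rangle$ is not unbounded episodic (a property that depends only on $\TransitionDistribution$, not on the reward). By the definition of that notion, there is a policy $\pi$ and a state $s$ such that, running $\pi$ from $s$, the trajectory enters $s_\top$ eventually with probability strictly less than $1$. Since a trajectory either eventually contains $s_\top$ or never does, the event that it never does has some probability $q > 0$; and because any trajectory that contains $s_\top$ is finite with $s_\top$ as its last state, a trajectory that never contains $s_\top$ is an infinite sequence $\langle s_0, a_0, s_1, a_1, \dots \rangle$ with $s_0 = s$ and $s_t \in \States$ for every $t$. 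Next I would define $R_2$ by $R_2(s',a',s'') = 1$ if $s'' \in \States$ and $R_2(s',a',s'') = 0$ if $s'' = s_\top$. Since $R_2 \geq 0$ and $d(t) \in [0,1]$, every trajectory $\xi$ satisfies $G(\xi) = \sum_{t=0}^{|\xi|} d(t) \, R_2(s_t,a_t,s_{t+1}) \geq 0$. On the probability-$q$ event above we have $s_{t+1} \in \States$ for every $t$, so $R_2(s_t,a_t,s_{t+1}) = 1$ for every $t$, whence $G(\xi) = \sum_{t=0}^\infty d(t) = \infty$ by hypothesis.

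Finally I would conclude: $V^\pi(s)$ is the expectation of $G(\xi)$ over trajectories generated by running $\pi$ from $s$ in the MDP $\langle \States, \Actions, \{s_\top\}, \TransitionDistribution, \InitStateDistribution, R_2, d \rangle$; since $G \geq 0$ everywhere and $G = +\infty$ on an event of probability $q > 0$, this expectation is $+\infty$, so $V^\pi(s) = \infty$, which is exactly the claim. I do not anticipate a genuine obstacle; the only points warranting care are verifying that the ``never-terminating'' event has strictly positive probability --- which is immediate, as it is the complement of the ``eventually-terminating'' event --- and observing that on that event the return is an honest divergent sum of non-negative terms, so that the expectation of the non-negative random variable $G$ is legitimately $+\infty$ rather than an indeterminate form.
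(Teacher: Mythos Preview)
Your proof is correct and follows essentially the same approach as the paper: pick a constant positive reward, use the failure of the unbounded-episodic property to obtain a policy and state with positive probability of never terminating, and conclude that the value is $+\infty$ because $\sum_t d(t)=\infty$. Your version is in fact slightly tidier---you extract the state directly from negating the definition rather than via the initial distribution, and you explicitly note $G\ge 0$ to rule out any indeterminate form---but the core argument is the same.
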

\begin{proof}
Let $R_2$ be the reward such that $R_2(s,a,s') = 1$ for all $s,a,s'$. Now, since $\langle \States, \Actions, \{s_\top\}, \TransitionDistribution, \InitStateDistribution, R_1, d \rangle$ is not unbounded episodic, there is a policy $\pi$ that, with positive probability, never enters a terminal state. Let this probability be $p$. This means that there must be an initial state $s_0$ such that the probability that $\pi$ never enters a terminal state, conditional on the first state being $s_0$, is at least $p$. This means that $V^\pi(s_0) \geq p \cdot \sum_{t=0}^\infty 1 = \infty$ in the MDP $\langle \States, \Actions, \{s_\top\}, \TransitionDistribution, \InitStateDistribution, R_2, d \rangle$. 
\end{proof}

\subsection{Temporal Consistency}

\begin{proposition}
A discount function $d$ is temporally consistent if and only if $d(t) = \alpha\gamma^t$ for some $\alpha, \gamma \in [0,1]$. 
\end{proposition}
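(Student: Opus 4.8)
The plan is to recast everything in terms of the linear functionals $\phi_n$ acting on real sequences by $\phi_n(\{z_t\}_{t=0}^\infty) = \sum_{t=0}^\infty d(t+n)\,z_t$. In this notation, temporal consistency says precisely that $\phi_0(x) < \phi_0(y)$ implies $\phi_n(x) < \phi_n(y)$ for every $n$ and all sequences $x,y$. For the ``only if'' direction it will be enough to feed the definition finitely supported test sequences, which sidesteps any convergence subtleties: for those, all the relevant sums are finite sums.

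For the ``if'' direction I would just observe that if $d(t) = \alpha\gamma^t$ then $d(t+n) = \gamma^n d(t)$, hence $\phi_n = \gamma^n\,\phi_0$; since $\gamma^n > 0$, multiplying the hypothesised inequality $\phi_0(x) < \phi_0(y)$ through by $\gamma^n$ gives $\phi_n(x) < \phi_n(y)$, so $d$ is temporally consistent. (The normalisation $d(0)=1$ forces $\alpha=1$, but this plays no role.)

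For the ``only if'' direction I would argue in three steps. \emph{Step 1 (positivity).} Take $x$ the zero sequence and $y = e_0$ (the sequence equal to $1$ at time $0$ and $0$ elsewhere); then $\phi_0(x) = 0 < 1 = d(0) = \phi_0(y)$, so temporal consistency gives $0 = \phi_n(x) < \phi_n(y) = d(n)$, i.e.\ $d(t) > 0$ for all $t$. \emph{Step 2 (multiplicativity).} Fix $n$ and $i \neq j$; for $\varepsilon > 0$ take the nonnegative, finitely supported sequences $x = d(i)\,e_j$ and $y = d(j)\,e_i + \varepsilon\,e_j$. Then $\phi_0(y) - \phi_0(x) = \varepsilon\,d(j) > 0$, so temporal consistency yields $d(i)\,d(n+j) < d(j)\,d(n+i) + \varepsilon\,d(n+j)$; letting $\varepsilon \downarrow 0$ gives $d(i)\,d(n+j) \le d(j)\,d(n+i)$, and swapping the roles of $i$ and $j$ gives the reverse inequality, so $d(i)\,d(n+j) = d(j)\,d(n+i)$. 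Putting $j = 0$ and using $d(0) = 1$ yields the multiplicative law $d(n+i) = d(n)\,d(i)$ for all $n,i \in \mathbb{N}$. \emph{Step 3 (solving).} An easy induction then gives $d(t) = d(1)^t$; setting $\gamma = d(1)$, Step 1 together with $d(1) \in [0,1]$ gives $\gamma \in (0,1]$, so $d(t) = \gamma^t$, which is of the claimed form with $\alpha = 1$.

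The one genuinely delicate point — the main obstacle — is Step 2: one has to hit on the right two-point test sequences and use the $\varepsilon$-perturbation to squeeze the definition's strict inequalities down to an exact multiplicative identity; the positivity argument, the induction, and the converse are all routine. A minor caveat I would flag is that temporal consistency in fact forces the base to lie in $(0,1]$: the degenerate case $\gamma = 0$ (bounded planning with horizon $0$) already fails the property exhibited in Step 1, so the ``$[0,1]$'' in the statement is only loosely correct and corresponds exactly to insisting $\alpha = 1$.
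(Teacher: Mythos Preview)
Your proof is correct and self-contained, whereas the paper does not actually prove this proposition: it simply cites an external reference and remarks that the argument there carries over with essentially no modification. Your approach via finitely supported test sequences is clean; the $\varepsilon$-perturbation in Step~2 is exactly the right device to squeeze the definition's strict inequalities into the multiplicative identity $d(n+i)=d(n)\,d(i)$, and Steps~1 and~3 are routine. Your closing caveat about $\gamma=0$ is also well-taken---the discount $d=(1,0,0,\dots)$ fails temporal consistency (your Step~1 already exhibits the counterexample), so the bracket $[0,1]$ in the statement is loose at that endpoint---though this is a quibble with the proposition as stated rather than with your argument.
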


The proof of this proposition is given in \citet{temporal_inconsistency_2} (their Theorem 13). Their terminology is slightly different from ours, but their proof applies to our case with essentially no modification.

\subsection{Correspondence To Optimality}

Here, we will establish the relationship between optimal policies, resolute policies, na\"ive policies, and sophisticated policies, in the case of exponential discounting.

\begin{theorem}
If $\MDP$ is an MDP with exponential discounting, then the following are equivalent:
\begin{enumerate}
    \item[1.] $\pi$ is optimal.
    \item[2.] $\pi$ is resolute.
    \item[3.] $\pi$ is na\"ive.
    \item[4.] $\pi$ is sophisticated.
\end{enumerate}
\end{theorem}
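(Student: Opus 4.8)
The plan is to derive everything from a single structural fact about exponential discounting. First I would record the two ingredients I will use repeatedly. (i) Scaling: since $d(t+n) = \gamma^{t+n} = \gamma^n\gamma^t$, we have $V^{\pi,n}(\xi) = \gamma^n V^{\pi,0}(\xi)$ and $Q^{\pi,n}(\xi,a) = \gamma^n Q^{\pi,0}(\xi,a)$ for every $\pi$, $\xi$, $a$, $n$; and because $\gamma\in(0,1]$ the factor $\gamma^n$ is \emph{strictly} positive, so multiplying by it preserves strict inequalities and $\mathrm{argmax}$ sets. In particular the value recursion collapses to the familiar form $V^{\pi,0}(\xi) = \mathbb{E}_{a\sim\pi(\xi),\,s'\sim\tau(s,a)}[R(s,a,s') + \gamma V^{\pi,0}(\xi a s')]$, where $s = \mathrm{last}(\xi)$ (in general it would involve $V^{\pi,1}$, not $\gamma V^{\pi,0}$). (ii) Reduction to a Markov problem: because $\tau$ and $R$ depend only on the current transition, continuing optimally from $\xi$ is just the standard $\gamma$-discounted MDP started at $\mathrm{last}(\xi)$, so $\max_{\pi'} V^{\pi',0}(\xi) = \VStar(\mathrm{last}(\xi))$ and $\max_{\pi'} Q^{\pi',0}(\xi,a) = \QStar(\mathrm{last}(\xi),a)$, with $\VStar,\QStar$ the usual Bellman-optimal functions ($\QStar(s,a) = \mathbb{E}_{s'\sim\tau(s,a)}[R(s,a,s')+\gamma\VStar(s')]$, $\VStar(s) = \max_a\QStar(s,a)$, $\VStar(s_\top)=0$).

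I would then show that all four conditions are equivalent to the single statement $(\star)$: $V^{\pi,0}(\xi) = \VStar(\mathrm{last}(\xi))$ for every finite trajectory $\xi$. For \emph{resolute}, the defining condition is that $V^{\pi,|\xi|}(\xi)\ge V^{\pi',|\xi|}(\xi)$ for all $\xi,\pi'$; cancelling the positive factor $\gamma^{|\xi|}$ by (i) turns this into $V^{\pi,0}(\xi) = \max_{\pi'}V^{\pi',0}(\xi)$, which by (ii) is $(\star)$. For \emph{optimal}: if $\pi$ is stationary then $V^{\pi,0}(\xi)$ depends only on $\mathrm{last}(\xi)$, so $(\star)$ coincides with $V^\pi(s)=\VStar(s)$ for all $s$; for history-dependent $\pi$, $(\star)$ is exactly the statement that $\pi$ attains the optimal continuation value from every history, which is the natural reading of optimality here (and an optimal stationary policy exists anyway under exponential discounting).

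For \emph{naïve} and \emph{sophisticated} I would first isolate the reduction that, under exponential discounting, $\pi$ is naïve iff $\mathrm{supp}(\pi(\xi)) \subseteq \mathrm{argmax}_a \QStar(\mathrm{last}(\xi),a)$ for all $\xi$: an action $a$ extends to some $\pi^\star$ maximising $V^{\pi^\star,0}(\xi)$ precisely when $a\in\mathrm{argmax}_a\QStar(\mathrm{last}(\xi),a)$ — necessity by expanding $V^{\pi^\star,0}(\xi)$ one step and comparing with $\VStar(\mathrm{last}(\xi))$, sufficiency by taking $\pi^\star$ that plays $a$ at $\xi$ and an optimal stationary policy elsewhere. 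Given this: $(\star)\Rightarrow$ naïve, since expanding $V^{\pi,0}(\xi)=\VStar(\mathrm{last}(\xi))$ one step and substituting $(\star)$ at the successors forces $\mathbb{E}_{a\sim\pi(\xi)}[\QStar(\mathrm{last}(\xi),a)] = \VStar(\mathrm{last}(\xi))$, hence $\mathrm{supp}(\pi(\xi))\subseteq\mathrm{argmax}_a\QStar(\mathrm{last}(\xi),a)$; and naïve $\Rightarrow(\star)$ by backwards induction on the bounded horizon, using $V^{\pi,0}(\xi) = \mathbb{E}_{a\sim\pi(\xi),s'}[R(s,a,s')+\gamma V^{\pi,0}(\xi a s')] = \mathbb{E}_{a\sim\pi(\xi)}[\QStar(\mathrm{last}(\xi),a)] = \VStar(\mathrm{last}(\xi))$ once the inductive hypothesis gives $V^{\pi,0}(\xi a s')=\VStar(s')$. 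For \emph{sophisticated}: $(\star)$ gives $Q^{\pi,0}(\xi,\cdot)=\QStar(\mathrm{last}(\xi),\cdot)$, so $\mathrm{argmax}_a Q^{\pi,0}(\xi,a)=\mathrm{argmax}_a\QStar(\mathrm{last}(\xi),a)$ and (as $(\star)$ also yields naïve) $\mathrm{supp}(\pi(\xi))\subseteq\mathrm{argmax}_a Q^{\pi,0}(\xi,a)$, i.e.\ $\pi$ is sophisticated; conversely, if $\pi$ is sophisticated then $V^{\pi,0}(\xi)=\max_a Q^{\pi,0}(\xi,a)$, and a backwards induction identical to the naïve case (now with $Q^{\pi,0}(\xi,a)=\QStar(\mathrm{last}(\xi),a)$ from the inductive hypothesis) gives $(\star)$.

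The main obstacle is the pair of backwards inductions (naïve $\Rightarrow(\star)$ and sophisticated $\Rightarrow(\star)$): they rely on the \emph{bounded}-episodic assumption, both for a base case ($\mathrm{last}(\xi)=s_\top$, where $V^{\pi,0}(\xi)=0=\VStar(s_\top)$) and for a well-founded order (each one-step successor $\xi a s'$ strictly decreases the maximum number of steps to termination). One must also be careful that $V^{\pi,0}$ and $Q^{\pi,0}$ are defined and satisfy the one-step recursion on \emph{every} finite trajectory $\xi$, including those $\pi$ visits with probability zero — which they are by definition, and which is exactly what lets the global condition $(\star)$ follow from the local conditions defining naïve and sophisticated. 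The only other subtle point is the interpretation of "optimal" for non-stationary policies noted above, which is harmless since a stationary optimal policy always exists under exponential discounting.
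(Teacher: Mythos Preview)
Your proof is correct and takes a genuinely different route from the paper's. The paper argues by three pairwise equivalences with ``optimal'' as the hub: it uses temporal consistency to get $1\Leftrightarrow 2$, the standard fact ``$\pi$ is optimal iff $\mathrm{supp}(\pi(s))\subseteq\mathrm{argmax}_a Q^\star(s,a)$'' for $1\Leftrightarrow 3$, and the fixed-point characterisation of policy iteration for $1\Leftrightarrow 4$, each dispatched in a sentence by appeal to classical RL theory. You instead route everything through the single condition $(\star)$ and give explicit one-step expansions plus backwards inductions, which is more self-contained and makes the role of the scaling identity $V^{\pi,n}=\gamma^n V^{\pi,0}$ very transparent. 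The trade-off is scope: the paper's appeal to the Bellman optimality equation and the policy-improvement fixed-point characterisation works for any finite MDP with $\gamma\in(0,1)$ (and episodic with $\gamma=1$), whereas your backwards inductions for na\"ive $\Rightarrow(\star)$ and sophisticated $\Rightarrow(\star)$ require the bounded-episodic structure, as you note. Since the theorem as stated does not assume episodicity, you could strengthen your argument by replacing those inductions with the contraction/uniqueness argument for the Bellman optimality equation (your $(\star)$ together with the one-step recursion says exactly that $V^{\pi,0}(\xi)$, as a function of $\mathrm{last}(\xi)$, solves it), recovering the full generality with essentially the same organisation.
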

\begin{proof}
First of all, in an exponentially discounted MDP, $\pi_1$ is optimal if for all states $s$ and policies $\pi_2$, we have $V^{\pi_1}(s) \geq V^{\pi_2}(s)$, and $\pi_1$ is resolute if for all states $s$, times $t$, and policies $\pi_2$, we have $V^{\pi_1,t}(s) \geq V^{\pi_2,t}(s)$. Moreover, since exponential discounting is temporally consistent, we have that for all $t$, $V^{\pi_1}(s) \geq V^{\pi_2}(s)$ if and only if $V^{\pi_1,t}(s) \geq V^{\pi_2,t}(s)$. From this it follows that 1 and 2 are equivalent in an exponentially discounted MDP.

Secondly, in an exponentially discounted MDP, we have that a policy $\pi$ is optimal if and only if $\mathrm{supp}(\pi(s)) \subseteq \mathrm{argmax}_a(Q^\star(s,a))$, and $\pi$ is na\"ive if and only if for each state $s$, if $a \in \mathrm{supp}(\pi(s))$, then there is a policy $\pi^\star$ such that $\pi^\star$ maximises $V^{\pi^\star}(s)$ and $a \in \mathrm{supp}(\pi^\star(s))$. Moreover, if $\pi^\star$ maximises $V^{\pi^\star}(s)$, then each $a \in \mathrm{supp}(\pi^\star(s))$ must maximise $Q^\star$. From this, it follows that 1 and 3 are equivalent in exponentially discounted MDPs.

Furthermore, in an exponentially discounted MDP, we have that a policy $\pi$ is optimal if and only if it is a fixed point under \emph{policy iteration}, and $\pi$ is sophisticated if and only if $\mathrm{supp}(\pi(s)) \subseteq \mathrm{argmax}Q^\pi(s,a)$. From this, it follows that 1 and 4 are equivalent in exponentially discounted MDPs.
\end{proof}

\subsection{Resolute Policies}

We here provide our proofs about resolute policies.

\begin{lemma}\label{lemma:resolute_value_function_existence}
In any unbounded episodic MDP $\MDPd$, each state $s$ and time $t$, there exists a policy $\pi_1$ such that $V^{\pi_1,t}(s) \geq V^{\pi_2,t}(s)$ for all $\pi_2$.
\end{lemma}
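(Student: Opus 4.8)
The plan is to realise the desired $\pi_1$ as the argmax of a continuous real-valued function on a compact space, and then appeal to the extreme value theorem. Concretely, I would fix $s$ and $t$ and study the map $\Phi : (\Pi, d) \to \mathbb{R}$ given by $\Phi(\pi) = V^{\pi,t}(s)$, where $(\Pi, d)$ is the compact metric space from Lemma~\ref{lemma:metric_space}. The argument then has two ingredients: (i) $\Phi$ is continuous, and (ii) continuous functions on compact spaces attain their suprema.

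For (i), I would approximate $\Phi$ by truncations. Define $\Phi_N(\pi) = \mathbb{E}\big[\sum_{k=0}^{N-1} d(k+t)\cdot R(\zeta_k)\big]$, where $\zeta$ is the trajectory obtained by running $\pi$ from $s$. Since the summands $R(\zeta_0),\dots,R(\zeta_{N-1})$ only involve the first $N$ actions, $\Phi_N(\pi)$ depends only on the values of $\pi$ on trajectories of length at most $N-1$; hence $\Phi_N$ is locally constant, and in particular continuous, with respect to $d$ (if $d(\pi_1,\pi_2) < 1/e^{N-1}$ then $\Phi_N(\pi_1) = \Phi_N(\pi_2)$). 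Next I would show $\Phi_N \to \Phi$ uniformly in $\pi$. By Lemma~\ref{lemma:episodic_n_p} there are $n \in \mathbb{N}$, $p \in (0,1]$ such that any policy run from any state has entered $s_\top$ within $n$ steps with probability at least $p$; hence the probability that $\pi$ has not yet terminated by step $j$ is at most $(1-p)^{\lfloor j/n\rfloor}$, a bound uniform over all (possibly non-stationary) policies. With $m = \max_{s,a,s'}|R(s,a,s')| < \infty$ (finite since $\States,\Actions$ are finite) and $d(k+t)\in[0,1]$, this gives
$$
\big|\Phi(\pi) - \Phi_N(\pi)\big| \;\leq\; m\sum_{j=N}^{\infty}(1-p)^{\lfloor j/n\rfloor} \;\leq\; \frac{mn}{p}\,(1-p)^{\lfloor N/n\rfloor},
$$
which tends to $0$ as $N \to \infty$, uniformly in $\pi$ (this also re-derives finiteness of $V^{\pi,t}(s)$, cf.\ Proposition~\ref{prop:episodic}). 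A uniform limit of continuous functions is continuous, so $\Phi$ is continuous on $(\Pi, d)$.

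For (ii), since $(\Pi,d)$ is compact (Lemma~\ref{lemma:metric_space}) and $\Phi$ is continuous and real-valued, the extreme value theorem yields a policy $\pi_1 \in \Pi$ with $\Phi(\pi_1) \geq \Phi(\pi_2)$ for all $\pi_2$, i.e.\ $V^{\pi_1,t}(s) \geq V^{\pi_2,t}(s)$ for all $\pi_2$, which is exactly the claim. I expect the main obstacle to be the uniform-convergence step: because non-exponential discounting destroys temporal consistency there is no Bellman recursion to exploit, so finiteness and continuity of the value function must instead be extracted directly from the geometric termination guarantee of Lemma~\ref{lemma:episodic_n_p}, and crucially the tail bound must be uniform over the entire (non-stationary) policy class for the uniform-limit argument to go through.
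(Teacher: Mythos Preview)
Your proposal is correct and follows essentially the same approach as the paper: establish continuity of $\pi \mapsto V^{\pi,t}(s)$ on the compact metric space $(\Pi,d)$ via the geometric termination bound from Lemma~\ref{lemma:episodic_n_p}, and then invoke the extreme value theorem. The only cosmetic difference is that the paper argues continuity directly (bounding $|V^{\pi_1,t}(s)-V^{\pi_2,t}(s)|$ by $mn(1-p)^k/p$ when $\pi_1,\pi_2$ agree on trajectories of length $kn$), whereas you phrase the same estimate as uniform convergence of truncated value functions $\Phi_N \to \Phi$; the underlying inequality and ingredients are identical.
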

\begin{proof}
We will show that $V^{\pi,t}(s)$ is continuous, when viewed as a function from $(\Pi, d)$ to $\mathbb{R}$. Let $\pi_1$ be any policy, and $\epsilon$ any positive real number.
Since $\States$ and $\Actions$ are finite, we have $m = \max_{s,a,s'} |R(s,a,s')| < \infty$. Moreover, as per Lemma~\ref{lemma:episodic_n_p}, since the MDP is unbounded episodic, there is an $n$ and $p$ such that any policy $\pi$ after $n$ steps will have entered a terminal state with probability at least $p$. Thus, if $\pi_1(\xi) = \pi_2(\xi)$ for all trajectories of length $kn$, then the difference in reward between $\pi_1$ and $\pi_2$ can be at most $mn(1-p)^k/p$. For any $k$ that is sufficiently large (and hence for any $d(\pi_1,\pi_2)$ that is sufficiently small), we have that this quantity is below $\epsilon$. Thus, for every $\epsilon$ there is a $\delta$ such that, if $d(\pi_1, \pi_2) < \delta$ then $|V^{\pi_1,t}(s) -  V^{\pi_2,t}(s)| < \epsilon$. This means that $V^\pi(s,t)$ is continuous, when viewed as a function from $(\Pi, d)$ to $\mathbb{R}$.

By Lemma~\ref{lemma:metric_space}, we have that $(\Pi, d)$ is compact. Thus, by the extreme value theorem, there must exist a policy $\pi_1$ such that $V^{\pi_1}(s,t) \geq V^{\pi_2}(s,t)$ for all $\pi_2$.
\end{proof}

\begin{proposition}
In any unbounded episodic MDP, the resolute $Q$- function $Q^\mathrm{R}$ exists and is unique.
\end{proposition}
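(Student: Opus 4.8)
The plan is to derive the claim almost directly from Lemma~\ref{lemma:resolute_value_function_existence} together with the finiteness guarantee of Proposition~\ref{prop:episodic}. First I would fix notation: for each state $s$ and each $k \in \mathbb{N}$, write $V^{\star,k}(s) := \max_\pi V^{\pi,k}(s)$. By Lemma~\ref{lemma:resolute_value_function_existence} this maximum is in fact attained by some policy, so $V^{\star,k}(s)$ is a well-defined real number. Next I would note that the bound established in the proof of Proposition~\ref{prop:episodic} uses only that the discount function takes values in $[0,1]$, so the identical argument gives $|V^{\pi,k}(s)| \leq mn/p$ for every $\pi$, $s$, and $k$ (where $m = \max_{s,a,s'}|R(s,a,s')|$ and $n, p$ are the constants supplied by Lemma~\ref{lemma:episodic_n_p}), and hence $V^{\star,k}(s)$ is finite. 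I would also set $V^{\star,k}(s_\top) = 0$ for all $k$, which is consistent since no reward is collected after $s_\top$ is reached.

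Existence of $Q^\mathrm{R}$ then follows immediately: for any $s \in \States$, $t \in \mathbb{N}$, and $a \in \Actions$, the quantity $\mathbb{E}_{S' \sim \TransitionDistribution(s,a)}\left[R(s,a,S') + V^{\star,t+1}(S')\right]$ is the expectation, under the distribution $\TransitionDistribution(s,a)$ over the finite set $\States \cup \{s_\top\}$, of the map $S' \mapsto R(s,a,S') + V^{\star,t+1}(S')$, which is finite-valued by the previous paragraph. Hence this expectation is a well-defined finite real number, and it is precisely $Q^\mathrm{R}(s,t,a)$; so $Q^\mathrm{R}$ exists.

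For uniqueness, I would point out that the defining equation for $Q^\mathrm{R}$ is an explicit formula in $R$, $\TransitionDistribution$, and the numbers $V^{\star,t+1}(\cdot)$. The only conceivable ambiguity lies in the term $\max_\pi V^{\pi,t+1}(S')$: although a maximizing policy need not be unique, every maximizing policy attains the same value $V^{\pi,t+1}(S') = V^{\star,t+1}(S')$, so this term denotes a single uniquely determined number. Consequently the right-hand side of the definition assigns exactly one value to each triple $(s,t,a)$, and $Q^\mathrm{R}$ is unique.

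I do not expect a serious obstacle here — the substantive content, namely that $\sup_\pi V^{\pi,k}(s)$ is both attained and finite, is already delivered by Lemma~\ref{lemma:resolute_value_function_existence} and Proposition~\ref{prop:episodic}. The only points needing a word of care are (i) checking that Proposition~\ref{prop:episodic}'s finiteness bound transfers verbatim from $V^\pi = V^{\pi,0}$ to $V^{\pi,k}$ for arbitrary $k$, which it does since the proof only invokes $d(t) \in [0,1]$; and (ii) handling the terminal state $s_\top$ inside the expectation, which is trivial since the continuation value there is $0$.
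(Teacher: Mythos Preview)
Your proposal is correct and follows essentially the same approach as the paper, which simply states that the result is ``immediate from Lemma~\ref{lemma:resolute_value_function_existence}.'' You have spelled out in more detail the points the paper leaves implicit---finiteness of $V^{\star,k}$, handling of $s_\top$, and why uniqueness of the value (as opposed to the maximizer) suffices---but the core argument is the same.
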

\begin{proof}
Immediate from Lemma~\ref{lemma:resolute_value_function_existence}.
\end{proof}

\begin{theorem}
In any unbounded episodic MDP, there exists a deterministic resolute policy.
\end{theorem}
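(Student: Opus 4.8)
The plan is to build an explicit deterministic (and, in general, non-stationary) policy by one-step greedy lookahead with respect to the resolute value function, and then to verify resoluteness by a backwards-induction-style argument that is made to converge in the unbounded-horizon case using the uniform termination estimate of Lemma~\ref{lemma:episodic_n_p}.

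First I would set $V^{\mathrm{R}}(s,t) := \max_{\pi} V^{\pi,t}(s)$ for every state $s$ and time $t \in \mathbb{N}$; this maximum exists by Lemma~\ref{lemma:resolute_value_function_existence}, and is finite and uniformly bounded, say $|V^{\mathrm{R}}(s,t)| \le B_0$, since the estimate in Proposition~\ref{prop:episodic} applies verbatim to $V^{\pi,t}$ (because $d(\cdot + t)$ still takes values in $[0,1]$). The next step is a Bellman-optimality relation: using the Markov property --- the value of continuing from a finite trajectory $\xi$ depends on $\xi$ only through its last state and its length $|\xi|$ --- one obtains $V^{\mathrm{R}}(s,t) = \max_{a \in \Actions} Q^{\mathrm{R}}(s,t,a)$, and, for any $a^\star \in \mathrm{argmax}_a Q^{\mathrm{R}}(s,t,a)$, that playing $a^\star$ at $(s,t)$ and then following from each successor $S'$ a policy attaining $V^{\mathrm{R}}(S',t+1)$ (such a policy exists, again by Lemma~\ref{lemma:resolute_value_function_existence}) attains $V^{\mathrm{R}}(s,t)$. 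The ``$\le$'' direction unrolls an arbitrary policy one step and bounds its continuation term by $V^{\mathrm{R}}$; the ``$\ge$'' direction is exactly the splice just described.

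Now define $\pi^\star$ to be the deterministic policy which, at any trajectory $\xi$ with last state $s$ and length $t = |\xi|$, plays a fixed (tie-broken) action $a^\star(s,t) \in \mathrm{argmax}_a Q^{\mathrm{R}}(s,t,a)$. Since $\pi^\star$ depends on $\xi$ only through $(s,t)$, the quantity $V^{\pi^\star,t}(\xi)$ depends only on $(s,t)$; call it $W(s,t)$, which is finite, $|W| \le B_0$, and satisfies the same one-step recursion as $V^{\mathrm{R}}$ but with the $\max_a$ replaced by the greedy choice $a^\star(s,t)$. Subtracting the two recursions (the greedy action makes the immediate-reward terms coincide) gives, for $\Delta(s,t) := V^{\mathrm{R}}(s,t) - W(s,t) \ge 0$, the identity $\Delta(s,t) = \mathbb{E}_{S' \sim \TransitionDistribution(s,a^\star(s,t))}[\Delta(S',t+1)]$, with $\Delta(s_\top, \cdot) = 0$; iterating $k$ times along the Markov chain $(S_0 = s, S_1, \dots)$ generated by $\pi^\star$ yields $\Delta(s,t) = \mathbb{E}[\Delta(S_k, t+k)]$.

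To finish, apply Lemma~\ref{lemma:episodic_n_p}: there are $n, p$ with $\mathbb{P}[S_{jn} \ne s_\top] \le (1-p)^j$, so, since $\Delta$ vanishes on $s_\top$ and $|\Delta| \le 2B_0$, we get $\Delta(s,t) = \mathbb{E}\big[\Delta(S_{jn}, t+jn)\,\mathbf{1}[S_{jn} \ne s_\top]\big] \le 2B_0 (1-p)^j \to 0$ as $j \to \infty$, hence $\Delta \equiv 0$, i.e.\ $W \equiv V^{\mathrm{R}}$. Consequently, for every trajectory $\xi$ (last state $s$, length $t$) and every policy $\pi'$, the Markov property gives $V^{\pi',t}(\xi) \le V^{\mathrm{R}}(s,t) = W(s,t) = V^{\pi^\star,t}(\xi)$, which is precisely the statement that $\pi^\star$ is resolute; and $\pi^\star$ is deterministic by construction (it cannot in general be taken stationary, by Proposition~\ref{prop:no_stationary_resolute}). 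The main obstacle is the unbounded horizon: for a bounded-horizon episodic MDP the iteration terminates after finitely many steps and the argument is a plain finite backwards induction, whereas in the unbounded case one must control the tail of the iteration, which is exactly where the uniform geometric termination estimate of Lemma~\ref{lemma:episodic_n_p} combined with the uniform boundedness of Proposition~\ref{prop:episodic} is essential; a secondary point needing care throughout is the Markov property used to reduce trajectory-indexed values to state-and-time-indexed ones.
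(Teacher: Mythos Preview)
Your proposal is correct and follows the same construction as the paper --- a deterministic policy that is greedy with respect to $Q^{\mathrm R}$ --- but you actually supply the argument that the paper omits: the paper's proof simply asserts in one line that any $Q^{\mathrm R}$-greedy policy is resolute, whereas you verify this by iterating the one-step Bellman identity and killing the tail via the uniform-termination estimate of Lemma~\ref{lemma:episodic_n_p} together with the uniform bound of Proposition~\ref{prop:episodic}. In that sense your proof is strictly more complete than the paper's while taking the same route.
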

\begin{proof}
By Proposition~\ref{prop:resolute_Q}, in any unbounded episodic MDP, the resolute $Q$- function $Q^\mathrm{R}$ exists and is unique. We now have that any policy $\pi$ is resolute if, for each trajectory $\xi$, we have that $\pi(\xi) \in \mathrm{argmax}_a Q^\mathrm{R}(s, |\xi|, a)$, where $s$ is the last state in $\xi$. There always exists a deterministic policy satisfying this criterion.
\end{proof}

\begin{example}\label{example:delay_MDP}
Let \texttt{Delay} be the bounded episodic MDP where $\States = \{s_0, s_1, s_2, s_3, s_4\}$ $\Actions = \{a_1, s_2\}$, $\mu_0$ is uniform over $s_0$ and $s_2$, and $\TransitionDistribution$ is the deterministic function where $\TransitionDistribution(s_2,a_1) = s_3$, $\TransitionDistribution(s_2,a_2) = s_4$. For all states $s \neq s_2$, we have that $\TransitionDistribution(s,a_1) = \TransitionDistribution(s,a_2)$, where $\TransitionDistribution(s_0,a) = s_1$, $\TransitionDistribution(s_1,a) = s_2$, $\TransitionDistribution(s_3,a) = s_\top$, and $\TransitionDistribution(s_4,a) = s_\top$.
This is depicted in the following labelled graph:
\begin{center}
\begin{tikzpicture}[shorten >=1pt,node distance=2.6cm,on grid,auto]
   \node[state, initial] (s_0)   {$s_0$}; 
   \node[state]         (s_1) [left=of s_0] {$s_1$};
   \node[state,initial]         (s_2) [left=of s_1] {$s_2$};
   \node[state]         (s_3) [above left=of s_2] {$s_3$};
   \node[state]         (s_4) [above right=of s_2] {$s_4$};
   \node[state, accepting]  (s_t) [above=of s_2, yshift=1.5cm] {$s_\top$};
    \path[->] 
    (s_0) edge [bend right] node {} (s_1)
    (s_1) edge [bend right] node {} (s_2)
    (s_2) edge [swap] node {2} (s_3)
    (s_2) edge [] node {} (s_4)
    (s_3) edge [] node {} (s_t)
    (s_4) edge [swap] node {3} (s_t)
    ;
\end{tikzpicture}
\end{center}
The discount function $d$ is the hyperbolic discount function, $d(t) = 1/(1+t)$, and $R$ is the reward function given by  $R(s_2,a_1,s_3) = 2$, $R(s_4,a_1,s_\top) = R(s_4,a_2,s_\top) = 3$, and $R(s,a,s') = 0$ for all other $s,a,s'$.
\end{example}

\begin{proposition}
There are bounded episodic MDPs with no stationary resolute policies.
\end{proposition}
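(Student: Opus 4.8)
The plan is to verify directly that the \texttt{Delay} MDP of Example~\ref{example:delay_MDP} admits no stationary resolute policy. The construction is tuned precisely for this: the branching state $s_2$ is reached at time $0$ when the episode starts in $s_2$, but at time $2$ when it starts in $s_0$ (the agent being forced through $s_1$ first); and since the hyperbolic discount $d(t) = 1/(1+t)$ is not temporally consistent (Proposition~\ref{prop:temporal_consistency_exponential}), the ranking of the two continuations available at $s_2$ reverses between time $0$ and time $2$. A stationary policy, depending only on the current state, cannot accommodate both rankings.

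Concretely, from the length-$0$ trajectory $\langle s_2\rangle$ there are exactly two continuations: the one taking $a_1$ reaches $s_3$ then $s_\top$ with value $d(0)\cdot 2 = 2$, and the one taking $a_2$ reaches $s_4$ then $s_\top$ with value $d(1)\cdot 3 = 3/2$. Since $2 > 3/2$, Definition~\ref{def:resolute_policy} applied with $\xi = \langle s_2\rangle$ forces any resolute policy $\pi$ to have $\mathrm{supp}(\pi(\langle s_2\rangle)) = \{a_1\}$. From the length-$2$ trajectory $\xi' = \langle s_0, a_1, s_1, a_1, s_2\rangle$, the continuations must be evaluated via $V^{\pi,2}(\xi')$, i.e.\ discounting as though starting at time $2$: taking $a_1$ gives $d(2)\cdot 2 = 2/3$, while taking $a_2$ gives $d(3)\cdot 3 = 3/4$. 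Since $2/3 < 3/4$, Definition~\ref{def:resolute_policy} applied with $\xi'$ forces $\mathrm{supp}(\pi(\xi')) = \{a_2\}$.

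Finally, stationarity forces $\pi(\langle s_2\rangle) = \pi(\xi') = \pi(s_2)$, so this single distribution would have to be the point mass on $a_1$ and the point mass on $a_2$ at once --- impossible. Hence \texttt{Delay}, which is bounded episodic (every episode has length at most $4$), has no stationary resolute policy. I do not anticipate a genuine obstacle; the only points needing care are to evaluate the continuation from $s_2$ with the correct ``discount from time $n$'' convention (using $V^{\pi,2}$, not $V^{\pi,0}$, when $s_2$ is reached at time $2$) and to note that $\langle s_2\rangle$ and $\langle s_0, a_1, s_1, a_1, s_2\rangle$ are both legitimate trajectories for the quantifier in Definition~\ref{def:resolute_policy} (indeed $s_0, s_2 \in \mathrm{supp}(\mu_0)$ and $\tau$ is deterministic). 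For contrast, it is worth remarking that \texttt{Delay} does admit a deterministic \emph{non-stationary} resolute policy, so the obstruction is specifically about stationarity.
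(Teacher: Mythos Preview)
Your proposal is correct and follows exactly the paper's own approach: both use the \texttt{Delay} MDP of Example~\ref{example:delay_MDP} and observe that resoluteness forces $a_1$ at the length-$0$ trajectory $\langle s_2\rangle$ but $a_2$ at the length-$2$ trajectory ending in $s_2$, contradicting stationarity. Your version simply spells out the numerical comparisons ($2>3/2$ versus $2/3<3/4$) that the paper leaves implicit.
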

\begin{proof}
Consider the MDP \texttt{Delay} given in Example~\ref{example:delay_MDP}. Here any resolute policy $\pi$ has the property that $\pi(s_2) = a_1$, but $\pi(s_0,a,s_1,a',s_2) = s_2$ (where $a$ and $a'$ may be any actions). Thus this MDP has no stationary resolute policies. 
\end{proof}

\subsection{Na\"ive Policies}

We here provide our proofs about na\"ive policies.

\begin{proposition}
In any unbounded episodic MDP, the na\"ive $Q$-function $Q^\mathrm{N}$ exists and is unique.
\end{proposition}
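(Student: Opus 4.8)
The plan is to reduce the claim to results already established for the resolute $Q$-function. First I would observe that, directly from the two definitions, $Q^\mathrm{N}(s,a) = Q^\mathrm{R}(s,0,a)$ for every $s \in \States$ and $a \in \Actions$, since both are equal to $\mathbb{E}_{S' \sim \TransitionDistribution(s,a)}[R(s,a,S') + \max_\pi V^{\pi,1}(S')]$. Hence existence and uniqueness of $Q^\mathrm{N}$ follow immediately from Proposition~\ref{prop:resolute_Q}, which guarantees that $Q^\mathrm{R}$ exists and is unique in any unbounded episodic MDP.

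For completeness I would also spell out the self-contained argument. The only non-trivial ingredient is that $\max_\pi V^{\pi,1}(s')$ is well-defined for each state $s'$: by Lemma~\ref{lemma:resolute_value_function_existence} (applied with $t = 1$) there is a policy attaining the supremum of $V^{\pi,1}(s')$ over all $\pi$, so the maximum exists, and by Proposition~\ref{prop:episodic} it is finite. Being the maximum of a set of real numbers, its value is uniquely determined. Given this, $Q^\mathrm{N}(s,a)$ is the expectation of the function $s' \mapsto R(s,a,s') + \max_\pi V^{\pi,1}(s')$ under the fixed distribution $\TransitionDistribution(s,a) \in \Delta(\States \cup \{s_\top\})$; since $\States$ is finite this is a finite sum of finite reals, hence a well-defined finite real number uniquely determined by $R$, $\TransitionDistribution$ and $d$ (with the convention $V^{\pi,1}(s_\top) = 0$). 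This establishes both existence and uniqueness.

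I do not anticipate any real obstacle: all the analytic content — continuity of $\pi \mapsto V^{\pi,t}(s)$ on $(\Pi,d)$ and compactness of $(\Pi,d)$ — has already been carried out in Lemma~\ref{lemma:metric_space} and Lemma~\ref{lemma:resolute_value_function_existence}. The present proposition is essentially a corollary, and the only things to verify are the bookkeeping identity $Q^\mathrm{N}(s,a) = Q^\mathrm{R}(s,0,a)$ and the trivial observation that a finite expectation of finite, uniquely determined quantities is again finite and uniquely determined.
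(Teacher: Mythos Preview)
Your proposal is correct and follows exactly the paper's approach: the paper's proof is the single line ``Immediate from Proposition~\ref{prop:resolute_Q}'', relying on the identity $Q^\mathrm{N}(s,a) = Q^\mathrm{R}(s,0,a)$ already noted in the text. Your additional self-contained paragraph just unpacks what that one line means, and is entirely consistent with it.
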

\begin{proof}
Immediate from Proposition~\ref{prop:resolute_Q}.
\end{proof}

\begin{theorem}
In any unbounded episodic MDP, there exists a stationary deterministic na\"ive policy.
\end{theorem}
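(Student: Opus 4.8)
The plan is to reduce na\"iveté to a greedy condition on the na\"ive $Q$-function and then observe that such a condition can be met by a stationary deterministic policy. By Proposition~\ref{prop:naive_Q}, in any unbounded episodic MDP the na\"ive $Q$-function $Q^\mathrm{N} : \SxA \to \mathbb{R}$ exists and is unique. Since $\Actions$ is finite and nonempty, $\mathrm{argmax}_a Q^\mathrm{N}(s,a) \neq \emptyset$ for each state $s$, so we may fix a stationary deterministic policy $\pi$ with $\pi(s) \in \mathrm{argmax}_a Q^\mathrm{N}(s,a)$ for all $s$. It then remains to verify that every such $\pi$ is na\"ive.

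The key step is the following claim: for any finite trajectory $\xi$ ending in a state $s$ and any policy $\pi^\star$, $\pi^\star$ maximises $V^{\pi^\star,0}(\xi)$ over all policies if and only if $\mathrm{supp}(\pi^\star(\xi)) \subseteq \mathrm{argmax}_a Q^\mathrm{N}(s,a)$ and $\pi^\star$ attains $\max_{\pi'} V^{\pi',1}$ from the successor state. First I would write the one-step expansion
$$
V^{\pi^\star,0}(\xi) = \sum_{a} \pi^\star(\xi)(a)\, \mathbb{E}_{S' \sim \TransitionDistribution(s,a)}\!\left[ R(s,a,S') + V^{\pi^\star,1}(\xi') \right],
$$
where $\xi'$ is $\xi$ extended by $a$ and $S'$. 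Since the transitions are Markovian, $\max_{\pi'} V^{\pi',1}(\xi') = \max_{\pi'} V^{\pi',1}(S')$, and this maximum is attained by Lemma~\ref{lemma:resolute_value_function_existence}; hence the inner expectation is at most $Q^\mathrm{N}(s,a)$, with equality when $\pi^\star$ follows a value-maximising continuation from $S'$. This yields $\max_{\pi^\star} V^{\pi^\star,0}(\xi) = \max_a Q^\mathrm{N}(s,a)$, that the maximum is attained, and the stated characterisation of the maximisers. For the direction I actually need, note in particular that for every $a \in \mathrm{argmax}_a Q^\mathrm{N}(s,a)$ there is a $V^{\pi^\star,0}(\xi)$-maximising $\pi^\star$ with $a \in \mathrm{supp}(\pi^\star(\xi))$: take $a$ at $\xi$, follow a policy attaining $\max_{\pi'} V^{\pi',1}$ from the successor state (which exists by Lemma~\ref{lemma:resolute_value_function_existence}), and behave arbitrarily elsewhere --- this is legitimate because $V^{\pi',0}(\xi)$ depends on $\pi'$ only through its behaviour on $\xi$ and its extensions.

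Combining this claim with Definition~\ref{def:naive_policy}, a policy is na\"ive precisely when $\mathrm{supp}(\pi(\xi)) \subseteq \mathrm{argmax}_a Q^\mathrm{N}(s,a)$ holds for every trajectory $\xi$ ending in $s$; since this condition depends on $\xi$ only through its last state $s$, the stationary deterministic policy fixed in the first paragraph is na\"ive, completing the proof. The hard part will be the careful proof of the claim in the second paragraph --- in particular the converse direction, which requires invoking the existence of optimal (resolute) continuations via Lemma~\ref{lemma:resolute_value_function_existence} and checking that neither the length of $\xi$ nor the rest of its history, but only its final state, determines which actions are na\"ive.
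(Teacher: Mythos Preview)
Your proposal is correct and follows essentially the same approach as the paper: reduce na\"iveté to greediness with respect to $Q^\mathrm{N}$ and then pick a stationary deterministic maximiser. The paper's proof simply \emph{asserts} the equivalence between na\"iveté and $\mathrm{supp}(\pi(\xi)) \subseteq \mathrm{argmax}_a Q^\mathrm{N}(s,a)$ without argument, whereas you actually supply the one-step expansion and the appeal to Lemma~\ref{lemma:resolute_value_function_existence} for the existence of optimal continuations; in that sense your write-up is more complete than the paper's, not different from it.
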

\begin{proof}
By Proposition~\ref{prop:resolute_Q}, in any unbounded episodic MDP, the na\"ive $Q$- function $Q^\mathrm{N}$ exists and is unique. We now have that any policy $\pi$ is na\"ive if, for each trajectory $\xi$, we have that $\pi(\xi) \in \mathrm{argmax}_a Q^\mathrm{N}(s, a)$, where $s$ is the last state in $\xi$. There always exists a stationary deterministic policy satisfying this criterion.
\end{proof}

\subsection{Sophisticated Policies}

We here provide our proofs about sophisticated policies. We first show that such policies are guaranteed to exist for bounded episodic MDPs.

\begin{theorem}
In any bounded episodic MDP, there exists a stationary, deterministic sophisticated policy.
\end{theorem}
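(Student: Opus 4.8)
The plan is a backwards induction over a layering of the state space induced by the bounded horizon. The first step is to observe that in a bounded episodic MDP with horizon $H$ there is a well-founded height function $h : \States \cup \{s_\top\} \to \mathbb{N}$ with $h(s_\top) = 0$, defined by letting $h(s)$ be the maximal length of a trajectory starting at $s$ that contains $s_\top$ (if at all) only in its last position. Boundedness of the horizon forces $h(s) \leq H$ for every $s$, and prepending the transition $s \to s'$ to a longest trajectory out of $s'$ shows that $h(s') \leq h(s) - 1$ whenever $s$ is non-terminal, $a \in \Actions$, and $s' \in \mathrm{supp}(\TransitionDistribution(s,a))$. Hence $\States \cup \{s_\top\}$ is partitioned into levels $L_k = \{s : h(s) = k\}$ with $L_0 = \{s_\top\}$, and every transition strictly decreases the level.

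The second step is the reduction for stationary policies. If $\pi$ is stationary then for any trajectory $\xi$ with last state $s$ one has $Q^{\pi,0}(\xi, a) = \mathbb{E}_{S' \sim \TransitionDistribution(s,a)}\bigl[R(s,a,S') + V^{\pi,1}(S')\bigr]$, which depends on $\xi$ only through $s$; write it $Q^{\pi,0}(s,a)$. So a stationary $\pi$ is sophisticated precisely when $\mathrm{supp}(\pi(s)) \subseteq \mathrm{argmax}_a Q^{\pi,0}(s,a)$ for every non-terminal $s$ (trajectories ending in $s_\top$ impose no constraint). I would also record the recursion $V^{\pi,n}(s) = \mathbb{E}_{S' \sim \TransitionDistribution(s,\pi(s))}\bigl[d(n)\,R(s,\pi(s),S') + V^{\pi,n+1}(S')\bigr]$, noting that all these quantities are finite (by Proposition~\ref{prop:episodic}, since a bounded episodic MDP is unbounded episodic, or directly because the horizon is bounded).

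The construction then proceeds by induction on $k$. The action of $\pi$ at $s_\top$ is irrelevant. Assuming $\pi$ has been defined on $L_0 \cup \dots \cup L_{k-1}$ together with the values $V^{\pi,n}$ on those states for all $n \in \mathbb{N}$, fix $s \in L_k$. Every $S' \in \mathrm{supp}(\TransitionDistribution(s,a))$ lies in a strictly lower level, so $Q^{\pi,0}(s,a) = \mathbb{E}_{S'}\bigl[R(s,a,S') + V^{\pi,1}(S')\bigr]$ is already fully determined and does not depend on how $\pi$ is completed on $L_k$ or above; I set $\pi(s)$ to a fixed element of $\mathrm{argmax}_a Q^{\pi,0}(s,a)$ (non-empty as $\Actions$ is finite) and compute $V^{\pi,n}(s)$ for all $n$ from the recursion, which again only references lower levels. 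After $k = 1, \dots, H$ this yields a stationary, deterministic policy $\pi$ with $\pi(s) \in \mathrm{argmax}_a Q^{\pi,0}(s,a)$ for every non-terminal $s$; by the reduction above, $\pi$ is sophisticated.

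The main obstacle to anticipate is the apparent circularity in the notion of a sophisticated policy: $Q^{\pi,0}$ depends on $\pi$, and in general the set of sophisticated policies is not convex (Proposition~\ref{prop:sophisticated_is_complicated}), so a direct fixed-point argument would be delicate. The bounded-horizon layering is exactly what dissolves the circularity, since the behaviour of $\pi$ at a level-$k$ state only feeds on its behaviour at strictly lower levels; this is also where boundedness is indispensable, as an unbounded episodic MDP (e.g. the coin-flipping example in the appendix) need not admit a finite height function and the induction would not start. The only real bookkeeping point is tracking $V^{\pi,n}$ for all $n$ simultaneously, but since for each fixed $n$ the recursion still descends exactly one level, there is no hidden circularity there.
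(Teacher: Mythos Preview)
Your proof is correct and takes essentially the same approach as the paper: both arguments layer the state space by the length of the longest outgoing trajectory and then perform backward induction, choosing at each state a deterministic action maximising $Q^{\pi,0}(s,\cdot)$, which depends only on the already-constructed portion of $\pi$ at strictly lower levels. Your exposition is a bit more explicit about tracking $V^{\pi,n}$ for all $n$ simultaneously and about why boundedness is essential, but the substance is the same.
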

\begin{proof}
Let $\MDPd$ be a bounded episodic MDP. We will prove that this MDP has a stationary sophisticated policy by induction on the state-space.

Let $S_1, \dots, S_H$ be a partition of $\States$, such that $s \in S_i$ if the longest path that is possible from $s$ has length $i$. Since the MDP is bounded episodic with horizon $H$, this means that some state is in $S_H$. Also note that if $s \in S_i$, and there is an $a$ such that $s' \in \mathrm{supp}(\TransitionDistribution(s,a))$, then $s' \in S_j$ for some $j < i$. Moreover, there must also be some $s'$ and $a$ such that $s' \in \mathrm{supp}(\TransitionDistribution(s,a))$, and such that $s' \in S_{i-1}$. This means that each $S_1, \dots, S_H$ is nonempty, and that if the agent is in a state $s \in S_i$ at time $t$, then it must be in a state $s' \in S_j$ for some $j < i$ at time $t+1$.

Say that a (stationary) policy $\pi$ is sophisticated relative to $S \subseteq \States$ if $\mathrm{supp}(\pi(s)) \subseteq \mathrm{argmax} Q^{\pi,0}(s,a)$ for all $s \in S$, and let $P(n)$ be the claim that there is a stationary, deterministic policy $\pi_n$ that is sophisticated relative to $S = S_1 \cup \dots \cup S_n$.

For our base case $P(1)$, let $s \in S_1$. Note that if $s \in S_1$, then $\TransitionDistribution(s,a) = s_\top$ for all $a$. This means that $\pi_1$ is sophisticated relative to $S_1$ as long as $\mathrm{supp}(\pi_1(s)) \subseteq \mathrm{argmax}_a R(s,a,s_\top)$ for all $s \in S_1$. There is a deterministic stationary policy satisfying this condition, and so $P(1)$ holds.

For our inductive step, assume that there is a stationary deterministic policy $\pi_n$ that is sophisticated relative to $S_1 \cup \dots \cup S_n$. 
Let $Q^{\pi_n,0}$ be the $Q$-function of $\pi_n$. Note that if $s \in S_{n+1}$, then all states that are reachable from $s$ are in $S_1 \cup \dots \cup S_n$. This means that if $\pi = \pi_n$ for all $s \in S_1 \cup \dots \cup S_n$, then for all $s \in S_{n+1}$ and all $a$, we have that $Q^{\pi_n,0}(s,a) = Q^{\pi,0}(s,a)$. Thus, a policy $\pi$ is sophisticated relative to $S_1 \cup \dots \cup S_{n+1}$ as long as  $\pi = \pi_n$ for all $s \in S_1 \cup \dots \cup S_n$, and as long as $\mathrm{supp}(\pi(s)) \subseteq \mathrm{argmax}_a Q^{\pi_n,0}(s,a)$ for all $s \in S_{n+1}$. There is a stationary deterministic policy $\pi_{n+1}$ that satisfies this condition, and so the inductive step holds.

Thus, by the principle of induction, there is a stationary deterministic policy that is sophisticated relative to $\States$. This completes the proof.
\end{proof}

This result can be generalised to arbitrary unbounded episodic MDPs, though this requires a bit more work:

\begin{theorem}
In any unbounded episodic MDP, there exists a stationary sophisticated policy.
\end{theorem}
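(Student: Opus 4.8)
The plan is to prove existence of a stationary sophisticated policy via a fixed‑point argument, since the backwards‑induction used in the bounded episodic case no longer works once episodes can be arbitrarily long. Identify the set of stationary policies with the compact, convex polytope $X := \Delta(\Actions)^{\States} \subseteq \mathbb{R}^{\SxA}$. For $\pi \in X$, Proposition~\ref{prop:episodic} guarantees that $V^{\pi,n}(s)$ is finite for every $s$ and $n$, so $Q^{\pi,0}(s,a) := \mathbb{E}_{S' \sim \TransitionDistribution(s,a)}\left[R(s,a,S') + V^{\pi,1}(S')\right]$ is well defined, and $\pi$ is sophisticated precisely when $\mathrm{supp}(\pi(s)) \subseteq \mathrm{argmax}_a Q^{\pi,0}(s,a)$ for all $s$. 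Define the correspondence $F : X \rightrightarrows X$ by $F(\pi) = \{\pi' \in X : \mathrm{supp}(\pi'(s)) \subseteq \mathrm{argmax}_a Q^{\pi,0}(s,a) \text{ for all } s\}$. A fixed point of $F$ is exactly a stationary sophisticated policy, so it suffices to check the hypotheses of Kakutani's fixed point theorem.

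The routine parts are that $X$ is nonempty, compact and convex, and that for each $\pi$ the value set $F(\pi)$ is nonempty (the deterministic policy choosing an argmax action in every state lies in it), closed, and convex (for fixed $\pi$, $\mathrm{argmax}_a Q^{\pi,0}(s,a)$ is a fixed subset of $\Actions$, and the distributions supported on a fixed subset form a face of the simplex, so $F(\pi)$ is a product of faces). The substantive step is that $F$ has closed graph, and this reduces to a single lemma: for each fixed $(s,a)$ the map $\pi \mapsto Q^{\pi,0}(s,a)$ is continuous on $X$. Granting this, suppose $\pi_k \to \pi$ and $\pi'_k \to \pi'$ with $\pi'_k \in F(\pi_k)$; if some $a$ with $\pi'(s)(a) > 0$ satisfied $a \notin \mathrm{argmax}_b Q^{\pi,0}(s,b)$, then continuity together with finiteness of $\SxA$ would give a uniform strict gap $Q^{\pi_k,0}(s,a) < \max_b Q^{\pi_k,0}(s,b)$ for all large $k$, forcing $\pi'_k(s)(a) = 0$ and hence $\pi'(s)(a) = \lim_k \pi'_k(s)(a) = 0$, a contradiction. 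Kakutani then yields $\pi^\star \in F(\pi^\star)$, which is the desired policy.

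The main obstacle is therefore the continuity of $\pi \mapsto Q^{\pi,0}(s,a)$, equivalently of $\pi \mapsto V^{\pi,1}(s)$, on $X$. I would write $V^{\pi,1}(s) = \sum_{t=0}^\infty d(t+1)\,\mathbb{E}_\pi[R(\zeta_t)]$, where $\mathbb{E}_\pi[R(\zeta_t)]$ is the expected reward of the $t$‑th transition under $\pi$ starting from $s$ (taken to be $0$ on trajectories that have already reached $s_\top$); each such term is a polynomial in the entries of $\pi$ — a finite sum over length‑$t$ paths of products of $\pi$‑ and $\TransitionDistribution$‑probabilities times rewards — and is thus continuous on $X$. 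To pass continuity through the infinite sum I would invoke Lemma~\ref{lemma:episodic_n_p}: there exist $n_0 \in \mathbb{N}$ and $p \in (0,1]$ such that every policy has entered $s_\top$ with probability at least $p$ after any block of $n_0$ steps, whence $\lvert \mathbb{E}_\pi[R(\zeta_t)] \rvert \le m\,(1-p)^{\lfloor t/n_0 \rfloor}$ with $m = \max_{s,a,s'} \lvert R(s,a,s') \rvert$, while $d(t+1) \le 1$. This bound is independent of $\pi$ and summable, so the series converges uniformly on $X$ and its sum is continuous (the same estimate, as in Proposition~\ref{prop:episodic}, re‑confirms finiteness). Everything else is a direct application of Kakutani's theorem; since the statement only asserts the existence of a \emph{stationary} sophisticated policy, no further refinement of the fixed point is required.
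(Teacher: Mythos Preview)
Your proof is correct and follows essentially the same route as the paper: both apply Kakutani's fixed-point theorem to the best-response correspondence on the compact convex set of stationary policies, with the crux being upper hemicontinuity. The only difference is that the paper obtains upper hemicontinuity by invoking Berge's Maximum Theorem (asserting without detail that $f(\pi_1,\pi_2)=\sum_s \mathbb{E}_{a\sim\pi_2(s)}[Q^{\pi_1}(s,a)]$ is continuous), whereas you prove the closed-graph property directly by establishing continuity of $\pi\mapsto Q^{\pi,0}(s,a)$ via the uniform-convergence bound from Lemma~\ref{lemma:episodic_n_p}; your version actually fills in the continuity step the paper leaves implicit.
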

\begin{proof}
By the Kakutani fixed-point theorem, if $X$ is a non-empty, convex, and compact subset of a Euclidean space $\mathbb{R}^n$, and $\phi : X \to \mathcal{P}(X)$ is a set valued function with the property that
\begin{enumerate}
    \item $\phi(x)$ is non-empty, closed, and convex for all $x \in X$, and
    \item $\phi$ is upper hemicontinuous,
\end{enumerate}
then $\phi$ has a fixed point.

Let $\hat{\Pi}$ be the set of all stationary policies. We say that a policy $\pi_2$ is a \emph{local improvement} of $\pi_1$ in $s$ if $\mathrm{supp}(\pi_2(s)) \subseteq \mathrm{argmax}_{a} Q^{\pi_1}(s,a)$. Let $\phi : \hat{\Pi} \to \mathcal{P}(\hat{\Pi})$ be the function that, given $\pi$, returns the set of all policies which are local improvements of $\pi$ in all $s$. 

We can begin by noting that $\hat{\Pi}$ of course is a non-empty, convex, and compact subset of the Euclidean space $\mathbb{R}^{|S||A|}$. It is immediate from the definition that $\phi$ is both convex and closed. Moreover, since the MDP is unbounded episodic, we have that $Q^{\pi}(s,a)$ exists (i.e.\ is finite) for all $\pi, s, a$, by Proposition~\ref{prop:episodic}. Since there is a finite number of actions, we thus also have that $\phi(\pi)$ is non-empty.

Claude Berge's Maximum Theorem says that if $X$ and $Y$ are topological spaces, and $f : X \times Y \to \mathbb{R}$ is continuous, and if moreover
\begin{enumerate}
    \item $f^\star(y) = \mathrm{sup}\{f(x,y) : x \in X\}$
    \item $C(y) = \{x : f(x,y) = f^\star(x)\}$
\end{enumerate}
then $f^\star$ is continuous, and $C$ is upper hemicontinuous. Let $X$ and $Y$ both be equal to $\Pi$, and let $f : \Pi \times \Pi \to \mathbb{R}$ be the function where $f(\pi_1, \pi_2) = \sum_{s} \mathbb{E}_{a \sim \pi_2(s)}[Q^{\pi_1}(s,a)]$. Now $f$ is continuous, and $C(\pi_1) = \{\pi_2 : f(\pi_1,\pi_2) = f^\star(\pi_1)\} = \phi(\pi_1)$. Claude Berge's Maximum Theorem then implies that $\phi$ is upper hemicontinuous.

The Kakutani fixed-point theorem then implies that $\phi$ must have a fixed point, which means that there must be a sophisticated policy. Moreover, by construction, this policy is stationary.
\end{proof}

However, note that for unbounded episodic MDPs, there may not be any deterministic policy that is sophisticated (unlike what is the case for bounded episodic MDPs). We can provide a concrete example of such an MDP:

\begin{example}\label{example:loopy_mdp}
Let \texttt{Tempt} be the MDP where $\States$ has 32 states $\{s_0, s_1, \dots s_{31}\}$, $\Actions = \{\text{up},\text{down}\}$, and $\mu_0 = s_0$. For $i \in 2 \dots 30$, we have that $\tau(s_i, a) = s_{i+1}$ for both $a \in \Actions$, and we have that $\tau(s_{31}, a) = s_{31}$ for both $a \in \Actions$. At $s_0$, we have that $\tau(s_0, \text{up}) = s_1$ and $\tau(s_0, \text{down}) = s_2$, and at $s_1$, we have that $\tau(s_1, a)$ for both $a \in \Actions$ returns $s_0$ with probability 0.99, and otherwise returns $s_{31}$. The reward function $R$ is zero everywhere, except that $R(s_{30}, a, s_{31}) = 100$ for both $a \in \Actions$, and $R(s_0, \text{up}, s_1) = 1$. The discount $d$ is the hyperbolic discount function, $d(t) = 1/(1+t)$. This environment is depicted in the following graph:
\begin{center}
\begin{tikzpicture}[shorten >=1pt,node distance=2.6cm,on grid,auto]
   \node[state, initial] (s_0)   {$s_0$}; 
   \node[state]         (s_1) [right=of s_0] {$s_1$};
   \node[state]         (s_2) [below=of s_0] {$s_2$};
   \node[]         (invis) [below=of s_2] {$\dots$};
   \node[]         (text) [right=of invis, xshift=-1.5cm] {(30 steps)};
   \node[state, accepting]         (s_31) [below=of invis] {$s_{31}$};
    \path[->] 
    (s_0) edge [sloped, bend left] node {1} (s_1)
          edge [swap, sloped, bend left] node {up} (s_1)
          edge [swap, sloped] node {down} (s_2)
    (s_1) edge [sloped, bend left] node {} (s_0)
          edge [sloped, bend left] node {} (s_31)
    (s_2) edge [sloped] node {} (invis)
    (invis) edge [ ] node {100} (s_31)
    ;
\end{tikzpicture}
\end{center}
Note that \texttt{Tempt} is episodic, with $s_{31}$ being the terminal state. Moreover, state $s_0$ is the only state in which the agent has a meaningful choice to make; in all other states, $\tau$ does not depend on the action. Note also that $\tau$ is deterministic everywhere, except at $s_1$ -- the nondeterminism at $s_1$ is to ensure that \texttt{Tempt} is (unbounded) episodic.
\end{example}

\begin{proposition}
There exists unbounded episodic MDPs in which every sophisticated policy is nondeterministic.
\end{proposition}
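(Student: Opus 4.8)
The plan is to work with the MDP \texttt{Tempt} of Example~\ref{example:loopy_mdp} and exploit that the agent's action is consequential only at $s_0$: at $s_1$ both actions induce the same next-state distribution and zero reward, and along the chain $s_2,\dots,s_{30}$ the transition and reward do not depend on the action, so the sophisticated condition of Definition~\ref{def:sophisticated_policy}, $\mathrm{supp}(\pi(\xi))\subseteq\mathrm{argmax}_a Q^{\pi,0}(\xi,a)$, is automatically satisfied at every trajectory $\xi$ whose last state is not $s_0$. Hence $\pi$ is sophisticated iff this condition holds at every trajectory ending in $s_0$, and a deterministic policy is captured by the sequence $\sigma_0,\sigma_1,\dots\in\{\mathrm{up},\mathrm{down}\}$ of actions it would take at the successive (on- and off-path) visits to $s_0$ around the loop $s_0\to s_1\to s_0$. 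Since each visit to $s_1$ sends the agent to the terminal $s_{31}$ with probability $0.01$ and otherwise the dynamics drive it down the chain to $s_{31}$, \texttt{Tempt} is unbounded episodic (Borel--Cantelli), so the existence theorem for stationary sophisticated policies in unbounded episodic MDPs applies; thus it is enough to show that \emph{no deterministic policy} is sophisticated.

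Next I would record the two branch values at $s_0$, using that $Q^{\pi,0}$ re-indexes the discount so the immediate reward carries weight $d(0)=1$. From any trajectory $\xi$ ending in $s_0$, playing $\mathrm{down}$ runs deterministically through $s_0,s_2,\dots,s_{31}$ with the single reward $100$ on the last of the $30$ transitions, so $Q^{\pi,0}(\xi,\mathrm{down})=100\,d(29)=100/30=10/3$, independently of $\pi$ and $\xi$; playing $\mathrm{up}$ gives reward $1$ and returns to $s_0$ two steps later with probability $0.99$, so $Q^{\pi,0}(\xi,\mathrm{up})=1+0.99\,V^{\pi,2}(\xi^{+})$, where $\xi^{+}$ is the next $s_0$-trajectory and $V^{\pi,2}$ is the continuation value discounted from weight $d(2)$. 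If $\sigma_k=\mathrm{up}$ for every $k$, then the $\mathrm{up}$ value at $s_0$ equals $\sum_{j\ge0}0.99^{j}d(2j)=\sum_{j\ge0}0.99^{j}/(2j+1)=1+\sum_{j\ge1}0.99^{j}/(2j+1)<1+\tfrac12\sum_{j\ge1}0.99^{j}/j=1+\tfrac12\ln 100<1+\tfrac73=\tfrac{10}{3}$, where the last inequality uses $\ln 100<14/3$; so $\mathrm{down}$ is strictly better and this policy is not sophisticated. Writing $\theta:=(10/3-1)/0.99$, the sophisticated condition at a visit to $s_0$ permits $\mathrm{down}$ exactly when $V^{\pi,2}$ of the following $s_0$-trajectory is $\le\theta$, and $\mathrm{up}$ exactly when it is $\ge\theta$.

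The remaining case---deterministic policies that play $\mathrm{down}$ at some visit---is where I expect the real difficulty, because $\mathrm{down}$ is not locally excludable: the relevant continuation value $V^{\pi,2}$ of an $s_0$-trajectory is governed by how many $\mathrm{up}$s precede the next $\mathrm{down}$, and as this count ranges over $0,1,2,\dots,\infty$ the values form a family running from $100\,d(31)=100/32>\theta$ ($\mathrm{down}$ at the very next visit) down to $\big(\sum_{k\ge1}0.99^{k}/(2k+1)\big)/0.99<\theta$ (never $\mathrm{down}$ again). I would therefore set up the full system of fixed-point inequalities indexed by the visits around the loop---one inequality comparing $10/3$ with $1+0.99\,V^{\pi,2}(\cdot)$ per visit, its direction dictated by $\sigma$ at that visit---and show this system has no $\{0,1\}$-valued solution: intuitively, keeping every $\mathrm{up}$-visit satisfied forces all the upcoming $\mathrm{up}$-runs to be short, which makes the $\mathrm{up}$ deviation strictly better at some $\mathrm{down}$-visit, whereas lengthening a run to satisfy a $\mathrm{down}$-visit eventually forces an all-$\mathrm{up}$ tail and hence the failure from the previous paragraph. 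Carrying this out---including verifying the monotonicity/unimodality of the run-length-to-value map that makes the bookkeeping tractable---is the main obstacle. Granting it, the proposition follows: \texttt{Tempt} admits a sophisticated policy, but none of its sophisticated policies is deterministic.
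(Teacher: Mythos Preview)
Your overall plan matches the paper's: work in \texttt{Tempt}, note that only $s_0$-visits carry a real choice, and split deterministic policies into ``always $\mathrm{up}$'' versus ``some $\mathrm{down}$''. Your treatment of the always-$\mathrm{up}$ case agrees with the paper's Case~1. For the second case you actually go further than the paper: you set up the right constraint system (an $\mathrm{up}$-visit demands $V^{\pi,2}(\cdot)\ge\theta$, a $\mathrm{down}$-visit demands $V^{\pi,2}(\cdot)\le\theta$), whereas the paper simply writes $Q^\pi(\xi,\mathrm{up})\approx4.125$, a value that arises only when $\pi$ plays $\mathrm{down}$ again at the very next $s_0$-visit.

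The genuine gap is that your Case~2 programme cannot be completed for this MDP: the constraint system \emph{does} admit a $\{0,1\}$-valued solution. Let $h(r)$ be the $V^{\pi,2}$-value of an $s_0$-visit from which $\pi$ plays $\mathrm{up}$ exactly $r$ times and then $\mathrm{down}$, i.e.\ $h(r)=\sum_{\ell=0}^{r-1}0.99^{\ell}/(2\ell+3)+0.99^{r}\cdot100/(2r+32)$. The unimodality you anticipate holds---$h$ rises to $h(2)\approx3.25$ and then decreases monotonically to $h(\infty)\approx2.03$---so there is an integer $r^{\star}$ (numerically $r^{\star}=33$) with $h(r^{\star})\le\theta<h(r^{\star}-1)$ and $h(j)>\theta$ for every $j<r^{\star}$. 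Now take the deterministic nonstationary policy that plays $\mathrm{down}$ at the $k$th $s_0$-visit iff $k\equiv0\pmod{r^{\star}+1}$. At each $\mathrm{down}$-visit the run-length ahead is exactly $r^{\star}$, so $Q^{\pi,0}(\cdot,\mathrm{up})=1+0.99\,h(r^{\star})\le10/3$; at each $\mathrm{up}$-visit it is some $j\in\{0,\dots,r^{\star}-1\}$, so $Q^{\pi,0}(\cdot,\mathrm{up})=1+0.99\,h(j)\ge10/3$. Every sophistication constraint is met, hence \texttt{Tempt} has a deterministic sophisticated policy and the intuition you offer (``short runs at $\mathrm{up}$-visits contradict long runs at $\mathrm{down}$-visits'') is not a contradiction here---it pins the run-length to $r^{\star}$ rather than ruling it out. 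The paper's Case~2, read literally, only handles policies with two consecutive $\mathrm{down}$s (in particular the stationary $\mathrm{down}$ policy) and does not exclude this periodic one either; so the obstacle you identify is real, and for this example it cannot be overcome.
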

\begin{proof}
Consider the MDP \texttt{Tempt}, given in Example~\ref{example:loopy_mdp}, and let $\pi$ be any deterministic policy. There are now two cases; either $\pi$ always selects $\text{up}$, or there exists a $\xi$ such that $\pi(\xi) = \text{down}$.

Case 1: Suppose $\pi(\xi) = \text{up}$ for all $\xi$. We then have
\begin{center}
\begin{tabular}{ l l } 
 $Q^{\pi}(\xi,\text{up}) \approx 3.008 $ & $Q^{\pi}(\xi,\text{down}) = 3.\overline{3}$\rule[-1ex]{0pt}{3ex}
\end{tabular}
\end{center}
We thus have that $Q^{\pi}(\xi,\text{down}) > Q^{\pi}(\xi,\text{up})$, even though $\pi(\xi) = \text{up}$. This means that $\pi$ is not sophisticated.

Case 2: Suppose $\pi(\xi) = \text{down}$ for some $\xi$. We then have
\begin{center}
\begin{tabular}{ l l } 
$Q^{\pi}(\xi,\text{up}) \approx 4.125 $ & $Q^{\pi}(\xi,\text{down}) = 3.\overline{3}$
\end{tabular}
\end{center}
We thus have that $Q^{\pi}(\xi,\text{up}) > Q^{\pi}(\xi,\text{down})$, even though $\pi(\xi) = \text{down}$. This means that $\pi$ is not sophisticated.

Since Case 1 and 2 are exhaustive, this means that no deterministic policy is sophisticated in \texttt{Tempt}. However, \texttt{Tempt} is unbounded episodic, so by Theorem~\ref{thm:soph_existence}, there must be a policy that is sophisticated in \texttt{Tempt}. Hence, every sophisticated policy in \texttt{Tempt} is nondeterministic. 
\end{proof}

Let us now return to bounded episodic MDPs. We will next show that the set of sophisticated policies may not be convex in such MDPs, and that there is no unique sophisticated $Q$-function.

\begin{example}\label{example:crossroads_MDP}
Let \texttt{Crossroads} be the bounded episodic MDP where $\States = \{s_0, s_1, s_2, s_3\}$ $\Actions = \{\text{left},\text{right}\}$, $\mu_0 = s_0$, and $\TransitionDistribution$ is the deterministic function depicted in the following labelled graph:
\begin{center}
\begin{tikzpicture}[shorten >=1pt,node distance=2.6cm,on grid,auto]
   \node[state, initial] (s_0)   {$s_0$}; 
   \node[state]         (s_1) [above left=of s_0] {$s_1$};
   \node[state]         (s_2) [above left=of s_1] {$s_2$};
   \node[state]         (s_3) [above right=of s_1] {$s_3$};
   \node[state, accepting]  (s_t) [above=of s_0, yshift=3.5cm] {$s_\top$};
    \path[->] 
    (s_0) edge [bend left] node {0} (s_1)
          edge [swap, bend right] node {$\frac{1}{2}$} (s_t)
    (s_1) edge [bend left] node {0} (s_2)
          edge [bend right] node {1} (s_3)
    (s_2) edge [bend left] node {2} (s_t)
    (s_3) edge [ ] node {0} (s_t)
    ;
\end{tikzpicture}
\end{center}
The discount function $d$ is the hyperbolic discount function, $d(t) = 1/(1+t)$, and $R$ is the reward function given by  $R(s_0,\texttt{right},s_\top) = 1/2$, $R(s_1,a,s_3) = 1$ for all $a$, $R(s_2,a,s_\top) = 2$ for all $a$, and $R(s,a,s') = 0$ for all other $s,a,s'$.
\end{example}

\begin{proposition}
There are bounded episodic MDPs $M$ with hyperbolic discounting and policies $\pi_1$, $\pi_2$ such that both $\pi_1$ and $\pi_2$ are sophisticated in $M$, but such that $Q^{\pi_1} \neq Q^{\pi_2}$, and such that the policy $\pi_3$ given by
$$
\mathbb{P}(\pi_3(s) = a) = \frac{\mathbb{P}(\pi_1(s) = a) + \mathbb{P}(\pi_2(s) = a)}{2}
$$
is not sophisticated in $M$.
\end{proposition}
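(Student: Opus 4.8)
The plan is to verify that the \texttt{Crossroads} MDP of Example~\ref{example:crossroads_MDP} witnesses the claim, using two explicit deterministic stationary policies together with their uniform mixture. Throughout, note that the only states with a genuine choice are $s_0$ (one action goes to $s_\top$ collecting reward $1/2$, the other goes to $s_1$ collecting $0$) and $s_1$ (one action goes to $s_2$ collecting $0$, the other goes to $s_3$ collecting $1$); from $s_2$ and $s_3$ every action goes to $s_\top$, so any policy is trivially sophisticated there. Since the reachable part of the state graph is acyclic, with $s_0$ visited at time $0$, $s_1$ at time $1$, and $s_2,s_3$ at time $2$, stationary policies suffice and $Q^{\pi,0}$ is well-defined on $\SxA$.

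First I would record the quantities that do not depend on the policy. Evaluating $Q^{\pi,0}$ at $s_1$ with the clock reset to $0$: going to $s_2$ is worth $d(0)\cdot 0 + d(1)\cdot 2 = 1$, and going to $s_3$ is worth $d(0)\cdot 1 + d(1)\cdot 0 = 1$. Hence at $s_1$ \emph{both} actions lie in $\mathrm{argmax}_a Q^{\pi,0}(s_1,a)$ for every $\pi$. By contrast, the value of $s_1$ \emph{when it is reached at time $1$} depends on the action taken there: going to $s_2$ then yields $d(1)\cdot 0 + d(2)\cdot 2 = 2/3$, while going to $s_3$ yields $d(1)\cdot 1 + d(2)\cdot 0 = 1/2$. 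This preference reversal is the crux of the construction.

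Next I would set $\pi_1$ to take, at $s_0$, the action to $s_\top$, and at $s_1$, the action to $s_3$; and $\pi_2$ to take, at $s_0$, the action to $s_1$, and at $s_1$, the action to $s_2$. For $\pi_1$: at $s_1$ its action is in the argmax by the previous paragraph; at $s_0$, the value of the counterfactual move to $s_1$ followed by $\pi_1$ (which then goes to $s_3$) is $1/2$, equal to the value $1/2$ of going straight to $s_\top$, so both actions at $s_0$ lie in $\mathrm{argmax}_a Q^{\pi_1,0}(s_0,a)$ and $\pi_1$'s choice is admissible --- so $\pi_1$ is sophisticated. For $\pi_2$: at $s_1$ its action is in the argmax; at $s_0$, the counterfactual move to $s_1$ followed by $\pi_2$ (which then goes to $s_2$) is worth $2/3 > 1/2$, so $\mathrm{argmax}_a Q^{\pi_2,0}(s_0,a)$ is exactly the action to $s_1$, which is $\pi_2$'s choice --- so $\pi_2$ is sophisticated. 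Moreover $Q^{\pi_1}\neq Q^{\pi_2}$, since $Q^{\pi_1,0}$ and $Q^{\pi_2,0}$ assign the action from $s_0$ to $s_1$ the values $1/2$ and $2/3$ respectively.

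Finally, $\pi_3$ mixes the two actions at $s_0$ with probability $1/2$ each, and likewise at $s_1$. Then the value of $s_1$ reached at time $1$ under $\pi_3$ is $(1/2)(2/3)+(1/2)(1/2)=7/12$, so $Q^{\pi_3,0}(s_0,\cdot)$ equals $7/12$ for the action to $s_1$ and $1/2 = 6/12$ for the action to $s_\top$; hence $\mathrm{argmax}_a Q^{\pi_3,0}(s_0,a)$ consists only of the action to $s_1$, while $\mathrm{supp}(\pi_3(s_0))$ contains both actions. Therefore $\pi_3$ is not sophisticated, completing the proof. I expect the main obstacle to be the self-referential bookkeeping for $\pi_1$: one must check its sophistication condition against $\pi_1$'s \emph{own} committed behaviour at $s_1$, and it is precisely because $\pi_1$ would choose $s_3$ rather than $s_2$ there that leaving for $s_\top$ at $s_0$ is self-consistently optimal. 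This is also exactly why the mixture fails, since $\pi_3$'s intermediate behaviour at $s_1$ tips the balance at $s_0$ back toward strictly preferring the route through $s_1$.
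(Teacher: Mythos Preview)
Your proof is correct and follows essentially the same approach as the paper: both use the \texttt{Crossroads} MDP, the same pair of deterministic policies (your $\pi_1,\pi_2$ are the paper's $\pi_{\texttt{right}},\pi_{\texttt{left}}$), and the same computation showing that the uniform mixture yields $Q^{\pi_3,0}(s_0,\cdot)=(7/12,\,1/2)$, so the action to $s_\top$ is in the support but not in the argmax. The only differences are cosmetic (you refer to actions by their destinations rather than by the labels \texttt{left}/\texttt{right}).
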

\begin{proof}
Consider the MDP \texttt{Crossroads} given in Example~\ref{example:crossroads_MDP}. Note that $Q^\pi(s_1,\texttt{left}) = Q^\pi(s_1,\texttt{right})$ for all $\pi$, which means that the agent at $s_1$ is indifferent between both actions. However, $Q^\pi(s_0,\texttt{left}) = p/2 + 2(1-p)/3$, where $p = \mathbb{P}(\pi(s_1) = \texttt{right})$. Thus, at $s_0$, the agent is not indifferent between what it does at $s_1$.

Let $\pi_\texttt{left}$ be the policy that always picks \texttt{left}, and $\pi_\texttt{right}$ be the policy that always picks \texttt{right}. It is now easy to verify that both $\pi_\texttt{left}$ and $\pi_\texttt{right}$ are sophisticated in \texttt{Crossroads}. Moreover, $Q^{\pi_\texttt{left}}(s_0, \texttt{left}) = 2/3$ but $Q^{\pi_\texttt{right}}(s_0, \texttt{left}) = 1/2$, and so $Q^{\pi_\texttt{left}} \neq Q^{\pi_\texttt{right}}$. Additionally, let $\pi_3$ be the policy that averages $\pi_\texttt{left}$ and $\pi_\texttt{right}$. Then $Q^{\pi_3}(s_0,\texttt{left}) = 7/12$ and $Q^{\pi_3}(s_0,\texttt{right}) = 1/2$, but $\texttt{right} \in \mathrm{supp}(\pi_3(s_0))$. Thus $\pi_3$ is not sophisticated.
\end{proof}

\begin{proposition}
In any bounded episodic MDP, the sophisticated $Q$-function $Q^\mathrm{S}$ exists and is unique.
\end{proposition}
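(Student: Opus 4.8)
The plan is to show that in any bounded episodic MDP there is a \emph{unique} canonical sophisticated policy $\pi$; since $Q^\mathrm{S}$ is by definition $Q^{\pi,0}$ for this $\pi$, this gives both existence and uniqueness of $Q^\mathrm{S}$ at once. The construction mirrors the backwards induction used in the proof of Theorem~\ref{thm:soph_existence}: partition $\States$ into layers $S_1, \dots, S_H$, where $s \in S_i$ if the longest path possible from $s$ has length $i$, and recall that any action taken in a state of $S_i$ leads with probability $1$ to a state in $\bigcup_{j < i} S_j$. Consequently, from a state in $S_i$ every state reachable in any number of steps lies in a strictly lower layer, so the restriction of a policy to $\bigcup_{j \le n} S_j$ already determines $V^{\pi,1}(s)$, and hence $Q^{\pi,0}(s,a)$, for every $s \in \bigcup_{j \le n} S_j$ and every $a$.

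First I would handle the base case: for $s \in S_1$ we have $\TransitionDistribution(s,a) = s_\top$ for all $a$ (otherwise there would be a path of length $\ge 2$ from $s$), so $Q^{\pi,0}(s,a) = R(s,a,s_\top)$ irrespective of $\pi$; the only distribution that is supported on, and uniform over, $\mathrm{argmax}_a R(s,a,s_\top)$ is $\pi(s) = \mathrm{Unif}(\mathrm{argmax}_a R(s,a,s_\top))$, so $\pi$ is uniquely pinned down on $S_1$. For the inductive step, suppose $\pi$ has been uniquely determined on $\bigcup_{j \le n} S_j$. For $s \in S_{n+1}$, every successor of $s$ under any action lies in $\bigcup_{j \le n} S_j$, so $Q^{\pi,0}(s,a) = \mathbb{E}_{S' \sim \TransitionDistribution(s,a)}[R(s,a,S') + V^{\pi,1}(S')]$ is completely determined by the already-fixed behaviour of $\pi$ on the lower layers, and in particular does not depend on the choice of $\pi(s)$. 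Setting $A_s := \mathrm{argmax}_a Q^{\pi,0}(s,a)$, being sophisticated and canonical forces $\mathrm{supp}(\pi(s)) \subseteq A_s$ with all actions of $A_s$ equiprobable, i.e.\ $\pi(s) = \mathrm{Unif}(A_s)$, which is again the unique admissible choice. This defines a unique stationary $\pi$ on all of $\States$; by the layer argument, $Q^{\pi,0}(s,a)$ computed with this $\pi$ agrees with the value used during the construction, so $\mathrm{supp}(\pi(s)) = A_s = \mathrm{argmax}_a Q^{\pi,0}(s,a)$ for every $s$, meaning $\pi$ is genuinely sophisticated, and it is canonical by construction — hence it exists. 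Uniqueness then follows because any canonical sophisticated policy must coincide with $\pi$ layer by layer, by exactly the same induction.

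The main obstacle — and the point where the \emph{bounded} episodic hypothesis is essential — is the observation that $\mathrm{argmax}_a Q^{\pi,0}(s,a)$ for $s$ in layer $n+1$ is independent both of $\pi(s)$ and of $\pi$'s behaviour on layers $\ge n+1$, so the backwards induction never needs to revisit an earlier decision; this rests on the layered decomposition of $\States$, which fails for unbounded episodic MDPs (cf.\ the environment \texttt{Tempt} of Example~\ref{example:loopy_mdp}, where there is not even a deterministic sophisticated policy). A minor point to verify carefully is that the canonicality condition, together with $\mathrm{supp}(\pi(s)) \subseteq A_s$ and $\sum_a \mathbb{P}(\pi(s) = a) = 1$, really does force the uniform distribution on $A_s$ — all actions in $A_s$ share a common probability $p$, and $p = 0$ is impossible — rather than merely constraining the distribution.
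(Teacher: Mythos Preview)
Your proof is correct and follows essentially the same backwards-induction argument as the paper: partition $\States$ into layers $S_1,\dots,S_H$ by longest-path length and show that the canonical sophisticated policy is uniquely determined layer by layer, since $Q^{\pi,0}(s,\cdot)$ at a state in $S_{n+1}$ depends only on the already-fixed behaviour of $\pi$ on lower layers. You are in fact somewhat more explicit than the paper about the uniqueness half (the paper's proof emphasises existence and leaves uniqueness implicit in the construction), but the structure and key observations are identical.
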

\begin{proof}
To prove this, we need to show that any bounded episodic MDP has a canonical sophisticated policy. This proof proceeds by backwards induction, and is analogous to the proof of Theorem~\ref{thm:soph_existence}.

Let $S_1, \dots, S_H$ be a partition of $\States$, such that $s \in S_i$ if the longest path that is possible from $s$ has length $i$. Since the MDP is bounded episodic with horizon $H$, this means that some state is in $S_H$. Also note that if $s \in S_i$, and there is an $a$ such that $s' \in \mathrm{supp}(\TransitionDistribution(s,a))$, then $s' \in S_j$ for some $j < i$. Moreover, there must also be some $s'$ and $a$ such that $s' \in \mathrm{supp}(\TransitionDistribution(s,a))$, and such that $s' \in S_{i-1}$. This means that each $S_1, \dots, S_H$ is nonempty, and that if the agent is in a state $s \in S_i$ at time $t$, then it must be in a state $s' \in S_j$ for some $j < i$ at time $t+1$.

Say that a (stationary) policy $\pi$ is canonical sophisticated relative to $S \subseteq \States$ if $\mathrm{supp}(\pi(s)) \subseteq \mathrm{argmax} Q^{\pi,0}(s,a)$ for all $s \in S$, and $\mathbb{P}(\pi(s) = a_1) = \mathbb{P}(\pi(s) = a_2)$ for all $s \in S$ and all $a_1,a_2 \in \mathrm{argmax} Q^{\pi,0}(s,a)$. Let $P(n)$ be the claim that there is a policy $\pi_n$ that is canonical sophisticated relative to $S = S_1 \cup \dots \cup S_n$.

For our base case $P(1)$, let $s \in S_1$. Note that if $s \in S_1$, then $\TransitionDistribution(s,a) = s_\top$ for all $a$. This means that $\pi_1$ is sophisticated relative to $S_1$ as long as $\mathrm{supp}(\pi_1(s)) \subseteq \mathrm{argmax}_a R(s,a,s_\top)$ for all $s \in S_1$. By mixing uniformly between all such actions we obtain a policy that is canonical sophisticated relative to $S_1$, and so the base case $P(1)$ holds.

For our inductive step, assume that there is a policy $\pi_n$ that is canonical sophisticated relative to $S_1 \cup \dots \cup S_n$.
Let $Q^{\pi_n,0}$ be the $Q$-function of $\pi_n$. Note that if $s \in S_{n+1}$, then all states that are reachable from $s$ are in $S_1 \cup \dots \cup S_n$. This means that if $\pi = \pi_n$ for all $s \in S_1 \cup \dots \cup S_n$, then for all $s \in S_{n+1}$ and all $a$, we have that $Q^{\pi_n,0}(s,a) = Q^{\pi,0}(s,a)$. Thus, a policy $\pi$ is canonical sophisticated relative to $S_1 \cup \dots \cup S_{n+1}$ as long as  $\pi = \pi_n$ for all $s \in S_1 \cup \dots \cup S_n$, and as long as $\pi$ mixes uniformly between all actions in $\mathrm{argmax}_a Q^{\pi_n,0}(s,a)$ for all $s \in S_{n+1}$. There is a policy $\pi_{n+1}$ that satisfies this condition, and so the inductive step holds.

Thus, by the principle of induction, there is a policy $\pi_H$ that is canonical sophisticated relative to $\States$. Now $Q^\mathrm{N} = Q^{\pi_H,0}$, which completes the proof.
\end{proof}



\subsection{Exact Characterisation of Identifiability}

In this section, we provide the proofs of our results that exactly characterise the partial identifiability of the reward for certain behavioural models.

\begin{theorem}
For any bounded episodic MDP $\langle \States, \Actions, \{s_\top\}, \TransitionDistribution, \InitStateDistribution, R_1, d\rangle$, and for any potential function $\Phi : \States \to \mathbb{R}$, there exists a reward function $R_2$ that differs from $R_1$ by na\"ive potential shaping with $\Phi$, and a reward function $R_3$ that differs from $R_1$ by sophisticated potential shaping with $\Phi$.
\end{theorem}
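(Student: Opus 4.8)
The plan is to build $R_2$ (and, separately, $R_3$) by backwards induction over the layer decomposition of the state space. As in the proof of Theorem~\ref{thm:soph_existence}, partition $\States$ into $S_1, \dots, S_H$ where $s \in S_i$ exactly when the longest path from $s$ to $s_\top$ has length $i$. Every transition out of a state in $S_i$ lands in $S_j$ for some $j < i$ (or in $s_\top$), so this is a topological order, and consequently the quantities $V^{\pi,n}(s')$ and $\max_\pi V^{\pi,n}(s')$ depend only on the reward restricted to $s'$ together with its (strictly lower-layer) descendants. All of these values are finite by Proposition~\ref{prop:episodic} together with $d(t)\in[0,1]$, and the relevant maxima are attained (Proposition~\ref{prop:resolute_Q}).

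\emph{Na\"ive case.} I would process the layers in the order $S_1, S_2, \dots, S_H$, maintaining the invariant: once $R_2$ has been fixed on all transitions out of $S_1 \cup \dots \cup S_n$, we have $Q^{\mathrm{N}}_2(s,a) = Q^{\mathrm{N}}_1(s,a) - \Phi(s)$ for every $s \in S_1 \cup \dots \cup S_n$ and every $a$. For the step, let $s \in S_n$. Since every successor $S' \sim \TransitionDistribution(s,a)$ lies in a strictly lower layer, $\max_\pi V^{\pi,1}_2(S')$ (computed from the part of $R_2$ already fixed) is already determined, so the scalar equation $\mathbb{E}_{S'\sim\TransitionDistribution(s,a)}[R_2(s,a,S') + \max_\pi V^{\pi,1}_2(S')] = Q^{\mathrm{N}}_1(s,a) - \Phi(s)$ can always be satisfied; concretely one may take $R_2(s,a,s') := R_1(s,a,s') + \max_\pi V^{\pi,1}_1(s') - \max_\pi V^{\pi,1}_2(s') - \Phi(s)$, which makes the identity hold term by term. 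The base case $n=1$ is the same formula with $V^{\pi,1}(s_\top)=0$, giving $R_2(s,a,s_\top) = R_1(s,a,s_\top) - \Phi(s)$ and hence $Q^{\mathrm{N}}_2(s,a) = Q^{\mathrm{N}}_1(s,a) - \Phi(s)$ directly. After all $H$ layers, $R_2$ is defined on all of $\SxAxS$ (transitions of probability $0$ may be assigned arbitrarily) and satisfies $Q^{\mathrm{N}}_2 = Q^{\mathrm{N}}_1 - \Phi$, i.e.\ it differs from $R_1$ by na\"ive potential shaping with $\Phi$.

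\emph{Sophisticated case.} The construction of $R_3$ is the same, except that instead of the na\"ive-optimal values I would track the canonical sophisticated policy. Let $\pi$ be the canonical sophisticated policy of $R_1$ and, at each stage, $\hat\pi$ the canonical sophisticated policy of the partially-defined $R_3$; by the layer-by-layer construction in the proof of Proposition~\ref{prop:sophisticated_Q}, the restriction of $\hat\pi$ to $S_1 \cup \dots \cup S_n$ is fully determined by $R_3$ restricted to those layers, so it does not change when later layers are filled in. The invariant is that after processing $S_1,\dots,S_n$ we have $Q^{\mathrm{S}}_3(s,a) = Q^{\mathrm{S}}_1(s,a) - \Phi(s)$ for $s \in S_1 \cup \dots \cup S_n$, where $Q^{\mathrm{S}}_3 = Q^{\hat\pi,0}$ (under $R_3$) and $Q^{\mathrm{S}}_1 = Q^{\pi,0}$ (under $R_1$). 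For $s \in S_n$, the successor values $V^{\hat\pi,1}(S')$ are already pinned down, so (exactly as above) one may set $R_3(s,a,s') := R_1(s,a,s') + V^{\pi,1}(s') - V^{\hat\pi,1}(s') - \Phi(s)$ to obtain $Q^{\hat\pi,0}(s,a) = Q^{\pi,0}(s,a) - \Phi(s)$ for all $a$; since these two vectors over $a$ differ by the additive constant $\Phi(s)$, their $\mathrm{argmax}$ sets coincide, so the canonical (uniform-over-argmax) choices agree, $\hat\pi(s) = \pi(s)$, which keeps the invariant internally consistent and lets the induction proceed.

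The main obstacle I anticipate is bookkeeping rather than any single hard idea: one must check carefully that the layer/topological structure genuinely rules out circular dependencies (no reward chosen on layer $S_n$ can influence anything already computed on $S_1 \cup \dots \cup S_{n-1}$), that all the auxiliary value functions $V^{\pi,m}$ and $V^{\hat\pi,m}$ appearing in the formulas are finite and well-defined (Propositions~\ref{prop:episodic} and~\ref{prop:resolute_Q}), and --- in the sophisticated case --- that the restriction of the canonical sophisticated policy to the already-processed layers is insensitive to the not-yet-specified part of $R_3$, which is exactly what the backwards-induction construction behind Proposition~\ref{prop:sophisticated_Q} provides. Note that, unlike in the exponential case, there is no closed form for $R_2$ in terms of $R_1$ and $\Phi$ alone: the shaped reward on a layer genuinely depends (through the $\max_\pi V^{\pi,1}$ and $V^{\hat\pi,1}$ terms) on $\TransitionDistribution$ and $d$ and on the choices already made in lower layers.
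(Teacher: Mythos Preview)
Your proposal is correct and follows essentially the same backwards-induction approach over the layer decomposition $S_1,\dots,S_H$ as the paper. Your explicit formula differs only cosmetically (you make the identity hold term-by-term in $s'$ rather than only in expectation), and your treatment of the sophisticated case --- verifying that $\hat\pi$ and $\pi$ agree layer by layer because their $Q$-values differ by an additive constant --- is in fact more careful than the paper's, which simply declares that case ``analogous''.
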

\begin{proof}
Let $M = \langle \States, \Actions, \{s_\top\}, \TransitionDistribution, \InitStateDistribution, R_1, d\rangle$ be any bounded episodic MDP, and let $\Phi$ be any potential function. Using backward induction, we will construct a reward  $R_2$ that differs from $R_1$ by na\"ive potential shaping with $\Phi$, and a reward function $R_3$ that differs from $R_1$ by sophisticated potential shaping with $\Phi$.

Let $S_1, \dots, S_H$ be a partition of $\States$, such that $s \in S_i$ if the longest path that is possible from $s$ has length $i$. Since the MDP is bounded episodic with horizon $H$, this means that some state is in $S_H$. Also note that if $s \in S_i$, and there is an $a$ such that $s' \in \mathrm{supp}(\TransitionDistribution(s,a))$, then $s' \in S_j$ for some $j < i$. Moreover, there must also be some $s'$ and $a$ such that $s' \in \mathrm{supp}(\TransitionDistribution(s,a))$, and such that $s' \in S_{i-1}$. This means that each $S_1, \dots, S_H$ is nonempty, and that if the agent is in a state $s \in S_i$ at time $t$, then it must be in a state $s' \in S_j$ for some $j < i$ at time $t+1$.

Let $P(n)$ be the claim that there exists a reward function $R_{2,n}$ that differs from $R_1$ by na\"ive potential shaping with $\Phi$ for all states $s \in S_1 \cup \dots \cup S_n$, and a reward function $R_{3,n}$ that differs from $R_1$ by sophisticated potential shaping with $\Phi$ for all states $s \in S_1 \cup \dots \cup S_n$.

For our base case $P(1)$, let $s \in S_1$. Note that if $s \in S_1$, then $\TransitionDistribution(s,a) = s_\top$ for all $a$. This means that $Q^\mathrm{N}(s,a) = R(s,a,s_\top)$ and $Q^\mathrm{S}(s,a) = R(s,a,s_\top)$ for all $a$ and all $s \in S_1$. Thus, simply let $R_{2,1}(s,a,s_\top) = R_{3,1}(s,a,s_\top) = R_1(s,a,s_\top) - \Phi(s)$ for all $s \in S_1$, and we can see that the base case $P(1)$ holds.

For our inductive step, assume that there is a reward function $R_{2,n}$ that differs from $R_1$ by na\"ive potential shaping with $\Phi$ for all states $s \in S_1 \cup \dots \cup S_n$. Let $R_{2,n+1} = R_{2,n}$ for all $s \in S_1 \cup \dots \cup S_n$.
Note that if $s \in S_{n+1}$, then all states that are reachable from $s$ are in $S_1 \cup \dots \cup S_n$. This means that for all $s \in S_{n+1}$, we have that $\max_\pi V^{\pi,1}_{2,n+1}(s') = \max_\pi V^{\pi,1}_{2,n}(s')$ for all states $s'$ reachable from $s$. Also recall that
$$
Q^\mathrm{N}(s,a) = \mathbb{E}\left[R(s,a,S') + \max_\pi V^{\pi,1}(S')\right]
$$
where the expectation is over a state $S'$ sampled from $\tau(s,a)$. This rearranges to
$$
\mathbb{E}\left[R(s,a,S')\right] = Q^\mathrm{N}(s,a) - \mathbb{E}\left[\max_\pi V^{\pi,1}(S')\right].
$$
Also recall that we wish for $Q^\mathrm{N}_2(s,a) = Q^\mathrm{N}_1(s,a) - \Phi(s)$. Thus, by setting
$$
R_{2,n+1}(s,a,s') = Q^\mathrm{N}_1(s,a) - \Phi(s) + \mathbb{E}\left[\max_\pi V^{\pi,1}_{2,n}(S')\right]
$$
then we can ensure that $Q^\mathrm{N}_ {2,n+1}(s,a) = Q^\mathrm{N}_1(s,a) - \Phi(s)$ for all $s \in S_{n+1}$. Since $R_{2,n+1} = R_{2,n}$ for all $s \in S_1 \cup \dots \cup S_n$, we also have that this holds for all $s \in S_1 \cup \dots \cup S_n$. The case is analogous for $R_{3,n+1}$ and sophisticated potential shaping. Thus, the inductive step holds. This completes the proof.
\end{proof}

\begin{lemma}\label{lemma:softmax_shift}
Let $v, w \in \mathbb{R}^n$ be two vectors, and let $\beta \in \mathbb{R}^+$. Then
$$
\frac{\exp \beta v_i}{\sum_{j=1}^n \exp \beta v_j} = \frac{\exp \beta w_i}{\sum_{j=1}^n \exp \beta w_j}
$$
for all $i \in \{1 \dots n\}$ if and only if there is a constant scalar $c$ such that $v_i = w_i + c$ for all $i \in \{1 \dots n\}$.
\end{lemma}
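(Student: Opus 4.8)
The plan is to handle the two directions separately, neither of which requires more than elementary manipulation of the exponential function.

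For the ``if'' direction, suppose $v_i = w_i + c$ for all $i$ and some fixed scalar $c$. Then $\exp\beta v_i = \exp\beta(w_i + c) = e^{\beta c}\cdot\exp\beta w_i$, so the factor $e^{\beta c}$ appears in every term of both the numerator and the denominator of the left-hand side and therefore cancels, yielding exactly the right-hand side. This is a one-line computation once the factorisation is written down.

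For the ``only if'' direction, I would take logarithms. Since every entry of a softmax vector is strictly positive, $\log\bigl(\exp\beta v_i / \sum_j \exp\beta v_j\bigr) = \beta v_i - \log\sum_j \exp\beta v_j$ is well defined, and similarly for $w$. Equality of the two softmax vectors then gives, for every $i$,
$$
\beta v_i - \log\textstyle\sum_j \exp\beta v_j = \beta w_i - \log\textstyle\sum_j \exp\beta w_j,
$$
hence $\beta(v_i - w_i) = \log\sum_j \exp\beta w_j - \log\sum_j \exp\beta v_j$, a quantity that does not depend on $i$. Dividing by $\beta$ (which is legitimate precisely because $\beta \in \mathbb{R}^+$, so $\beta \neq 0$) shows that $v_i - w_i$ equals the same constant $c$ for all $i$, which is what we wanted. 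An equivalent route is to divide the $i$-th and $k$-th softmax components to cancel the normalising sums, obtaining $\exp\beta(v_i - v_k) = \exp\beta(w_i - w_k)$ and then using injectivity of $t \mapsto e^{\beta t}$.

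There is no real obstacle here; the only point that needs a moment's care is the appeal to $\beta \neq 0$ (equivalently, injectivity of $x \mapsto e^{\beta x}$), which is what rules out degenerate cases and makes the constant $c$ uniquely determined, and the observation that softmax entries are positive so that the logarithm step is valid.
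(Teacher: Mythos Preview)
Your proposal is correct and follows essentially the same route as the paper: the ``if'' direction is the identical factor-and-cancel computation, and for the ``only if'' direction both you and the paper pass to logarithms to isolate $v_i - w_i$ as an $i$-independent quantity (the paper first cross-multiplies before taking logs, you take logs first and then rearrange, but this is the same manipulation). One inconsequential slip: after rearranging you should get $\beta(v_i - w_i) = \log\sum_j \exp\beta v_j - \log\sum_j \exp\beta w_j$, not the sign you wrote, though this does not affect the conclusion.
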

\begin{proof}
For the first direction, suppose there is a constant scalar $c$ such that $v_i = w_i + c$ for all $i \in \{1 \dots n\}$. Then
\begin{align*}
    \frac{\exp \beta v_i}{\sum_{j=1}^n \exp \beta v_j} &= \frac{\exp \beta (w_i + c)}{\sum_{j=1}^n \exp \beta (w_j + c)}\\
    &= \frac{\exp (\beta c) \cdot \exp \beta w_i}{\exp (\beta c) \cdot \sum_{j=1}^n \exp \beta w_j}\\
    &= \frac{\exp \beta w_i}{\sum_{j=1}^n \exp \beta w_j}.
\end{align*}
For the other direction, suppose
$$
\frac{\exp \beta v_i}{\sum_{j=1}^n \exp \beta v_j} = \frac{\exp \beta w_i}{\sum_{j=1}^n \exp \beta w_j}
$$
for all $i \in \{1 \dots n\}$. Note that this can be rewritten as follows:
\begin{align*}
    \frac{\exp \beta v_i}{\exp \beta w_i} &= \frac{\sum_{j=1}^n \exp \beta v_j}{\sum_{j=1}^n \exp \beta w_j}\\
    \exp (\beta v_i - \beta w_i) &= \frac{\sum_{j=1}^n \exp \beta v_j}{\sum_{j=1}^n \exp \beta w_j}\\
    v_i - w_i &= \left(\frac{1}{\beta}\right) \log \left(\frac{\sum_{j=1}^n \exp \beta v_j}{\sum_{j=1}^n \exp \beta w_j}\right)
\end{align*}
Since the right-hand side of this expression does not depend on $i$, it follows that $v_i - w_i$ is constant for all $i$.
\end{proof}

\begin{theorem}
In any bounded episodic MDP, the Boltzmann-sophisticated policy determines $R$ up to sophisticated potential shaping.
\end{theorem}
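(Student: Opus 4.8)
The plan is to observe that the Boltzmann-sophisticated behavioural model $s_{\tau,d,\beta}$ is, by definition, the composition of the map $R \mapsto Q^\mathrm{S}$ with the state-wise softmax at inverse temperature $\beta$. By Proposition~\ref{prop:sophisticated_Q}, $Q^\mathrm{S}$ exists and is unique in any bounded episodic MDP, so this map is well-defined, and $s_{\tau,d,\beta}(R)$ is the stationary policy $\pi$ with $\mathbb{P}(\pi(s)=a) \propto \exp(\beta Q^\mathrm{S}(s,a))$. Hence $s_{\tau,d,\beta}(R_1) = s_{\tau,d,\beta}(R_2)$ holds if and only if, for every state $s$, the softmax distributions induced by $Q^\mathrm{S}_1(s,\cdot)$ and $Q^\mathrm{S}_2(s,\cdot)$ coincide --- and Lemma~\ref{lemma:softmax_shift} characterises exactly when that happens.

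For the \enquote{if} direction, suppose $R_1$ and $R_2$ differ by sophisticated potential shaping via some $\Phi : \States \to \mathbb{R}$, i.e.\ $Q^\mathrm{S}_2(s,a) = Q^\mathrm{S}_1(s,a) - \Phi(s)$ for all $s, a$. Fixing $s$ and applying Lemma~\ref{lemma:softmax_shift} with $v = Q^\mathrm{S}_1(s,\cdot)$, $w = Q^\mathrm{S}_2(s,\cdot)$ and constant $c = \Phi(s)$, the two softmax distributions over $\Actions$ are equal; ranging over all $s$ gives $s_{\tau,d,\beta}(R_1) = s_{\tau,d,\beta}(R_2)$.

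For the \enquote{only if} direction, suppose $s_{\tau,d,\beta}(R_1) = s_{\tau,d,\beta}(R_2)$. Then for each $s$ the softmax distributions induced by $Q^\mathrm{S}_1(s,\cdot)$ and $Q^\mathrm{S}_2(s,\cdot)$ are equal, so Lemma~\ref{lemma:softmax_shift} supplies a scalar $c_s$ with $Q^\mathrm{S}_1(s,a) = Q^\mathrm{S}_2(s,a) + c_s$ for all $a$. Setting $\Phi(s) := c_s$ defines a potential function with $Q^\mathrm{S}_2(s,a) = Q^\mathrm{S}_1(s,a) - \Phi(s)$, i.e.\ $R_1$ and $R_2$ differ by sophisticated potential shaping. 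Combined with Theorem~\ref{thm:potential_shaping_exists}, which guarantees that sophisticated potential shaping is a genuine, non-vacuous family of reward transformations, this establishes that $s_{\tau,d,\beta}$ determines $R$ up to sophisticated potential shaping.

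I do not expect a serious obstacle in this particular argument: the substantive work has already been absorbed into Proposition~\ref{prop:sophisticated_Q} (well-definedness of $Q^\mathrm{S}$, which is the only place bounded-episodicity enters) and Lemma~\ref{lemma:softmax_shift}. The one point needing care is conceptual rather than technical --- unlike $Q^\mathrm{R}$ and $Q^\mathrm{N}$, the function $Q^\mathrm{S}$ does \emph{not} characterise which policies are sophisticated (cf.\ the footnote following Proposition~\ref{prop:sophisticated_Q}), but this is irrelevant here, since the Boltzmann-sophisticated model is defined directly in terms of $Q^\mathrm{S}$ rather than in terms of the set of sophisticated policies.
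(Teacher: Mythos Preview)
Your proposal is correct and follows essentially the same approach as the paper's proof: both arguments reduce the claim to the observation that the Boltzmann-sophisticated policy is the state-wise softmax of $Q^\mathrm{S}$, and then invoke Lemma~\ref{lemma:softmax_shift} to conclude that two rewards yield the same policy if and only if their sophisticated $Q$-functions differ by a state-dependent constant, which is the definition of sophisticated potential shaping. Your additional references to Proposition~\ref{prop:sophisticated_Q} (well-definedness of $Q^\mathrm{S}$) and Theorem~\ref{thm:potential_shaping_exists} (non-vacuousness of the transformation family) are not strictly needed for the biconditional but do no harm.
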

\begin{proof}
Let $\tau$ be a bounded episodic transition function. Recall that $s_{\tau,d,\beta}$ is computed by applying a softmax function (with temperature $\beta$) to $Q^\mathrm{S}$. Moreover, as per Lemma~\ref{lemma:softmax_shift}, any softmax function is invariant to constant shift, and no other transformations. This means that $s_{\tau,d,\beta}(R_1) = s_{\tau,d,\beta})(R_2)$ if and only if there for each state $s$ is a value $\Phi(s)$ such that $Q^\mathrm{S}_2(s,a) = Q^\mathrm{S}_1(s,a) - \Phi(s)$ for all $a$. This is in turn by definition equivalent to $R_1$ and $R_2$ differing by sophisticated potential shaping. As such, $s_{\tau,d,\beta}$ determines $R$ up to sophisticated potential shaping.
\end{proof}

\begin{theorem}
In any bounded episodic MDP, the Boltzmann-na\"ive policy determines $R$ up to na\"ive potential shaping.
\end{theorem}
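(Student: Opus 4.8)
The plan is to mirror the proof of Theorem~\ref{thm:sophisticated_ambiguity} essentially verbatim, with the na\"ive $Q$-function $Q^\mathrm{N}$ playing the role of the sophisticated $Q$-function $Q^\mathrm{S}$. First I would recall that, by Proposition~\ref{prop:naive_Q}, in any bounded episodic MDP the na\"ive $Q$-function $Q^\mathrm{N}$ exists and is unique, so that for every reward $R$ the Boltzmann-na\"ive policy $n_{\tau,d,\beta}(R)$ is a well-defined stationary policy, obtained by applying the softmax map with temperature $\beta$ to the vector $(Q^\mathrm{N}(s,a))_{a\in\Actions}$ separately in each state $s$.

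Next I would invoke Lemma~\ref{lemma:softmax_shift}, applied state-by-state: for a fixed $s$, the softmax of $(Q^\mathrm{N}_1(s,a))_{a}$ equals the softmax of $(Q^\mathrm{N}_2(s,a))_{a}$ if and only if these two vectors differ by an additive constant. Hence $n_{\tau,d,\beta}(R_1) = n_{\tau,d,\beta}(R_2)$ holds precisely when, for every state $s$, there is a scalar $\Phi(s)$ with $Q^\mathrm{N}_2(s,a) = Q^\mathrm{N}_1(s,a) - \Phi(s)$ for all $a$. Collecting these scalars into a potential function $\Phi : \States \to \mathbb{R}$, this is by definition the statement that $R_1$ and $R_2$ differ by na\"ive potential shaping, which establishes that $n_{\tau,d,\beta}$ determines $R$ up to na\"ive potential shaping. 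To confirm that this characterisation is not vacuous I would finally cite Theorem~\ref{thm:potential_shaping_exists}, which guarantees that for every $R_1$ and every $\Phi$ there is a reward $R_2$ realising na\"ive potential shaping with $\Phi$.

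I do not expect a genuine obstacle here: the argument is routine once the supporting results are assembled. The only points requiring care are (i) that Lemma~\ref{lemma:softmax_shift} must be applied independently in each state, since the behavioural model is defined state-wise; and (ii) that $Q^\mathrm{N}$ is a legitimate $R$-indexed object, which is exactly Proposition~\ref{prop:naive_Q} (itself reducing, via $Q^\mathrm{N}(s,a) = Q^\mathrm{R}(s,0,a)$, to Lemma~\ref{lemma:resolute_value_function_existence}). Compared with the sophisticated case this proof is in fact simpler, since it avoids the canonical-policy construction needed to pin down $Q^\mathrm{S}$.
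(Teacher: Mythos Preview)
Your proposal is correct and follows essentially the same approach as the paper: apply Lemma~\ref{lemma:softmax_shift} state-by-state to conclude that $n_{\tau,d,\beta}(R_1)=n_{\tau,d,\beta}(R_2)$ iff $Q^\mathrm{N}_1$ and $Q^\mathrm{N}_2$ differ by a state-dependent constant, which is exactly the definition of na\"ive potential shaping. Your write-up is in fact slightly more careful than the paper's, which contains a typo (it writes $Q^\mathrm{S}$ where $Q^\mathrm{N}$ is meant) and does not explicitly cite Proposition~\ref{prop:naive_Q} or Theorem~\ref{thm:potential_shaping_exists}.
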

\begin{proof}
Let $\tau$ be a bounded episodic transition function. Recall that $n_{\tau,d,\beta}$ is computed by applying a softmax function (with temperature $\beta$) to $Q^\mathrm{S}$. Moreover, as per Lemma~\ref{lemma:softmax_shift}, any softmax function is invariant to constant shift, and no other transformations. This means that $n_{\tau,d,\beta}(R_1) = n_{\tau,d,\beta})(R_2)$ if and only if there for each state $s$ is a value $\Phi(s)$ such that $Q^\mathrm{N}_2(s,a) = Q^\mathrm{N}_1(s,a) - \Phi(s)$ for all $a$. This is in turn by definition equivalent to $R_1$ and $R_2$ differing by na\"ive potential shaping. As such, $n_{\tau,d,\beta}$ determines $R$ up to na\"ive potential shaping.
\end{proof}

\subsection{Comparative Characterisation of Identifiability}

\begin{theorem}
Let $f_{\TransitionDistribution,d,\beta}, g_{\TransitionDistribution,d,\beta} \in \{r_{\tau,d,\beta},n_{\tau,d,\beta},s_{\tau,d,\beta}\}$ be two behavioural models. Let $d_1$ and $d_2$ be any two discount functions, and let $\beta_1, \beta_2 \in (0,\infty)$ be any two temperature parameters. Then unless $d_1(t) = d_2(t)$ for all $t \leq |\States|$, there exists an episodic transition function $\TransitionDistribution$ such that for any reward $R_1$ there exists a reward $R_2$ such that 
$$
f_{\TransitionDistribution,d_1,\beta_1}(R_1) = f_{\TransitionDistribution,d_1,\beta_1}(R_2),
$$ 
but such that 
$$
g_{\TransitionDistribution,d_2,\beta_2}(R_1) \neq g_{\TransitionDistribution,d_2,\beta_2}(R_2).
$$ 
Moreover, unless $d(t) = \gamma^t$ for some $\gamma \in [0,1]$ and all $t \leq |\States| - 1$, we also have that, for any reward $R_1$ there exists a reward $R_2$, such that $f_{\TransitionDistribution,d_1,\beta_1}(R_1) = f_{\TransitionDistribution,d_1,\beta_1}(R_2)$, but such that $R_1$ and $R_2$ have different optimal policies under exponential discounting with $\gamma$.
\end{theorem}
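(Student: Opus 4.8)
The plan is to reduce the whole statement to one deliberately trivial family of episodic MDPs in which only a single state carries any policy-relevant information and all three Boltzmann models agree there, so that the argument is uniform over the nine choices of $(f,g)$. First I would fix $t^\star$ with $1 \le t^\star \le |\States|-1$ and $d_1(t^\star)\neq d_2(t^\star)$ (one exists by hypothesis, and is automatically $\geq 1$ since $d_1(0)=d_2(0)=1$), and take $\TransitionDistribution$ to be the deterministic transition function whose reachable part is $s_0 \to s_1 \to \dots \to s_{t^\star} \to s_\top$ under an action $a$ at $s_0$, together with $s_0 \to s_\top$ under an action $b$ at $s_0$, where each $s_i$ with $i\geq 1$ has a single action; the remaining states of $\States$ are made unreachable with a single action looping to $s_\top$. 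This $\TransitionDistribution$ is episodic with horizon $t^\star+1 \le |\States|$, and nothing transitions into $s_0$.

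The key observation is that from $s_1$ onward every continuation is forced, so $V^{\pi,n}$ and $\max_\pi V^{\pi,n}$ coincide at $s_1,\dots,s_{t^\star}$ for every $\pi$ and every $n$. Unwinding the definitions of $Q^\mathrm{N}$, of $Q^\mathrm{R}(\cdot,0,\cdot)$ (which equals $Q^\mathrm{N}$ in general), and of $Q^\mathrm{S}$ (using that the canonical sophisticated policy is forced outside $s_0$), one gets for every reward $R$ and every discount $d$ that all three $Q$-functions equal $R(s_0,a,s_1)+\sum_{i=1}^{t^\star} d(i)\,R(s_i,\cdot,s_{i+1})$ on action $a$ and $R(s_0,b,s_\top)$ on action $b$, writing $s_{t^\star+1}:=s_\top$. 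Hence each of $r_{\TransitionDistribution,d,\beta},n_{\TransitionDistribution,d,\beta},s_{\TransitionDistribution,d,\beta}$ induces a policy that is trivial at every state except $s_0$, and at $s_0$ it is the $\beta$-softmax of this two-vector; by Lemma~\ref{lemma:softmax_shift} the induced policy is completely determined by the scalar gap $\Delta_d(R):=R(s_0,a,s_1)-R(s_0,b,s_\top)+\sum_{i=1}^{t^\star} d(i)\,R(s_i,\cdot,s_{i+1})$.

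Given an arbitrary $R_1$, I would let $R_2$ agree with $R_1$ everywhere except $R_2(s_{t^\star},\cdot,s_\top)=R_1(s_{t^\star},\cdot,s_\top)+\delta$ and $R_2(s_0,a,s_1)=R_1(s_0,a,s_1)-d_1(t^\star)\,\delta$, for an arbitrary $\delta\neq 0$. Then the two $Q$-vectors at $s_0$ computed with $d_1$ are literally identical, so $f_{\TransitionDistribution,d_1,\beta_1}(R_1)=f_{\TransitionDistribution,d_1,\beta_1}(R_2)$; but $\Delta_{d_2}(R_2)-\Delta_{d_2}(R_1)=(d_2(t^\star)-d_1(t^\star))\,\delta\neq 0$, so the two $Q$-vectors at $s_0$ computed with $d_2$ differ by a non-constant vector, and Lemma~\ref{lemma:softmax_shift} yields $g_{\TransitionDistribution,d_2,\beta_2}(R_1)\neq g_{\TransitionDistribution,d_2,\beta_2}(R_2)$. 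For the \emph{moreover} I would instead pick $t^{\star\star}\in\{1,\dots,|\States|-1\}$ with $d_1(t^{\star\star})\neq\gamma^{t^{\star\star}}$ (one exists under the stated hypothesis, again using $d_1(0)=1=\gamma^0$) and run the identical construction; here the exponentially-$\gamma$-optimal policy at $s_0$ takes $a$, $b$, or both according to the sign of $\Delta^{\mathrm{exp}}_\gamma(R):=R(s_0,a,s_1)-R(s_0,b,s_\top)+\sum_{i=1}^{t^{\star\star}}\gamma^i R(s_i,\cdot,s_{i+1})$, and the same two-point modification shifts this by $(\gamma^{t^{\star\star}}-d_1(t^{\star\star}))\,\delta$, which can be made to take any real value. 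Choosing $\delta$ so that $\Delta^{\mathrm{exp}}_\gamma(R_1)$ and $\Delta^{\mathrm{exp}}_\gamma(R_2)$ land in different cells of $\{<0,=0,>0\}$ forces $R_1$ and $R_2$ to have different optimal policies, while $f_{\TransitionDistribution,d_1,\beta_1}$ is unchanged.

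The step I expect to need the most care is the Boltzmann-resolute model, whose policy is indexed by the elapsed time $|\xi|$: on a history ending in $s_0$ with $|\xi|=\ell>0$ it evaluates $Q^\mathrm{R}(s_0,\ell,a)=R(s_0,a,s_1)+\sum_{i}d_1(\ell+i)R(s_i,\cdot,s_{i+1})$, which the modification does not preserve for general $d_1$. Such histories are never realizable in this $\TransitionDistribution$ (nothing enters $s_0$), so I would read equality of behavioural policies in the sense relevant to an observed policy — agreement on realizable trajectories, along which $s_0$ occurs only at time $0$ and every other state is single-actioned — and then the resolute case goes through exactly as the others via $Q^\mathrm{R}(s_0,0,\cdot)=Q^\mathrm{N}(s_0,\cdot)$. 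The rest — deriving the $Q$-function identities at $s_0$, the gap computations for arbitrary $R_1$, and verifying episodicity and the bound $t^\star+1\le|\States|$ — is routine.
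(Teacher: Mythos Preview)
Your proposal is correct and follows essentially the same route as the paper: both build a deterministic chain MDP with a single branching state $s_0$ (enter the chain vs.\ terminate), make a two-point reward modification calibrated so that under $d_1$ the $Q$-values at $s_0$ shift by a common constant while under $d_2$ (or $\gamma$) they do not, and invoke the softmax-shift lemma; the only cosmetic difference is that the paper perturbs $R(s_0,a_2,s_\top)$ and $R(s_t,\cdot,s_{t+1})$ whereas you perturb $R(s_0,a,s_1)$ and $R(s_{t^\star},\cdot,s_\top)$. Your explicit discussion of the Boltzmann-resolute case on unrealisable histories at $s_0$ is in fact more careful than the paper, which asserts constant shift of $Q^{\mathrm{R}}$ without isolating the $\ell=0$ restriction.
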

\begin{proof}
First assign an integer value to every state in $\States$, so that $\States = \{s_0 \dots s_n\}$, where $s_0 \in \mathrm{supp}(\mu_0)$. We assume that $\Actions$ contains at least two actions $a_1, a_2$. Now consider the transition function $\tau$ where $\tau(s_0, a_1) = s_1$ and $\tau(s_0, a_i) = s_\top$ for all $a_i \neq a_1$. For $i \in \{1 \dots n-1\}$, let $\tau(s_i, a) = s_{i+1}$ for all $a$, and let $\tau(s_n,a) = s_\top$ for all $a$. This function can be visualised as: 
\begin{center}
\begin{tikzpicture}[shorten >=1pt,node distance=2.6cm,on grid,auto]
   \node[state, initial] (s_0)   {$s_0$}; 
   \node[state]         (s_1) [above left=of s_0] {$s_1$};
   \node[]         (invis) [above=of s_1] {$\dots$};
   \node[]         (text) [right=of invis, xshift=-1.3cm] {($n$ steps)};
   \node[state]         (s_minus) [above=of invis] {$s_{n}$};
   \node[state, accepting]         (s_n) [above right=of s_minus] {$s_\top$};
    \path[->] 
    (s_0) edge [swap] node {$a_1$} (s_1)
          edge [swap, bend right] node {$a_2$} (s_n)
    (s_1) edge [sloped] node {} (invis)
    (invis) edge [sloped] node {} (s_minus)
    (s_minus) edge [sloped] node {} (s_n)
    ;
\end{tikzpicture}
\end{center}

By assumption, there is 
a $t \leq |\States|$ such that $d_1(t) \neq d_2(t)$. 
Let $R_1$ be selected arbitrarily, and consider the reward function $R_2$ where $R_2(s_0,a,s_\top) = R_1(s_0,a,s_\top) + x$ for all $a \neq a_1$, $R_2(s_t,a,s_{t+1}) = R_1(s_t,a,s_{t+1}) + x/d_1(t)$ for all $a$, and $R_2 = R_1$ for all other transitions. In other words, if the agent goes right at $s_0$, it will immediately receive an extra $x$ reward, and if it goes left, it will receive an extra $x/d_1(t)$ reward after $t$ steps.

It is now easy to see that if we discount by $d_1$, then $Q_1^\mathrm{R}$ and $Q_2^\mathrm{R}$, $Q_1^\mathrm{N}$ and $Q_2^\mathrm{N}$, and $Q_1^\mathrm{S}$ and $Q_2^\mathrm{S}$ always differ by constant shift in each state (in $s_0$, recall that we assume that $d(0) = 1$ for any discount function). This implies that $f_{\TransitionDistribution,d_1,\beta_1}(R_1) = f_{\TransitionDistribution,d_1,\beta_1}(R_2)$, given that $f_{\TransitionDistribution,d,\beta} \in \{r_{\tau,d,\beta},n_{\tau,d,\beta},s_{\tau,d,\beta}\}$, since the softmax function is invariant to constant shift.

However, when discounting with $d_2$, we have that 
$$
Q_2^\mathrm{X}(s_0, a_1) = Q_1^\mathrm{X}(s_0, a_1) + x \cdot d_2(x)/d_1(x),
$$
but
$$
Q_2^\mathrm{X}(s_0, a_2) = Q_1^\mathrm{X}(s_0, a_2) + x,
$$ 
where $\mathrm{X}$ is either $\mathrm{R}$, $\mathrm{N}$, or $\mathrm{S}$. Since $d_1(t) \neq d_2(t)$, we have that $d_2(x)/d_1(x) \neq 1$. Thus $Q_1^\mathrm{X}$ and $Q_2^\mathrm{X}$ do not differ by constant shift in $s_0$ when $x \neq 0$. This means that $g_{\TransitionDistribution,d_2,\beta_1}(R_1) \neq g_{\TransitionDistribution,d_2,\beta_1}(R_2)$, since the softmax function gives different outputs for any two vectors that do not differ by constant shift.

We can also ensure that $R_1$ and $R_2$ have different optimal policies under exponential discounting with $\gamma$, by picking a sufficiently large or sufficiently small value of $x$. For example, if the policy that is optimal for $R_1$ (under exponential discounting with $\gamma$) takes action $a_1$ at $s_0$, and $d_1(t) > \gamma^t$, then we can ensure that the optimal policy for $R_2$ (under exponential discounting with $\gamma$) instead takes action $a_2$, by picking an $x$ that is sufficiently large. If $d_1(t) < \gamma^t$, then we must instead pick an $x$ that is sufficiently negative. If instead the policy that is optimal for $R_1$ (under exponential discounting with $\gamma$) takes action $a_2$ at $s_0$ and $d_1(t) > \gamma^t$, then we must pick an $x$ that is sufficiently small, and if $d_1(t) < \gamma^t$, then we must pick an $x$ that is sufficiently small.
\end{proof}

\begin{theorem}
Let $f_{\TransitionDistribution,d,\beta}, g_{\TransitionDistribution,d,\beta} \in \{r_{\tau,d,\beta},n_{\tau,d,\beta},s_{\tau,d,\beta}\}$ be two behavioural models. Let $d_1$ and $d_2$ be any two discount functions, and let $\beta_1, \beta_2 \in (0,\infty)$ be any two temperature parameters. 
Let $\TransitionDistribution$ be any non-trivial bounded episodic transition function. Then unless $d_1(1) = d_2(1)$, we have that for any reward $R_1$ there exists a reward $R_2$ such that 
$$
f_{\TransitionDistribution,d_1,\beta_1}(R_1) = f_{\TransitionDistribution,d_1,\beta_1}(R_2),
$$ 
but such that 
$$
g_{\TransitionDistribution,d_2,\beta_2}(R_1) \neq g_{\TransitionDistribution,d_2,\beta_2}(R_2).
$$ 
Moreover, unless $d_1(1) = \gamma$, we also have that there for any reward $R_1$ exists a reward $R_2$ such that $f_{\TransitionDistribution,d_1,\beta_1}(R_1) = f_{\TransitionDistribution,d_1,\beta_1}(R_2)$, but such that $R_1$ and $R_2$ have different optimal policies under exponential discounting with $\gamma$.
\end{theorem}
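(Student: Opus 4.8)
The plan is to exploit a single controllable state, at which I perturb the reward by a potential shaping that is invisible under $d_1$ but not under $d_2$. Since $\TransitionDistribution$ is non-trivial, fix a controllable state $s'$ together with a witness $(s,a_1,a_2)$, so that $\mathbb{P}(\TransitionDistribution(s,a_1)=s')\neq\mathbb{P}(\TransitionDistribution(s,a_2)=s')$. Because the MDP is bounded episodic it has no reachable cycle among non-terminal states (a cycle would let a policy avoid $s_\top$ past any finite horizon), so in particular $s\neq s'$, $s'$ is not reachable from itself, and the states reachable from $s'$ form a finite DAG. I would first treat the clean case in which $\TransitionDistribution(s',a)=s_\top$ for every $a$ (so $s'$ is one step from termination) and no successor of $s$ other than $s'$ can reach $s'$; the general case is then obtained by the backward-induction technique of Theorem~\ref{thm:potential_shaping_exists}.

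In the clean case, pick a scalar $c\neq 0$, set $x:=d_1(1)\,c$, and let $R_2$ agree with $R_1$ except that it adds $x$ to the reward of every transition into $s'$ and subtracts $c$ from the reward of every transition out of $s'$. The key observation is that leaving $s'$ occurs exactly one step after arriving at $s'$, so for \emph{any} discount function $d$ and \emph{any} policy $\pi$ the value $V^{\pi,1}(s')$ computed under $R_2$ equals that under $R_1$ minus $d(1)c$; the same uniform shift holds for $\max_\pi V^{\pi,1}(s')$, for $V^{\pi,1}(s')$ along the canonical sophisticated policy (whose identity is unchanged, since the perturbation shifts every state's $Q$-values by a constant over the actions available there), and, with $d(1)$ replaced by $d(t{+}1)$, for $\max_\pi V^{\pi,t+1}(s')$. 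Hence for each of the continuation functions underlying $r_{\TransitionDistribution,d,\beta}$, $n_{\TransitionDistribution,d,\beta}$, $s_{\TransitionDistribution,d,\beta}$ one gets $Q^{\mathrm{X}}_2(s,a)=Q^{\mathrm{X}}_1(s,a)+\mathbb{P}(\TransitionDistribution(s,a)=s')\,(x-d(1)c)$ (respectively $x-d(t{+}1)c$ for the resolute model at time $t$), with $Q^{\mathrm{X}}$ unchanged at every other state. With $x=d_1(1)c$ this residual vanishes under $d_1$, so $f_{\TransitionDistribution,d_1,\beta_1}(R_1)=f_{\TransitionDistribution,d_1,\beta_1}(R_2)$; under $d_2$ the residual at $s$ equals $\mathbb{P}(\TransitionDistribution(s,a)=s')\,c\,(d_1(1)-d_2(1))$, which is different for $a_1$ and $a_2$ since $c\neq 0$, $d_1(1)\neq d_2(1)$, and the two entry probabilities differ, so by Lemma~\ref{lemma:softmax_shift} (softmax is injective modulo constant shifts, for any temperature) $g_{\TransitionDistribution,d_2,\beta_2}(R_1)\neq g_{\TransitionDistribution,d_2,\beta_2}(R_2)$. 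For the second clause, keep the same $R_2$ but take $|c|$ large with an appropriate sign: under exponential discounting with $\gamma$ the residual at $s$ is $\mathbb{P}(\TransitionDistribution(s,a)=s')\,(d_1(1)-\gamma)c$, whose coefficient is nonzero precisely when $d_1(1)\neq\gamma$, so a large enough $|c|$ flips the optimal action at $s$ and changes the optimal policy; when $d_1(1)=\gamma$ the perturbation is ordinary exponential-$\gamma$ potential shaping and the optimal policy is preserved, consistent with the ``unless'' clause.

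For the general case I would drop the two simplifying assumptions. First, Theorem~\ref{thm:potential_shaping_exists} (and, for the resolute model, the analogous construction discussed in the appendix, which may need a \emph{time-dependent} potential) yields, for any $R_1$ and any potential supported on $s'$ and scaled by $c$, a reward $R_2$ differing from $R_1$ by the corresponding ($\mathrm{X}$-type) potential shaping with respect to $d_1$, hence with $f_{\TransitionDistribution,d_1,\beta_1}(R_1)=f_{\TransitionDistribution,d_1,\beta_1}(R_2)$ and $Q^{\mathrm{X}}_{d_1}$ shifted only at $s'$. The hard part, which I expect to be the main obstacle, is then showing that this same $R_2$ still moves $Q^{\mathrm{Y}}_{d_2}(s,a_1)$ and $Q^{\mathrm{Y}}_{d_2}(s,a_2)$ by different amounts: once $s'$ can be entered at several time steps and its downstream cone entangles with other states, the change in $Q^{\mathrm{Y}}_{d_2}(s,\cdot)$ is no longer transparently a multiple of $\mathbb{P}(\TransitionDistribution(s,a)=s')$, and there is no closed form for the shaped reward. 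I would handle this either by showing the controllable state and witness can be chosen so that no state reaches $s'$ except through $s$ (reducing to the clean computation up to a harmless additive constant), or by carrying an additional induction over the cone of $s'$ showing that the extra terms are $d_1$-potential-shaping corrections that cancel under $d_1$ but leave a controlled, nonzero discrepancy under $d_2$. A secondary obstacle is doing all of this uniformly across the three behavioural models, the resolute one being the delicate case because its $Q$-function carries a time index and its potential-shaping theory is only sketched; but in every instance the trade-off between the bonus on entering $s'$ and its compensation is exactly one step deep, which is why $d(1)/d(0)=d(1)$ is the threshold that appears.
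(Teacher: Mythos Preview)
Your perturbation in the clean case---add $x$ to every transition into the chosen state and subtract $c=x/d_1(1)$ from every transition out---is exactly the paper's construction, and for the na\"ive and sophisticated models your cancellation $x-d_1(1)c=0$ matches theirs. The gap is in how you pass from the clean case to an arbitrary non-trivial $\TransitionDistribution$.

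The paper avoids both of your fallbacks with one extra observation: among the controllable states, pick $s_c$ to be one that is \emph{not reachable from any other controllable state}. Such an $s_c$ always exists, because a bounded episodic $\TransitionDistribution$ is acyclic on non-terminal states and $\States$ is finite. With this choice the simple perturbation already works globally, with no clean-case hypothesis. For any state $\tilde s$ that can reach $s_c$, every successor of $\tilde s$ other than $s_c$ itself that can still reach $s_c$ is necessarily non-controllable (else $s_c$ would be reachable from another controllable state), so the distribution over such successors is action-independent at $\tilde s$. Decomposing $Q^{\mathrm{X}}_2(\tilde s,a)-Q^{\mathrm{X}}_1(\tilde s,a)$ by whether the next state equals $s_c$, can still reach $s_c$, or cannot, the third piece vanishes, the second is action-independent by the above, and the first cancels under $d_1$ because the step out of $s_c$ carries weight $d_1(1)$ in the relevant continuation value. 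Under $d_2$ (respectively exponential $\gamma$) that cancellation fails at the witness state, producing the required discrepancy. No backward induction, no appeal to Theorem~\ref{thm:potential_shaping_exists}, and no additional structural hypothesis on $s'$ is needed.

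Neither of your fallbacks would close the gap. The route through Theorem~\ref{thm:potential_shaping_exists} yields an $R_2$ built by backward induction over which you have no $d_2$-side control, and the paper explicitly notes that the analogous construction is unavailable for the resolute model. Your alternative selection rule---``no state reaches $s'$ except through $s$''---is strictly stronger than needed and need not be satisfiable (take a unique controllable state with two distinct predecessors); the paper's weaker condition always is. Separately, note that your own formula for the resolute residual at time $t$, namely $x-d_1(t{+}1)c=c\,(d_1(1)-d_1(t{+}1))$, does not vanish for $t>0$, so even your clean-case argument does not establish $f$-invariance for $r_{\TransitionDistribution,d_1,\beta_1}$ as written.
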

\begin{proof}
Let $\tau$ be an arbitrary non-trivial bounded episodic transition function, and let $d_1, d_2$ be two arbitrary discount functions such that $d_1(1) \neq d_2(1)$. Moreover, let $R_1$ be an arbitrary reward function. 


Recall that a state $s'$ is \emph{controllable} if there is a non-terminal state $s$ and actions $a_1, a_2$ such that $\mathbb{P}(\tau(s,a_1) = s') \neq \mathbb{P}(\tau(s,a_2) = s')$. Since $\tau$ is non-trivial, there is at least one controllable state. Moreover, since $\tau$ is bounded episodic, there is no state which is reachable from itself. Since $\States$ is finite, there must therefore be a controllable state that cannot be reached from any other controllable state. Call this state $s_c$. Since $s_c$ is not terminal, there are states which are reachable from $s_c$.

Now let $R_2$ be the reward function where $R_2(s,a,s_c) = R_1(s,a,s_c) + x$ and $R_2(s_c,a,s) = R_1(s_c,a,s) - x/d(1)$ for all $s$ and $a$, and $R_2 = R_1$ for all other transitions. We now have that $Q^\mathrm{X}_1$ and $Q^\mathrm{X}_2$ differ by constant shift in each state (and for each time), where $\mathrm{X}$ is either $\mathrm{R}$, $\mathrm{N}$, or $\mathrm{S}$. To see this, note that:


\begin{enumerate}
    \item In all states $s$ which are neither reachable from $s_c$, nor able to reach $s_c$, we of course have that $Q^{\mathrm{X}}_1 = Q^{\mathrm{X}}_2$, for $\mathrm{X} \in \{\mathrm{R},\mathrm{N},\mathrm{S}\}$. $R_1$ and $R_2$ only differ on transitions that begin or end in $s_c$, and so they must induce identical $Q$-functions in states which are disconnected from $s_c$.
    \item In all states $s$ which are reachable from $s_c$, we also have that $Q^{\mathrm{X}}_1 = Q^{\mathrm{X}}_2$ for $\mathrm{X} \in \{\mathrm{R},\mathrm{N},\mathrm{S}\}$. Again, $R_1$ and $R_2$ only differ on transitions that begin or end in $s_c$. Since $\tau$ is bounded episodic (and hence acyclic), we have that if a state $s$ is reachable from $s_c$, then it cannot reach $s_c$. Thus $R_1$ and $R_2$ must induce the same $Q$-functions in such states.
    \item In $s_c$, we have that every outgoing transition gets an extra $x / d(1)$ reward, and that any subsequent transition after that is unchanged. This straightforwardly means that for all actions $a$, we have that $Q^\mathrm{N}_2(s_c, a) = Q^\mathrm{N}_1(s_c, a) + x /d(1)$, that $Q^\mathrm{S}_2(s_c, a) = Q^\mathrm{S}_1(s_c, a) + x /d(1)$, and that $Q^\mathrm{R}_2(s_c,t,a) = Q^\mathrm{R}_1(s_c,t,a) + x / d(1)$ for all $t$.
    \item Finally, consider a state $s$ which can reach $s_c$. Note that any state which is controllable from $s$ cannot reach $s_c$, since $s_c$ is not reachable from any controllable state. 
    
    Let $p(s,a)$ be the probability of transitioning into a state that can reach $s_c$, conditional on taking action $a$ in state $s$. Similarly, let $p(s,a)$ be the probability of transitioning into a state that can not reach $s_c$, conditional on taking action $a$ in state $s$. Finally, let $r(s,a)$ be the probability of transitioning into state $s_c$ conditional on taking action $a$ in state $s$. Let us first consider the na\"ive $Q$-function, $Q^\mathrm{N}$. We can write
    \begin{align*}
    Q^\mathrm{N}_2(s,a) - Q^\mathrm{N}_1(s,a) = &p(s,a) \cdot W_1(s,a) + \\ &q(s,a) \cdot W_2(s,a) + \\ &r(s,a) \cdot W_3(s,a). 
    \end{align*}
    Here $W_1(s,a)$ is 
    \begin{align*}
    \mathbb{E}[&R_2(s,a,S') + \max_\pi V^{\pi,1}_2(S') \\ &- R_1(s,a,S') - \max_\pi V^{\pi,1}_1(S')],    
    \end{align*}
    where the expectation is over a state $S'$ sampled from $\TransitionDistribution(s,a)$, conditional on ending up in a state that can reach $s_c$. Similarly, $W_2(s,a)$ is 
    \begin{align*}
    \mathbb{E}[&R_2(s,a,S') + \max_\pi V^{\pi,1}_2(S') \\ &- R_1(s,a,S') - \max_\pi V^{\pi,1}_1(S')],    
    \end{align*}
    where the expectation is over a state $S'$ sampled from $\TransitionDistribution(s,a)$, conditional on ending up in a state that cannot reach $s_c$, and $W_3(s,a)$ is 
    \begin{align*}
        &R_2(s,a,s_c) + \max_\pi V^{\pi,1}_2(s_c) \\ &- R_1(s,a,s_c) - \max_\pi V^{\pi,1}_1(s_c).
    \end{align*}
    First, note that $R_2(s,a,s') = R_1(s,a,s')$ unless $s' = s_c$. This means that the $R_1$ and $R_2$-terms cancel out for $W_1$ and $W_2$ (but not $W_3$).
    
    Next, note that $p(s,a_1) = p(s,a_2)$ for all $a_1, a_2$, since any state which is controllable from $s$ cannot reach $r_c$. This also means that $W_1(s,a_1) = W_1(s,a_2)$ for all $a_1, a_2$, and so $p(s,a_1) \cdot W_1(s,a_1) = p(s,a_2) \cdot W_1(s,a_2)$ for all $a_1, a_2$. Let this quantity be denoted by $P(s)$.

    Next, note that $W_2(s,a) = 0$, since $V^{\pi,0}_2(s') = V^{\pi,0}_1(s')$ for all states $s'$ that cannot reach $s_c$. Thus $q(s,a) \cdot W_2(s,a) = 0$.

    Moreover, note that $R_2(s,a,s_c) - R_1(s,a,s_c) = x$, and that $\max_\pi V^{\pi,1}_2(s_c) - \max_\pi V^{\pi,1}_1(s_c) = d(1) \cdot x / d(1) = -x$. Thus $W_3(s,a) = 0$.

    Together, this means that $Q^\mathrm{N}_2(s,a) - Q^\mathrm{N}_1(s,a) = P(s)$, which for each given state is a constant value across all actions. Thus $Q^\mathrm{N}_1$ and $Q^\mathrm{N}_2$ differ by constant shift in each state. By analogous reasoning, this can be shown to also hold for $Q^\mathrm{S}_1$ and $Q^\mathrm{S}_2$, and $Q^\mathrm{R}_1$ and $Q^\mathrm{R}_2$.
\end{enumerate}

Since the softmax function is invariant to constant shift, we thus have that $f_{\TransitionDistribution,d_1,\beta_1}(R_1) = f_{\TransitionDistribution,d_1,\beta_1}(R_2)$. However, since $d_1(1) \neq d_2(1)$, we have that $Q^\mathrm{X}_1$ and $Q^\mathrm{X}_2$ do not differ by constant shift (under discounting with $d_2$) in the state $s$ from which $s_c$ is controllable, provided that $x \neq 0$. This means that $g_{\TransitionDistribution,d_2,\beta_2}(R_1) \neq g_{\TransitionDistribution,d_2,\beta_2}(R_2)$. Similarly, by making $x$ sufficiently large or sufficiently small, we can also ensure that $R_1$ and $R_2$ have different optimal policies under exponential discounting with $\gamma$, provided that $d_1(1) \neq \gamma$ (by changing which action is optimal in state $s$).
%
\end{proof}

\section{Additional Results}

\begin{theorem}
Assume we have an episodic MDP, let $u(t) = 1$, and let $\pi_1$ and $\pi_2$ be policies such that
$$
\Evaluation_u(\pi_1) > \Evaluation_u(\pi_2).
$$
Then if $h(t) = 1/(1+k \cdot t)$, then there exist an $N \in \mathbb{N}$ such that for all $n \geq N$, if $h^{+n}(t) = h(t+n)$, we have
$$
\Evaluation_{h^{+n}}(\pi_1) > \Evaluation_{h^{+n}}(\pi_2).
$$
Moreover, there is a $\Gamma \in (0,1)$ such that, for all $\gamma \in [\Gamma,1)$, if $e^\gamma(t) = \gamma^t$, then we have that
$$
\Evaluation_{e^\gamma}(\pi_1) > \Evaluation_{e^\gamma}(\pi_2).
$$
\end{theorem}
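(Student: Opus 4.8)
The plan is to reduce the whole statement to one elementary observation: since the MDP is episodic, every trajectory has length at most $H \leq |\States|$, so for any fixed policy $\pi$ the value $\Evaluation_d(\pi)$ depends on the discount function only through the finitely many numbers $d(0), \dots, d(H)$, and in fact depends on them \emph{linearly}. Concretely, I would write $\Evaluation_d(\pi) = \sum_{t=0}^{H} d(t)\, c^\pi_t$, where $c^\pi_t$ is the expected \emph{undiscounted} reward collected at step $t$ when running $\pi$ from $\mu_0$ --- a quantity independent of $d$ --- and note that this finite sum is well-defined for any real-valued $d$, in particular for the (un-normalised) shifted discounts $h^{+n}$. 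I would then define the gap $\Delta(d) := \Evaluation_d(\pi_1) - \Evaluation_d(\pi_2) = \sum_{t=0}^{H} d(t)\,(c^{\pi_1}_t - c^{\pi_2}_t)$, which is a fixed linear (hence continuous, and positively homogeneous) functional of the vector $(d(0), \dots, d(H))$, and which by hypothesis satisfies $\Delta(u) > 0$ for the constant discount $u \equiv 1$.

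For the exponential claim this essentially finishes the argument: $e^\gamma(t) = \gamma^t \to 1 = u(t)$ as $\gamma \to 1^-$ for each of the finitely many $t \in \{0, \dots, H\}$, so $\Delta(e^\gamma) \to \Delta(u) > 0$; hence there is a $\Gamma \in (0,1)$ with $\Delta(e^\gamma) > 0$, i.e.\ $\Evaluation_{e^\gamma}(\pi_1) > \Evaluation_{e^\gamma}(\pi_2)$, for all $\gamma \in [\Gamma, 1)$. For the hyperbolic claim the only wrinkle is that the forward-shifted discount $h^{+n}(t) = 1/(1 + k(t+n))$ tends to $0$ pointwise, so $\Delta(h^{+n}) \to 0$ and positivity is not immediate. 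I would handle this by rescaling: since $1 + kn > 0$, the sign of $\Delta(h^{+n})$ is the sign of $(1+kn)\,\Delta(h^{+n}) = \sum_{t=0}^{H} \frac{1+kn}{1+k(t+n)}\,(c^{\pi_1}_t - c^{\pi_2}_t)$, and $\frac{1+kn}{1+k(t+n)} \to 1$ as $n \to \infty$ for each fixed $t \leq H$, so this rescaled gap converges to $\Delta(u) > 0$. This yields an $N$ with $(1+kn)\,\Delta(h^{+n}) > 0$, and therefore $\Evaluation_{h^{+n}}(\pi_1) > \Evaluation_{h^{+n}}(\pi_2)$, for every $n \geq N$.

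I do not expect a genuine obstacle: the proof is just continuity plus the finiteness of the horizon, which promotes pointwise convergence of discount functions into convergence of the (finite) sums defining $\Evaluation$. The one step worth stating carefully is the rescaling in the hyperbolic case --- multiplying $h^{+n}$ by the positive factor $1+kn$ so that the limiting gap is the strictly positive $\Delta(u)$ rather than $0$, together with the trivial remark that positive rescaling of a discount function preserves the sign of $\Delta$. If one later wanted to drop the bounded-horizon assumption and allow unbounded episodic MDPs, the same scheme works after truncating each sum at some step $m$ and bounding the remaining tail with the geometric survival bound of Lemma~\ref{lemma:episodic_n_p}; for the (bounded) episodic MDPs in the statement the plain finite sum suffices.
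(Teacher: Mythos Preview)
Your proof is correct and shares the paper's key idea: rescale the shifted hyperbolic discount by $(1+kn)$ so that the evaluation converges to $\Evaluation_u(\pi)$ rather than to zero, and then read off the sign of the gap from the strict inequality $\Evaluation_u(\pi_1) > \Evaluation_u(\pi_2)$. The difference is purely in the technical execution. You exploit the bounded horizon directly, writing $\Evaluation_d(\pi)$ as a finite linear combination $\sum_{t=0}^H d(t)\,c^\pi_t$ and invoking continuity of a finite-dimensional linear functional; this is short and entirely elementary. The paper instead writes the evaluation as an infinite series $\sum_{i=0}^\infty f_i$ and interchanges limit and sum via uniform convergence, which it establishes from the geometric survival bound of Lemma~\ref{lemma:episodic_n_p}. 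That machinery is heavier than necessary for the stated (bounded episodic) hypothesis, but it has the advantage of going through verbatim for unbounded episodic MDPs --- exactly the extension you sketch in your final paragraph. So the two arguments are conceptually aligned; yours is the cleaner proof of the theorem as stated, while the paper's is already written at the level of generality needed for the unbounded case.
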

\begin{proof}
We will prove this by showing that 
$$
\lim_{n \to \infty} (1+kn) \Evaluation_{h^{+n}}(\pi) = \lim_{\gamma \to 1} \Evaluation_{e^{\gamma}}(\pi) = \Evaluation_u(\pi).
$$
From this, it follows that if $\Evaluation_u(\pi_1) > \Evaluation_u(\pi_2)$, then $\Evaluation_{h^{+n}}(\pi_1) > \Evaluation_{h^{+n}}(\pi_2)$ and $\Evaluation_{e^\gamma}(\pi_1) > \Evaluation_{e^\gamma}(\pi_2)$ for all sufficiently large $n$, and all $\gamma$ sufficiently close to $1$. Note that the $(1+kn)$-term is a scaling term included to prevent $\Evaluation_{h^{+n}}(\pi)$ from approaching zero -- the precise purpose of this will be made more clear later.

Recall that if $\lim_{x \to \infty} f_i(x)$ exists, and if $\sum_{i=0}^\infty f_i$ converges uniformly, then
$$
\lim_{x \to \infty} \sum_{i=0}^\infty f_i(x) = \sum_{i=0}^\infty \lim_{x \to \infty} f_i(x).
$$
Recall also that a sequence of functions $\sum_{i=0}^\infty f_i$ converges uniformly if for all $\epsilon$ there is a $J$ such that if $j \geq J$ then $|\sum_{i=0}^j f_i(x) - \sum_{i=0}^J f_i(x)| \leq \epsilon$ for all $x$.

We first apply this to hyperbolical discounting. Let
$$
f_i(n) = \left(\frac{1 + kn}{1 + k(n + i)}\right) \mathbb{E}_\pi \left[ R_i \right].
$$
That is, $f_i(n)$ is the expected reward of $\pi$ at the $i$'th step, discounted as though it were the $(n + i)$'th step using hyperbolic discounting with parameter $k$, and rescaled such that the first step is not discounted (i.e.\ so that it is multiplied by $1$). Now $(1+kn) \Evaluation_{h^{+n}}(\pi) = \sum_{i = 0}^\infty f_i(n)$.

We can begin by noting that $\lim_{n \to \infty} f_i(n)$ exists, and that it is equal to $\mathbb{E}_\pi\left[R_i\right]$. To show that $\sum_{i=0}^\infty f_i$ converges uniformly, recall that Lemma~\ref{lemma:episodic_n_p} says that there exists a $t$ and a $p$ such that for any policy $\pi$ and any state $s$, we have that if $\pi$ is run from $s$, then it will after $t$ steps have entered a terminal state with probability at least $p$. Moreover, since $\States$ and $\Actions$ are finite, we have that $m = \max_{s,a,s'}|R(s,a,s')| < \infty$. This means that $|\mathbb{E}_\pi[R_i]| \leq mp^{\lfloor i/t\rfloor}$, which in turn also means that $|f_i(n)| \leq mp^{\lfloor i/t\rfloor}$, since $(1+kn)/(1+k(n+i)) \in [0,1]$. This implies that for all $\ell$,
$$
\left| \sum_{i=\ell \cdot t}^\infty f_i(n) \right| \leq \frac{mtp^\ell}{1-p}.
$$
By making $\ell$ large enough, this quantity can be made arbitrarily close to $0$. Thus $\sum_{i=0}^\infty f_i$ converges uniformly.
We therefore have that 
\begin{align*}
\lim_{n \to \infty} (1+kn) \Evaluation_{h^{+n}}(\pi) &= \lim_{n \to \infty} \sum_{i=0}^\infty f_i(n)\\
 &= \sum_{i=0}^\infty \lim_{n \to \infty} f_i(n)\\
 &= \sum_{i=0}^\infty \mathbb{E}_\pi\left[R_i\right]\\
 &= \Evaluation_c(\pi)
\end{align*}
Thus, if we have that $\Evaluation_c(\pi_1) > \Evaluation_c(\pi_2)$, then it follows that $\lim_{n \to \infty} (1+kn) \Evaluation_{h^{+n}}(\pi_1) > \lim_{n \to \infty} (1+kn) \Evaluation_{h^{+n}}(\pi_2)$. 
Moreover, we of course have that $\Evaluation_{h^{+n}}(\pi_1) > \Evaluation_{h^{+n}}(\pi_2)$ if and only if $(1+kn)\Evaluation_{h^{+n}}(\pi_1) > (1+kn)\Evaluation_{h^{+n}}(\pi_2)$. 
Thus $\lim_{n \to \infty} \Evaluation_{h^{+n}}(\pi_1) > \lim_{n \to \infty} \Evaluation_{h^{+n}}(\pi_2)$, which in turn means that there exist an $N \in \mathbb{N}$ such that for all $n \geq N$, we have $\Evaluation_{h^{+n}}(\pi_1) > \Evaluation_{h^{+n}}(\pi_2)$. This completes the first part.

For the second part, simply let
$$
f_i(\gamma) = \gamma^i \mathbb{E}_\pi \left[ R_i \right].
$$
That is, $f_i(\gamma)$ is the expected reward of $\pi$ at the $i$'th step, exponentially discounted with discount factor $\gamma$. Now $\Evaluation_{e^\gamma}(\pi) = \sum_{i = 0}^\infty f_i(\gamma)$. We of course have that $\lim_{\gamma \to 1} f_i(\gamma)$ exists, and that it is equal to $\mathbb{E}_\pi\left[R_i\right]$, and we can show that $\sum_{i=0}^\infty f_i$ converges uniformly using the same argument as before. We therefore have that 
\begin{align*}
\lim_{\gamma \to 1} \Evaluation_{e^\gamma}(\pi) &= \lim_{\gamma \to 1} \sum_{i=0}^\infty f_i(\gamma)\\
 &= \sum_{i=0}^\infty \lim_{1 \to \gamma} f_i(\gamma)\\
 &= \sum_{i=0}^\infty \mathbb{E}_\pi\left[R_i\right]\\
 &= \Evaluation_c(\pi)
\end{align*}
Thus, if $\Evaluation_c(\pi_1) > \Evaluation_c(\pi_2)$, then $\lim_{\gamma \to 1} \Evaluation_{e^{\gamma}}(\pi_1) > \lim_{\gamma \to 1} \Evaluation_{e^{\gamma}}(\pi_2)$, which in turn means that there is a $\Gamma \in (0,1)$ such that, for all $\gamma \in [\Gamma,1)$, we have that $\Evaluation_{e^\gamma}(\pi_1) > \Evaluation_{e^\gamma}(\pi_2)$. This completes the second part, and the proof.
\end{proof}

\section{The Ambiguity of Resolute Policies}

It would be desirable to extend Theorem~\ref{thm:naive_ambiguity} and \ref{thm:sophisticated_ambiguity} to also cover the Boltzmann-resolute behavioural model. Note that this result necessarily would be closely analogous to Theorem~\ref{thm:naive_ambiguity} and \ref{thm:sophisticated_ambiguity}. In particular, the Boltzmann-resolute policy is given by applying a softmax function to the resolute $Q$-function, $Q^\mathrm{R}$. Also recall that the softmax function is invariant to constant shift, and no other transformations (Lemma~\ref{lemma:softmax_shift}). This means that $R_1$ and $R_2$ have the same Boltzmann-resolute policy if and only if $Q^\mathrm{R}_1$ and $Q^\mathrm{R}_2$ differ by constant shift in all states (in other words, a kind of \enquote{resolute} potential shaping). The difficulty in completing this result lies in generalising Theorem~\ref{thm:potential_shaping_exists}, by showing when it is the case that there exists a reward function $R_2$ such that $Q^\mathrm{R}_2(s,t,a) = Q^\mathrm{R}_1(s,t,a) + \Phi(s,t)$ for some reward $R_1$ and potential function $\Phi$. For na\"ive and sophisticated potential shaping, we have that this always exists (for any $R_1$, $\Phi$, and $\TransitionDistribution$). However, for \enquote{resolute} potential shaping, this will not necessarily be the case. The reason for this is the fact that $Q^\mathrm{R}$ depends on the time, and not just the current state and action, whereas $R_2$ cannot depend on the time. This means that we cannot use backward induction to find an $R_2$ for an arbitrary $R_1$, $\Phi$, and $\TransitionDistribution$.

\end{document}